\documentclass{article} 
\usepackage{iclr2026_conference,times}

\usepackage{amsmath,amsfonts,bm}

\def\eqref#1{equation~\ref{#1}}

\def\1{\bm{1}}

\DeclareMathAlphabet{\mathsfit}{\encodingdefault}{\sfdefault}{m}{sl}
\SetMathAlphabet{\mathsfit}{bold}{\encodingdefault}{\sfdefault}{bx}{n}

\newcommand{\E}{\mathbb{E}}

\newcommand{\R}{\mathbb{R}}

\newcommand{\normltwo}{L^2}

\usepackage{amsmath,amssymb,amsthm,mathtools,bm}
\usepackage{braket}
\usepackage{graphicx}
\usepackage{float}
\usepackage{booktabs}
\usepackage{multirow}
\usepackage{hyperref}
\usepackage{url}
\usepackage{enumitem}
\usepackage{subcaption}
\usepackage{comment}
\setlist{nosep}

\theoremstyle{plain}
\newtheorem{theorem}{Theorem}
\newtheorem{lemma}{Lemma}
\newtheorem{proposition}{Proposition}
\newtheorem{corollary}{Corollary}

\theoremstyle{definition}
\newtheorem{definition}{Definition}

\theoremstyle{remark}
\newtheorem{remark}{Remark}

\newcommand{\norm}[1]{\left\lVert #1 \right\rVert}
\newcommand{\ip}[2]{\left\langle #1,\,#2 \right\rangle}

\newcommand{\Ylm}[2]{Y_{#1}^{#2}}

\newcommand{\dd}{\mathrm{d}}
\newcommand{\SO}{\mathrm{SO}(3)}

\newcommand{\JJ}{\hat{\mathbf J}}
\newcommand{\Jone}{\hat{\mathbf J}_1}
\newcommand{\Jtwo}{\hat{\mathbf J}_2}

\newcommand{\unitvec}{\hat{r}}
\newcommand{\ours}{OsciFormer}

\newcommand{\GL}{\mathrm{GL}}

\providecommand{\vect}[1]{\begin{bmatrix}#1\end{bmatrix}}
\providecommand{\dd}{\mathrm{d}}

\newif\ifdraft
\drafttrue    

\newwrite\editnotefile
\immediate\openout\editnotefile=editnotes.txt

\ifdraft
  
\else
  
\fi

\title{Oscillators are all you need: Irregular Time Series Modelling via Damped Harmonic Oscillators with Closed-form Solutions}

\author{
Yashas Shende\textsuperscript{*} \\
Department of Physics \\
Ashoka University \\
Haryana, India (HR -- 131029) \\
\texttt{yashas.shende\_ug25@ashoka.edu.in}
\And
Aritra Das\textsuperscript{*} \\
Department of Computer Science \\
Ashoka University \\
Haryana, India (HR -- 131029) \\
\texttt{aritra.das@ashoka.edu.in}\phantom{blah}\\
\And
Reva Laxmi Chauhan \\
Department of Computer Science \\
Ashoka University \\
Haryana, India (HR -- 131029) \\
\texttt{reva.chauhan1@gmail.com}
\And
Arghya Pathak \\
Department of Computer Science \\
Ashoka University \\
Haryana, India (HR -- 131029) \\
\texttt{arghya.pathak@ashoka.edu.in}
\And
Debayan Gupta \\
Department of Computer Science \\
Ashoka University \\
Haryana, India (HR -- 131029) \\
\texttt{debayan.gupta@ashoka.edu.in}
}

\footnotetext{*\;Joint first authors.}

\iclrfinalcopy 

\begin{document}

\maketitle

\begin{abstract}
Transformers excel at time series modelling through attention mechanisms that capture long-term temporal patterns. However, they assume uniform time intervals and therefore struggle with irregular time series. Neural Ordinary Differential Equations (NODEs) effectively handle irregular time series by modelling hidden states as continuously evolving trajectories. ContiFormers \citep{contiformer} combine NODEs with Transformers, but inherit the computational bottleneck of the former by using heavy numerical solvers. This bottleneck can be removed by using a closed-form solution for the given dynamical system - but this is known to be intractable in general! We obviate this by replacing NODEs with a novel linear damped harmonic oscillator analogy - which has a known closed-form solution. We model keys and values as damped, driven oscillators and expand the query in a sinusoidal basis up to a suitable number of modes. This analogy naturally captures the query-key coupling that is fundamental to any transformer architecture by modelling attention as a resonance phenomenon. Our closed-form solution eliminates the computational overhead of numerical ODE solvers while preserving expressivity.  We prove that this oscillator-based parameterisation maintains the universal approximation property of continuous-time attention; specifically, any discrete attention matrix realisable by ContiFormer's continuous keys can be approximated arbitrarily well by our fixed oscillator modes. Our approach delivers both theoretical guarantees and scalability, achieving state-of-the-art performance on irregular time series benchmarks while being orders of magnitude faster.\\
\textbf{Acknowledgement:} This work was done in collaboration with Dirac Labs\footnote{\url{https://diraclabs.com/}}.

\end{abstract}
\newpage

\section{Introduction}\label{sec:introduction}

Transformers are widely used for modelling time series data \citep{zeng2022transformerseffectivetimeseries}. However, they assume uniform sampling \citep{zeng2022transformerseffectivetimeseries}, whereas many real-world datasets, such as finance, astronomy, healthcare, and magnetic navigation, are often based on irregular time series \citep{rubanova2019latentodesirregularlysampledtime}. This data exhibits continuous behaviour with intricate relationships across continuously evolving observations \citep{pmlr-v56-Lipton16}. Dividing the timeline into intervals of equal size can hamper the continuity of data. Neural Ordinary Differential Equations (NODEs) \citep{chen2019neuralordinarydifferentialequations} address irregular time series by abandoning the fixed-layer stack and instead letting a neural network dictate how the hidden state moves through time. This keeps the representation on the exact observation times and preserves the natural topology of the input space \citep{dupont2019augmentedneuralodes}. The bottleneck of using NODEs is the high computational cost due to the use of numerical solvers \citep{oh2025comprehensivereviewneuraldifferential}. While there have been closed-form solutions for continuous RNNs \citep{Hasani_2022} that have addressed computational bottlenecks in continuous-time RNNs, these approaches still fall short of the efficiency that attention mechanisms provide for capturing both long-range dependencies \citep{niu2024attentionrobustrepresentationtime}.

This challenge of finding a closed-form solution for the ContiFormer motivated us to explore neural networks through the lens of physical systems \citep{hopfield-neural-networks-and-1982}, where efficient solutions can often emerge from exploiting underlying physical principles. Many neural architectures are inherently based on physical systems -- Boltzmann Machines and Hopfield Networks are derived from statistical mechanics \citep{smart2021mappinghopfieldnetworksrestricted}. In fact, training of neural networks can be recast as a control problem where Hamiltonian dynamics emerge from the Pontryagin maximum principle; transformers have been modelled as interacting particle systems \citep{Evens_2021}.

Instead of trying to find analytical solutions to complex differential equations, which is intractable in general, we model the dynamics of the ContiFormer architecture using forced damped harmonic oscillators \citep{Flores_Hidalgo_2011} because these systems provide closed-form solutions \citep{Dutta_2020}. Furthermore, oscillators are a rich system which can be used to model dynamical systems \citep{Herrero_2012} -- they have been used to solve Boolean SAT problems \citep{bashar2022formulatingoscillatorinspireddynamicalsystems}, and have also been the inspiration for neural networks \citep{rohan2024deeposcillatoryneuralnetwork} as well as state-of-the-art state-space models \citep{rusch2025oscillatorystatespacemodels}.

We model attention as the resonance behaviour of a forced harmonic oscillator, where query-key similarity creates high attention when frequencies align and low attention when they are misaligned. This mapping works because attention in ContiFormer is fundamentally a time-windowed inner product between query and key trajectories. When we model keys using a damped oscillator, the subsequent integral becomes a resonance detector that measures spectral overlap weighted by the oscillator's transfer function $H(\omega)=\beta/(\omega_i^{2}-\omega^{2}+2i\gamma_i\omega)$. 

Overall, our work makes the following main contributions:

Firstly, we formulate a novel linear damped, driven harmonic oscillator analogy (with a closed-form solution) to replace the Neural ODE of the original ContiFormer. This helps us surmount the computational overhead of numerical solvers. We call our architecture ``\ours''.

Secondly, we demonstrate that our Harmonic Oscillators can universally approximate any discrete attention matrix realisable by ContiFormer’s continuous keys thus maintaining the expressivity of original architecture. In fact, we show that any continuous query function and any collection of continuous key functions defined on compact intervals, can be approximated arbitrarily well using a shared bank of harmonic oscillators with different initial conditions. 

Thirdly, we discuss how the oscillator-based modeling would preserve equivariance properties of physical systems, which can be useful in spatiotemporal applications such as weather modelling. A detailed description of E(3)-equivariance is given in appendix~\ref{app-sec:e3}.

Finally, we provide the following detailed results: On event prediction, \ours~matches ContiFormer across six datasets in both accuracy and log-likelihood. On long-context UCR tasks it achieves top average accuracy (64.5\%) with large margins on MI (91.8 ± 0.2), and on the clinical HR benchmark it obtains the lowest RMSE (2.56 ± 0.18) while ContiFormer runs out of memory. On synthetic irregular sequences, \ours~reaches 99.83 ± 0.32 accuracy with the fastest per-epoch time (0.56 min) among compared models.


Code: \url{https://anonymous.4open.science/anonymize/contiformer-2-C8EB}\\
Note: We have used LLMs to help reformat equations and text for \LaTeX.

\section{Preliminaries}\label{sec:prelims}

Consider an irregular time series $\Gamma = \{(X_i, t_i)\}_{i=1}^N$ with ordered sampling times $0 \leq t_1 < t_2 < \cdots < t_N \leq T$, which represents observations from an underlying continuous-time process. This time series arises from sampling a continuous-time path $X \in \mathcal{C}(\mathbb{R}_+; \mathbb{R}^d)$, where $\mathcal{C}(\mathbb{R}_+; \mathbb{R}^d) = \{g: \mathbb{R}_+ \to \mathbb{R}^d \mid g \text{ continuous}\}$ denotes the space of continuous functions mapping non-negative reals to $d$-dimensional vectors. \citep{schirmer2022modelingirregulartimeseries}

To model this using a standard Transformer \citep{attentionisallyouneed}, let $Q=[Q_1;\ldots;Q_N]$, $K=[K_1;\ldots;K_N]$, $V=[V_1;\ldots;V_N]$ denote the query, key, and value embeddings in the Transformer. However, simply dividing the time steps into equally sized intervals can damage the continuity of the data which is necessary for irregular time series modelling. \noindent To overcome the loss of temporal continuity caused by uniform time discretisation, ContiFormer \citep{contiformer} lets every observation $(X_i,t_i)$ initiate a continuous key/value trajectory governed by a NODE.

\begin{equation}
\label{eq:kv}
\begin{aligned}
\mathbf{k}_i(t_i) &= \mathbf{K}_i, 
&\qquad 
\mathbf{k}_i(t) &= \mathbf{k}_i(t_i) + \int_{t_i}^{t} f\!\big(\tau,\mathbf{k}_i(\tau);\boldsymbol{\theta}_k\big)\, d\tau,\\[2pt]
\mathbf{v}_i(t_i) &= \mathbf{V}_i, 
&\qquad 
\mathbf{v}_i(t) &= \mathbf{v}_i(t_i) + \int_{t_i}^{t} f\!\big(\tau,\mathbf{v}_i(\tau);\boldsymbol{\theta}_v\big)\, d\tau .
\end{aligned}
\end{equation}

Subsequently, the discrete self-attention computed via the query–key dot-product is extended to its continuous-time counterpart by integrating the time-varying query and key trajectories: $\alpha_{i}(t) = \frac{1}{\,t-t_i\,}\int_{t_i}^{t}q(\tau)\,k_i(\tau)^{\top}\,d\tau$.

Herein, each layer computes attention between \emph{all} $N$ queries and $N$ keys.  
For each of the $N^{2}$ pairs, the integral is approximated with a numerical solver like RK4, where each step evaluates two $d$-dimensional NODE vector fields, giving an $\mathcal{O}(d^{2})$ cost per step.  
Thus one layer runs in $T_{\text{layer}}=\mathcal{O}(N^{2}Sd^{2})$.

\section{Harmonic Oscillator based modelling}\label{sec:modelling}

Due to page limits, we provide our detailed derivation and model in appendix~\ref{app-sec:modelling}. What follows here is a brief sketch.

We model the NODEs that govern keys and values in ContiFormer as \emph{linear damped driven harmonic oscillators}.  Keys are the solution of $\ddot{k}(t)+2\gamma\dot{k}(t)+\omega^{2}k(t)=F^{k}(t)$ where $\gamma\ge 0$ is the learnable damping coefficient, $\omega>0$ is the learnable natural frequency, and $F^{k}(t)$ is the driving force. Likewise, values obey the same structure: $\ddot{v}(t)+2\gamma_{v}\dot{v}(t)+\omega_{v}^{2}v(t)=F^{v}(t)$ with independent learnable parameters $\gamma_{v},\omega_{v}$ and value-intrinsic drive $F^{v}(t)$.

Our damped driven oscillators are governed by the second‑order ODE $\ddot{x} + 2\gamma \dot{x} + \omega^{2}x \;=\; F(t)$.

We first convert this into a first‑order ODE like the ones governing the keys and values; to do this, we introduce the augmented state vector $z = \begin{bmatrix} x \\ p \end{bmatrix}, p=\dfrac{dx}{dt}$
and then write the second‑order ODE in matrix form as
\begin{equation}
  \frac{dz}{dt} \;=\; 
  \underbrace{\begin{bmatrix} 0 & 1 \\[2pt] -\omega^2 & -2\gamma \end{bmatrix}}_{A} z
  \;+\;
  \underbrace{\begin{bmatrix} 0 \\[2pt] F(t) \end{bmatrix}}_{B(t)} .
\end{equation}

We derive the general solution for any $t_0$, $z(t) \;=\; e^{A(t-t_0)}z(t_0) \;+\; \int_{t_0}^{t} e^{A(t-s)} B(s)\,ds$ with the first term $z_h(t)$ (homogeneous) and the second term $z_p(t)$ (particular). We subsequently handle $z_h(t)$ and $z_p(t)$ separately. 

We first derive our homogeneous solution for $z_h(t)=e^{At}z_0$ by cases. Consider three cases: (1) Underdamped, $\gamma^2 < \omega^2 \quad(\gamma<\omega)$; (2) Critically damped:$\gamma^2=\omega^2 \quad(\gamma=\omega)$; and, (3) Overdamped: $\gamma^2>\omega^2 \quad(\gamma>\omega)$. This derivation is rather involved; we provide the details in appendix~\ref{sec:homogeneous}.

We handle the particular solution $z_p(t) =\int_{t_0}^{t} e^{A(t-s)} B(s)\,ds$ similarly (appendix~\ref{subsec:particular}), and then provide a steady state solution for the damped, driven oscillator (appendix~\ref{subsec:steady}).

\textbf{Query:} We expand the interpolation function in the oscillator basis up to a suitable number of modes and obtain the coefficients $A_k$, $B_k$ by a least-squares fit. This circumvents the absence of a closed-form solution for the integral of the original cubic spline. $q(t)=\sum_{k=1}^{N}\Bigl(A_k\cos(\omega_k t)+B_k\sin(\omega_k t)\Bigr).$

\textbf{Attention integral:} The complete derivation is available in appendix~\ref{app-subsec:attention-integral}. We compute the averaged attention coefficient $\alpha_i(t)\;=\;\frac{1}{\Delta}\int_{t_i}^{t} \langle q(\tau), k_i(\tau)\rangle\,d\tau,\Delta:=t-t_i>0$ when the (vector) key coordinates obey a \emph{driven} damped oscillator, anchored at $t_i$ with zero particular state. The total key is $k_i=k_{i,\mathrm{hom}}+k_{i,\mathrm{part}}+c_i$, where the homogeneous part $k_{i,\mathrm{hom}}$ was derived in section \ref{sec:homogeneous}, and here we add the driven part $k_{i,\mathrm{part}}$. We then derive the steady-state solution for the driven oscillator, for underdamped, critical, and overdamped driven keys, combining the steady-state and transient contributions to find the driven contribution to the averaged attention (equation~\ref{eq:alpha-driven-underdamped}) and the complete logit.

\paragraph{Averaged attention: decomposition.}
Using \eqref{eq:q-rotated}, \eqref{eq:xss-rotated}, and \eqref{eq:kpart-complete} with $s\in[0,\Delta]$:
\begin{align}
\int_{t_i}^{t}\!\langle q(\tau),k_{i,\mathrm{part}}(\tau)\rangle\,d\tau
=&\underbrace{\int_{0}^{\Delta}\!\langle q(t_i+s),x_{\mathrm{ss},i}(t_i+s)\rangle\,ds}_{\mathcal{I}^{\mathrm{(ss)}}_i}
+\nonumber\\
&\underbrace{\int_{0}^{\Delta}\! e^{-\gamma s}\,\langle q(t_i+s),E_i\cos(\omega_d s)+F_i\sin(\omega_d s)\rangle\,ds}_{\mathcal{I}^{\mathrm{(tr)}}_i}.
\label{main-eq:I-decomposition}
\end{align}

\paragraph{Steady-state contribution $\mathcal{I}^{\mathrm{(ss)}}_i$:} Using the undamped kernels \eqref{eq:Icc-undamped}--\eqref{eq:Ics-undamped}:
\begin{align}
\mathcal{I}^{\mathrm{(ss)}}_i
&=\sum_{j=1}^{J}\sum_{m=1}^{M_f}\Big[
\langle\tilde A_j,\widehat C_{i,m}\rangle\,I_{cc}(\Delta;0,\omega_j,\varpi_m)
+\langle\tilde A_j,\widehat D_{i,m}\rangle\,I_{cs}(\Delta;0,\omega_j,\varpi_m)\nonumber\\
&\hspace{3.3cm}
+\langle\tilde B_j,\widehat C_{i,m}\rangle\,I_{sc}(\Delta;0,\omega_j,\varpi_m)
+\langle\tilde B_j,\widehat D_{i,m}\rangle\,I_{ss}(\Delta;0,\omega_j,\varpi_m)
\Big].
\label{eq:main-I-ss-final}
\end{align}

\paragraph{Transient contribution $\mathcal{I}^{\mathrm{(tr)}}_i$.}
Using the damped kernels \eqref{eq:Icc}--\eqref{eq:Ics} with $\lambda_1\in\{\omega_d\}$ and $\lambda_2\in\{\omega_j\}$:
\begin{align}
\mathcal{I}^{\mathrm{(tr)}}_i
&=\sum_{j=1}^{J}\Big[
\langle E_i,\tilde A_j\rangle\,I_{cc}(\Delta;\gamma,\omega_d,\omega_j)
+\langle E_i,\tilde B_j\rangle\,I_{cs}(\Delta;\gamma,\omega_d,\omega_j)\nonumber\\
&\hspace{2.8cm}
+\langle F_i,\tilde A_j\rangle\,I_{sc}(\Delta;\gamma,\omega_d,\omega_j)
+\langle F_i,\tilde B_j\rangle\,I_{ss}(\Delta;\gamma,\omega_d,\omega_j)
\Big].
\label{eq:main-I-tr-final}
\end{align}

\paragraph{Final result.}
The driven contribution to the averaged attention is $\boxed{\alpha^{\mathrm{(driven)}}_i(t)=\frac{1}{\Delta}\Big(\mathcal{I}^{\mathrm{(ss)}}_i+\mathcal{I}^{\mathrm{(tr)}}_i\Big)}$  where $\mathcal{I}^{\mathrm{(ss)}}_i$ and $\mathcal{I}^{\mathrm{(tr)}}_i$ are given by \eqref{eq:main-I-ss-final} and \eqref{eq:main-I-tr-final}, respectively.  

\section{Harmonic Approximation Theorem}\label{sec:hat}

Due to page limits, we provide a detailed derivation in appendix~\ref{app-sec:hat}. What follows here is a brief sketch.

Start with a continuous function $f$ on $[a, b]$ $\rightarrow$ Approximate it with trigonometric polynomials using Fej\'er $\rightarrow$ Shift the basis from $(t-a)$ to $(t-t_i)$ for each key. $\rightarrow$ Realize each term of the polynomial with an oscillator $\rightarrow$ Sum the oscillators to reconstruct the polynomial $\rightarrow$ Finally, show that the approximation error in keys leads to bounded error in attention weights using the Lipschitz property of softmax.

\begin{theorem}\label{thm:HAT}
Let $q\in C([a,b];\R^{d_k})$ and continuous keys $\{k_i\}_{i=1}^N$ with $k_i:[t_i,b]\to\R^{d_k}$. For any $\varepsilon>0$ there exists an integer $M$ (depending on $\varepsilon$ and the keys) and a single shared oscillator bank on the fixed grid $\{\omega_n\}_{n=0}^{M}$ with $\gamma_n=0$ such that one can choose initial states $\{z_{i,0}\}_{i=1}^N$ with the property
\[
\sup_{t\in[t_i,b]}\|k_i(t)-\tilde k_i(t)\|_2<\varepsilon\quad\text{for all }i,
\]
where $\tilde k_i(t):=C e^{A(t-t_i)}z_{i,0}$ is the bank-generated key. Consequently, for all $j\ge i$,
\[
\big|\alpha_i(t_j;q,k_i)-\alpha_i(t_j;q,\tilde k_i)\big|\le \|q\|_\infty\,\varepsilon,
\qquad
\|w(t_j)-\tilde w(t_j)\|_1 \le \frac{\|q\|_\infty}{\sqrt{d_k}}\,\varepsilon.
\]
\end{theorem}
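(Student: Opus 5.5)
We begin by fixing a common interval $[a,b]$ with $a\le t_1$ and work coordinate-wise, so each key coordinate $k_i^{(r)}:[t_i,b]\to\R$ is a scalar continuous function. We first extend each $k_i^{(r)}$ continuously to all of $[a,b]$; constant extension to the left of $t_i$ suffices. Next we take the period $L:=2(b-a)$ and reflect to obtain an even, continuous, $L$-periodic function, so that no jump appears at the period boundary. Fej\'er's theorem then gives uniform convergence of the Ces\`aro means of the Fourier series. This yields, for a common order $M$ (the maximum over the finitely many $i$ and $r$), trigonometric polynomials
\[
P_i^{(r)}(t)=\sum_{n=0}^{M}\bigl(a_{i,n}^{(r)}\cos(\omega_n (t-a))+b_{i,n}^{(r)}\sin(\omega_n(t-a))\bigr),\qquad \omega_n=\tfrac{2\pi n}{L},
\]
with $\sup_{[a,b]}|k_i^{(r)}-P_i^{(r)}|<\varepsilon/\sqrt{d_k}$. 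The grid $\{\omega_n\}$ depends only on $[a,b]$ and is therefore shared by all keys; only $M$ depends on $\varepsilon$ and the keys.

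Next we shift the basis to the anchor $t_i$. Using the angle-addition formulas,
\[
\cos(\omega_n(t-a))=\cos(\omega_n(t_i-a))\cos(\omega_n(t-t_i))-\sin(\omega_n(t_i-a))\sin(\omega_n(t-t_i)),
\]
and similarly for the sine term. Each $P_i^{(r)}$ is therefore a combination of $\cos(\omega_n(t-t_i))$ and $\sin(\omega_n(t-t_i))$ with rotated coefficients $(\tilde a_{i,n},\tilde b_{i,n})$.

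The oscillator realisation comes from the undamped case $\gamma_n=0$ of the homogeneous solution in appendix~\ref{sec:homogeneous}. Each block has $e^{A_n s}=\bigl[\begin{smallmatrix}\cos\omega_n s & \omega_n^{-1}\sin\omega_n s\\ -\omega_n\sin\omega_n s & \cos\omega_n s\end{smallmatrix}\bigr]$, so choosing the initial state $(x_0,p_0)=(\tilde a_{i,n},\,\omega_n\tilde b_{i,n})$ makes the position read out exactly $\tilde a_{i,n}\cos\omega_n s+\tilde b_{i,n}\sin\omega_n s$. The mode $n=0$ is handled separately: there $A_0$ is nilpotent, $e^{A_0 s}=\bigl[\begin{smallmatrix}1&s\\0&1\end{smallmatrix}\bigr]$, and taking $p_0=0$ gives a constant. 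We stack the blocks into a block-diagonal $A$, one bank per coordinate, and let $C$ be the readout that sums positions. Then $\tilde k_i(t)=Ce^{A(t-t_i)}z_{i,0}=P_i(t)$ exactly on $[t_i,b]$. Summing the coordinate errors gives $\|k_i-\tilde k_i\|_2<\varepsilon$.

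For the consequences, the attention difference is
\[
\alpha_i(t_j;q,k_i)-\alpha_i(t_j;q,\tilde k_i)=\frac{1}{\Delta}\int_{t_i}^{t_j}\langle q,k_i-\tilde k_i\rangle\,d\tau .
\]
Cauchy--Schwarz bounds the integrand by $\|q\|_\infty\varepsilon$, and averaging over the window of length $\Delta$ gives the first bound. The case $j=i$ is read as the limit, or by continuity. For the weights, the logits carry the standard $1/\sqrt{d_k}$ scaling, so each logit moves by at most $\|q\|_\infty\varepsilon/\sqrt{d_k}$ in sup norm. We then need softmax to be $1$-Lipschitz from $\ell_\infty$ to $\ell_1$. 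I expect this Lipschitz step to be the main obstacle, since it requires care with constants. I would prove it via the mean value theorem: the Jacobian is $J=\mathrm{diag}(p)-pp^\top$, and
\[
\|Jv\|_1=\sum_k p_k\bigl|v_k-\textstyle\sum_l p_l v_l\bigr|\le \max_k v_k-\min_k v_k\le 2\|v\|_\infty .
\]
This naive estimate gives a factor $2$. The sharp constant $1$ follows from the oscillation bound $\|Jv\|_1\le \operatorname{osc}(v)$ once we note that shifting all logits by a constant does not change the softmax. Failing that, I would absorb the factor $2$ into $\varepsilon$ by running the key approximation at $\varepsilon/2$.
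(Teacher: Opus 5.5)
Your proposal is correct and follows the paper's proof essentially step for step. The steps match: constant extension to $[a,b]$, even reflection with period $2(b-a)$ (so your grid is exactly the paper's $\omega_n=n\pi/(b-a)$), Fej\'er means, the phase rotation to $t-t_i$, exact undamped realisation with a zero-velocity zero mode, Cauchy--Schwarz averaging of the logits, and a mean-value bound for softmax.

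The one misstep is your claim that shift invariance upgrades $\|Jv\|_1\le\operatorname{osc}(v)$ to constant $1$. Since $J\mathbf{1}=0$ and $\operatorname{osc}$ is itself shift-invariant, shifting gains nothing. The sharp constant needs the stronger estimate $\sum_k p_k|v_k-\bar v|\le\tfrac12\operatorname{osc}(v)$, with $\bar v=\sum_l p_lv_l$. The paper gets this by maximising the convex map $v\mapsto\|Jv\|_1$ over the cube at sign vectors, where it equals $1-(p^\top v)^2\le 1$.

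Your fallback still yields all the stated bounds, because the construction is existential. Approximating the keys to within $\varepsilon/2$ is in fact what the paper's Fej\'er step delivers anyway.
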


\begin{proof}
Fix $\varepsilon>0$. For each $i$, extend $k_i$ continuously from $[t_i,b]$ to $[a,b]$ (e.g., set $k_i(t)=k_i(t_i)$ for $t\in[a,t_i]$). Apply Lemma~\ref{lem:vector-fejer} to this extension to obtain a vector trigonometric polynomial
\[
P_i(t)=c_{i,0}+\sum_{n=1}^{N_i}\big(c_{i,n}\cos\omega_n(t-a)+s_{i,n}\sin\omega_n(t-a)\big)
\]
with $\sup_{t\in[a,b]}\|k_i(t)-P_i(t)\|_2<\varepsilon/2$. Use Lemma~\ref{lem:shift} to rewrite $P_i$ as
\[
P_i(t)=c_{i,0}+\sum_{n=1}^{N_i}\big(\tilde c_{i,n}\cos\omega_n(t-t_i)+\tilde s_{i,n}\sin\omega_n(t-t_i)\big).
\]
Let $N:=\max_i N_i$ and take $M\ge N$. By Lemma~\ref{lem:realize} (with $\gamma_n=0$), choose $z_{i,0}$ so that the shared bank realizes $P_i$ \emph{exactly}: $\tilde k_i(t)\equiv P_i(t)$ on $[t_i,b]$. Therefore $\sup_{t\in[t_i,b]}\|k_i(t)-\tilde k_i(t)\|_2<\varepsilon/2<\varepsilon$.

For $t>t_i$,
\[
|\alpha_i(t)-\tilde\alpha_i(t)|
\le \frac{1}{t-t_i}\!\int_{t_i}^{t}\!\!\|q(\tau)\|_2\,\|k_i(\tau)-\tilde k_i(\tau)\|_2\,d\tau
\le \|q\|_\infty\,\varepsilon.
\]
At $t=t_i$ the bound $|\ip{q(t_i)}{k_i(t_i)-\tilde k_i(t_i)}|\le \|q\|_\infty\,\varepsilon$ is immediate. Applying the softmax Lipschitz Lemma~\ref{lem:softmax} to the logits scaled by $1/\sqrt{d_k}$ yields the stated $\ell_1$ bound.
\end{proof}

\begin{corollary}
Under the hypotheses of Theorem~\ref{thm:HAT}, fix $\varepsilon>0$ and construct the undamped realization above. Then there exists $\bar\gamma>0$ such that, for any damped bank with $0\le \gamma_n\le \bar\gamma$, one can reuse the same initial states $\{z_{i,0}\}$ and obtain
\[
\sup_{t\in[t_i,b]}\|k_i(t)-\tilde k_i^{(\gamma)}(t)\|_2 < \varepsilon,
\qquad
\|w^{(\gamma)}(t_j)-w(t_j)\|_1 \le \frac{\|q\|_\infty}{\sqrt{d_k}}\,\varepsilon,
\]
where the superscript $(\gamma)$ denotes readouts from the damped bank. In particular, a small amount of damping does not affect universality.
\end{corollary}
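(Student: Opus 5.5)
The idea is to treat the damped readout as a continuous perturbation of the undamped one. The approximation guaranteed by Theorem~\ref{thm:HAT} is strict, so the slack it leaves is enough to absorb a small amount of damping.

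\textbf{Step 1: fix the undamped construction and record its slack.} Apply the construction of Theorem~\ref{thm:HAT} with target accuracy $\varepsilon$. Its proof actually yields
\[
\delta_0 := \max_i \sup_{t\in[t_i,b]}\|k_i(t)-\tilde k_i(t)\|_2 < \varepsilon/2 .
\]
Set $\eta := \varepsilon-\delta_0>\varepsilon/2$. From now on the fixed grid $\{\omega_n\}_{n=0}^{M}$, the readout $C$ and the initial states $\{z_{i,0}\}$ are frozen.

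\textbf{Step 2: a uniform perturbation bound.} Write $A(\gamma)$ for the block-diagonal bank generator, whose $n$-th block is $\begin{bmatrix}0&1\\-\omega_n^2&-2\gamma_n\end{bmatrix}$, and $A(0)$ for its undamped version. The matrices differ only by $E:=A(\gamma)-A(0)$, and $\|E\|\le 2\bar\gamma$. Using the variation-of-constants identity
\[
e^{A(\gamma)s}-e^{A(0)s}=\int_0^s e^{A(\gamma)(s-r)}E\,e^{A(0)r}\,dr ,
\]
together with the crude bound $\|e^{Xs}\|\le e^{\|X\|s}$, we get for $s\in[0,b-a]$
\[
\|e^{A(\gamma)s}-e^{A(0)s}\|\le 2\bar\gamma\,(b-a)\,e^{(\|A(0)\|+2\bar\gamma)(b-a)} .
\]
Since the bank is finite, $\|A(0)\|$ is a finite constant determined by $\max_n\omega_n^2$. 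Let $Z:=\max_i\|z_{i,0}\|$, which is a maximum over finitely many keys. Then
\[
\sup_{t\in[t_i,b]}\|\tilde k_i^{(\gamma)}(t)-\tilde k_i(t)\|_2\le \|C\|\,Z\cdot 2\bar\gamma(b-a)e^{(\|A(0)\|+2\bar\gamma)(b-a)} .
\]
The right-hand side tends to $0$ as $\bar\gamma\to0$. Choose $\bar\gamma$ so that it is less than $\eta$. The triangle inequality then gives the first claim, uniformly for every damped bank with $0\le\gamma_n\le\bar\gamma$.

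\textbf{Step 3: transfer to attention weights.} Reuse the second half of the proof of Theorem~\ref{thm:HAT} verbatim, replacing $\tilde k_i$ by $\tilde k_i^{(\gamma)}$. Each averaged logit then differs from the original by at most $\|q\|_\infty\varepsilon$, including at $t=t_i$. Lemma~\ref{lem:softmax}, applied to the logits scaled by $1/\sqrt{d_k}$, yields the $\ell_1$ bound. As in the theorem, $w$ here means the weights produced by the true keys $k_i$.

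\textbf{Main obstacle.} The only delicate point is uniformity. $\bar\gamma$ must be chosen once and must work for all $N$ keys, all times in $[t_i,b]$, and all admissible damping vectors simultaneously. This is why I would use the Duhamel-type bound, which depends only on $\|E\|\le 2\bar\gamma$ and on the compact horizon $b-a$, rather than the explicit closed-form underdamped solution. The explicit formula would require case splits and would involve $\omega_d=\sqrt{\omega_n^2-\gamma_n^2}$. One further detail needs care: the $n=0$ block with $\omega_0=0$ makes $A(0)$ non-diagonalizable. The norm bound avoids any spectral argument, so this causes no difficulty.
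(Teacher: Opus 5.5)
Your proposal is correct and follows the paper's own argument. Both use a Duhamel/variation-of-constants bound on $C(e^{A_\gamma s}-e^{A_0 s})$ over $[0,b-a]$, take the maximum over the finitely many initial states, absorb the perturbation into the $\varepsilon/2$ slack left by the undamped construction, and transfer the key bound to the attention weights with the softmax Lipschitz lemma. The only difference is minor and works in your favour. You use the explicit bound $\|e^{Xs}\|\le e^{\|X\|s}$ instead of the paper's compactness constant $M_{\bar\gamma}$. This makes the constant explicitly increasing in $\bar\gamma$ and removes the paper's slight circularity of choosing $\bar\gamma$ through a $K$ that itself depends on $\bar\gamma$.
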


\section{Computational Complexity}\label{sec:complexity}
We analyze (i) arithmetic operations, (ii) sequential depth , and (iii) activation memory for one layer. All complexity bounds are per attention head.

\subsection*{Setup and notation}
\begin{itemize}
    \item $N$: sequence length.
    \item $d$: per-head feature dimension.
    \item $S$: number of vector-field/quadrature evaluations of the ODE solver on the normalized interval $[-1,1]$ in one forward pass.
    \item $C_f(d)$: cost of one evaluation of the ODE vector field on a $d$-dimensional state; with dense linear maps, $C_f(d)=\Theta(d^2)$.
    \item The standard $Q,K,V$ projections cost $O(Nd^2)$ per head and are listed explicitly.
\end{itemize}

\subsection{Numerical continuous-time realization (baseline)}
Each position $i\in\{1,\dots,N\}$ induces continuous key/value trajectories by solving an ODE on $[-1,1]$. For every query--key pair $(i,j)$, the attention score is an integral of $\langle q_i(t),k_j(t)\rangle$ over $t\in[-1,1]$, approximated by evaluating the ODE state and inner product at $S$ nodes. The total work across all pairs and steps is:
\begin{align*}
    T_{\mathrm{num}} &= \Theta(N^2 S C_f(d)) + O(Nd^2) = \Theta(N^2 S d^2) + O(Nd^2), \\[2pt]
    D_{\mathrm{num}} &= \Theta(S), \\[2pt]
    M_{\mathrm{num}} &= \Theta(N^2 S d).
\end{align*}
The first term in $T_{\mathrm{num}}$ accounts for $N^2$ pairs, $S$ solver/quadrature nodes, and per-node cost $C_f(d)=\Theta(d^2)$. Depth is determined by the $S$ time steps on the critical path. Activation memory stores $d$-dimensional states for $S$ nodes per pair.

\subsection{Closed-form realization}
When the key/value ODEs admit closed forms, each query trajectory can be represented by a $J$-term trigonometric expansion so that the attention integral decomposes into $J$ modewise expressions, all evaluable in closed form. This yields:
\begin{align*}
    T_{\mathrm{cf}} &= \Theta(N^2 J d) + O(Nd^2), \\[2pt]
    D_{\mathrm{cf}} &= \Theta(1), \\[2pt]
    M_{\mathrm{cf}} &= \Theta(N^2 d).
\end{align*}
Each query key pair involves computing $J$ mode coefficients with $d$-dimensional features, contributing $O(Jd)$ operations. The closed form eliminates time-stepping, yielding constant depth. Activation memory stores only $O(d)$ values per pair for backpropagation.

\subsection{Comparison}
Ignoring the shared projection term $O(Nd^2)$ and constants, the dominant cost ratio is
\[
    \frac{T_{\mathrm{cf}}}{T_{\mathrm{num}}} \asymp \frac{N^2 J d}{N^2 S d^2} = \frac{J}{S d}.
\]
The closed-form layer is asymptotically faster when $J \ll S d$. It also achieves lower sequential depth by a factor $\Theta(S)$ and requires $\Theta(S)$-times less activation memory:
\[
    \frac{M_{\mathrm{cf}}}{M_{\mathrm{num}}} \asymp \frac{1}{S}.
\]

\subsection{Representative instance}
With $S=80$ (e.g., fixed-step RK4 on $[-1,1]$), $d=64$, and $J=8$,
\[
    \frac{T_{\mathrm{cf}}}{T_{\mathrm{num}}} = \frac{8}{80 \cdot 64} = \frac{1}{640},
\]
meaning the dominant $N^2$ term is reduced by approximately three orders of magnitude, while sequential depth and activation memory decrease by a factor of $S$.

\section{Architecture}\label{sec:architecture}

\begin{figure}[H]
    \centering
    \includegraphics[scale = 0.28]{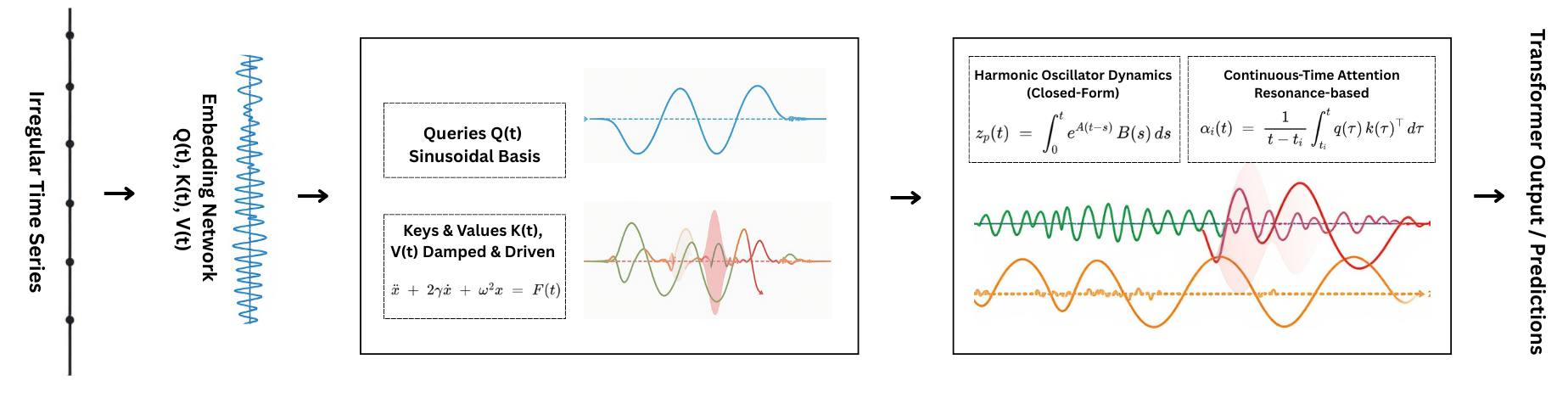}
    \caption{Architecture Pipeline}
    \label{fig: Pipeline}
\end{figure}
Each input generates an \emph{oscillator} for its key and another for its value. 
Those oscillators evolve in continuous time with closed-form solutions. The projections for each key and value per head \(h\in[H]\) with \(d_h=d/H\) are given by \(Q_i=W_Q X_i+b_Q\), \(K_i=W_K X_i+b_K\), and \(V_i=W_V X_i+b_V\) where \(Q_i,K_i,V_i\in\mathbb{R}^{d_h}\). 

Following this, the learnable parameters are: projection matrices and biases \(W_Q,W_K,W_V,W_O,b_Q,b_K,b_V\); oscillator spectra (per head and channel, i.e. one learnable frequency $\omega$ and damping $\zeta$ for every coordinate c = 1$\cdots$d$_{\text{h}}$ inside each head) \(\omega^k_h,\zeta^k_h\in\mathbb R^{d_h}_{>0},\mathbb R^{d_h}_{\ge0}\) for keys and \(\omega^v_h,\zeta^v_h\) for values; initial-velocity maps \(U^k_h,U^v_h\in\mathbb R^{d_h\times d_h}\); and, when intrinsic drives \(F^{k/v}_h(t)\) are used, their matrices \(A^{k/v}_h,B^{k/v}_h\in\mathbb R^{d_h\times d_h}\) 

The forward pass follows a plain Transformer \citep{attentionisallyouneed} where  for each head $h$ at time $t_j$ we project tokens to $Q_i,K_i,V_i$, compute the closed-form key and value trajectories $k_{i,h}(\tau),v_{i,h}(\tau)$ on $[t_i,t_j]$ for every $i\le j$, fit the query expansion coefficients $(A_{j,\cdot},B_{j,\cdot})$, evaluate the unnormalised scores $\alpha_{i,h}(t_j)$ in closed form or with a short integral average, softmax over $i\le j$ to get weights $w_{i,h}(t_j)$, form the weighted value $\bar v_{i,h}(t_j)$ and emit $y_h(t_j)=\sum_{i\le j}w_{i,h}(t_j)\bar v_{i,h}(t_j)$, then merge heads with $W_O$ and add residual plus layer-norm.

\section{Experiments}\label{sec:experiments}

We evaluate on all irregular time-series benchmarks used across ContiFormer \citep{contiformer}, Rough Transformers \citep{morenopino2025roughtransformerslightweightcontinuous}, and Closed-Form Liquid Time-Constant Networks \citep{Hasani_2022} continuous models, spanning health, finance, event, sequential prediction, and synthetic (sine/spirals/XOR) settings. We adopt the UEA multivariate classification setting where irregularity is created by randomly dropping observations at ratios of 30\%, 50\%, and 70\% per sample \citep{bagnall2018ueamultivariatetimeseries}.

\begin{table}[H]
\centering
\resizebox{\linewidth}{!}{
\begin{tabular}{|l|c|c|c|c|c|c|c|}
\hline
Model & Metric & Synthetic & Neonate & Traffic & MIMIC & StackOverflow & BookOrder \\
\hline
HP \citep{laub2024hawkesmodelsapplications} & LL ($\uparrow$) & -3.084 $\pm$ .005 & -4.618 $\pm$ .005 & -1.482 $\pm$ .005 & -4.618 $\pm$ .005 & -5.794 $\pm$ .005 & -1.036 $\pm$ .000 \\
& Accuracy ($\uparrow$) & 0.756 $\pm$ .000 & --- & 0.570 $\pm$ .000 & 0.795 $\pm$ .000 & 0.441 $\pm$ .000 & 0.604 $\pm$ .000 \\
& RMSE ($\downarrow$) & 0.953 $\pm$ .000 & 10.957 $\pm$ .012 & 0.407 $\pm$ .000 & 1.021 $\pm$ .000 & 1.341 $\pm$ .000 & 3.781 $\pm$ .000 \\
\hline
RMTPP \citep{10.1145/2939672.2939875} & LL ($\uparrow$) & -1.025 $\pm$ .030 & -2.817 $\pm$ .023 & -0.546 $\pm$ .012 & -1.184 $\pm$ .023 & -2.374 $\pm$ .001 & -0.952 $\pm$ .007 \\
& Accuracy ($\uparrow$) & 0.841 $\pm$ .000 & --- & 0.805 $\pm$ .002 & 0.823 $\pm$ .004 & 0.461 $\pm$ .000 & 0.624 $\pm$ .000 \\
& RMSE ($\downarrow$) & 0.369 $\pm$ .014 & 9.517 $\pm$ .023 & 0.337 $\pm$ .001 & 0.864 $\pm$ .017 & 0.955 $\pm$ .000 & 3.647 $\pm$ .003 \\
\hline
NeuralHP \citep{shen2025neuralsdesunifiedapproach} & LL ($\uparrow$) & -1.371 $\pm$ .004 & -2.795 $\pm$ .012 & -0.643 $\pm$ .004 & -1.239 $\pm$ .027 & -2.608 $\pm$ .000 & -1.104 $\pm$ .005 \\
& Accuracy ($\uparrow$) & 0.841 $\pm$ .000 & --- & 0.759 $\pm$ .001 & 0.814 $\pm$ .001 & 0.450 $\pm$ .000 & 0.621 $\pm$ .000 \\
& RMSE ($\downarrow$) & 0.631 $\pm$ .002 & 9.614 $\pm$ .013 & 0.358 $\pm$ .001 & 0.846 $\pm$ .007 & 1.022 $\pm$ .000 & 3.734 $\pm$ .003 \\
\hline
GRU-$\Delta$t \citep{chung2014empiricalevaluationgatedrecurrent} & LL ($\uparrow$) & -0.871 $\pm$ .050 & -2.736 $\pm$ .031 & -0.613 $\pm$ .062 & -1.164 $\pm$ .026 & -2.389 $\pm$ .002 & -0.915 $\pm$ .006 \\
& Accuracy ($\uparrow$) & 0.841 $\pm$ .000 & --- & 0.800 $\pm$ .004 & 0.832 $\pm$ .007 & 0.466 $\pm$ .000 & 0.627 $\pm$ .000 \\
& RMSE ($\downarrow$) & 0.249 $\pm$ .013 & 9.421 $\pm$ .050 & 0.335 $\pm$ .001 & 0.850 $\pm$ .010 & 0.950 $\pm$ .000 & 3.666 $\pm$ .016 \\
\hline
ODE-RNN \citep{rubanova2019latentodesirregularlysampledtime} & LL ($\uparrow$) & -1.032 $\pm$ .102 & -2.732 $\pm$ .080 & -0.491 $\pm$ .011 & -1.183 $\pm$ .028 & -2.395 $\pm$ .001 & -0.988 $\pm$ .006 \\
& Accuracy ($\uparrow$) & 0.841 $\pm$ .000 & --- & 0.812 $\pm$ .000 & 0.827 $\pm$ .006 & 0.467 $\pm$ .000 & 0.624 $\pm$ .000 \\
& RMSE ($\downarrow$) & 0.342 $\pm$ .030 & 9.289 $\pm$ .048 & 0.334 $\pm$ .000 & 0.865 $\pm$ .021 & 0.952 $\pm$ .000 & \textbf{3.605 $\pm$ .004} \\
\hline
mTAN \cite{shukla2021multitimeattentionnetworksirregularly} & LL ($\uparrow$) & -0.920 $\pm$ .036 & -2.722 $\pm$ .026 & -0.548 $\pm$ .023 & -1.149 $\pm$ .029 & -2.391 $\pm$ .002 & -0.980 $\pm$ .004 \\
& Accuracy ($\uparrow$) & 0.842 $\pm$ .000 & --- & 0.811 $\pm$ .002 & 0.832 $\pm$ .009 & 0.466 $\pm$ .000 & 0.620 $\pm$ .000 \\
& RMSE ($\downarrow$) & 0.286 $\pm$ .008 & 9.363 $\pm$ .042 & 0.334 $\pm$ .001 & 0.848 $\pm$ .006 & 0.950 $\pm$ .000 & 3.680 $\pm$ .015 \\
\hline
\textbf{ContiFormer}\citep{contiformer} & LL ($\uparrow$) & \textbf{-0.535 $\pm$ .028}$^+$ & \textbf{-2.550 $\pm$ .026} & \textbf{0.635 $\pm$ .019}$^+$ & -1.135 $\pm$ .023 & \textbf{-2.332 $\pm$ .001}$^+$ & \textbf{-0.270 $\pm$ .010}$^+$ \\
& Accuracy ($\uparrow$) & \textbf{0.842 $\pm$ .000} & --- & \textbf{0.822 $\pm$ .001}$^+$ &\textbf{0.836 $\pm$ .006} & \textbf{0.473 $\pm$ .000}$^+$ & \textbf{0.628 $\pm$ .001}$^+$ \\
& RMSE ($\downarrow$) & \textbf{0.192 $\pm$ .005} & \textbf{9.233 $\pm$ .033} & \textbf{0.328 $\pm$ .001}$^+$ & \textbf{0.837 $\pm$ .007} & \textbf{0.948 $\pm$ .000}$^+$ & 3.614 $\pm$ .020 \\
\hline
\textbf{\ours~(Ours)} & LL ($\uparrow$) & -0.558 $\pm$ .025$^+$ & -2.573 $\pm$ .028 & 0.612 $\pm$ .022$^+$ & -1.142 $\pm$ .021 & -2.315 $\pm$ .002$^+$ & -0.288 $\pm$ .009$^+$ \\
& Accuracy ($\uparrow$) & 0.841 $\pm$ .000 & --- & 0.819 $\pm$ .001$^+$ & 0.834 $\pm$ .007 & 0.471 $\pm$ .000$^+$ & 0.626 $\pm$ .001$^+$ \\
& RMSE ($\downarrow$) & 0.198 $\pm$ .006 & 9.187 $\pm$ .031 & 0.331 $\pm$ .001$^+$ & 0.841 $\pm$ .008 & 0.951 $\pm$ .000$^+$ & 3.621 $\pm$ .017 \\
\hline
\end{tabular}}
\caption{Performance comparison of different models on event prediction tasks. Results shown for log-likelihood (LL) and accuracy (ACC) metrics. Arrow symbols $\uparrow$ and $\downarrow$ denote whether higher or lower values represent superior performance, respectively. For comparison, other values in Table are sourced from \citep{contiformer} reported benchmarks.}
\label{Table: Table1}
\end{table}

\begin{table}[H]
\centering
\resizebox{\linewidth}{!}{
\begin{tabular}{|l|c|c|c|c|c|c|c|c|c|c|}
\hline
Dataset & LRU & S5 & S6 & Mamba & NCDE & NRDE & LogNCDE & Transformer & RFormer & \textbf{\ours}\\
\hline
SCP1 & 82.6 $\pm$ 3.4 & \textbf{89.9 $\pm$ 4.6} & 82.8 $\pm$ 2.7 & 80.7 $\pm$ 1.4 & 79.8 $\pm$ 5.6 & 80.9 $\pm$ 2.5 & 83.1 $\pm$ 2.8 & 84.3 $\pm$ 6.3 & 81.2 $\pm$ 2.8 & 84.1 $\pm$ 3.0\\
SCP2 & 51.2 $\pm$ 3.6 & 50.5 $\pm$ 2.6 & 49.9 $\pm$ 9.5 & 48.2 $\pm$ 3.9 & 53.0 $\pm$ 2.8 & \textbf{53.7 $\pm$ 6.9} & \textbf{53.7 $\pm$ 4.1} & 49.1 $\pm$ 2.5 & 52.3 $\pm$ 3.7 & 58.7 $\pm$ 6.8\\
MI   & 48.4 $\pm$ 5.0 & 47.7 $\pm$ 5.5 & 51.3 $\pm$ 4.7 & 47.7 $\pm$ 4.5 & 49.5 $\pm$ 2.8 & 47.0 $\pm$ 5.7 & 53.7 $\pm$ 5.3 & 50.5 $\pm$ 3.0 & \textbf{55.8 $\pm$ 6.6} & 91.8 $\pm$ 0.2\\
EW   & 87.8 $\pm$ 2.8 & 81.1 $\pm$ 3.7 & 85.0 $\pm$ 16.1 & 70.9 $\pm$ 15.8 & 75.0 $\pm$ 3.9 & 83.9 $\pm$ 7.3 & 85.6 $\pm$ 5.1 & OOM & \textbf{90.3 $\pm$ 0.1} & 48.9 $\pm$ 3.4\\
ETC  & 21.5 $\pm$ 2.1 & 24.1 $\pm$ 4.3 & 26.4 $\pm$ 6.4 & 27.9 $\pm$ 4.5 & 29.9 $\pm$ 6.5 & 25.3 $\pm$ 1.8 & 34.4 $\pm$ 6.4 & \textbf{40.5 $\pm$ 6.3} & 34.7 $\pm$ 4.1 & 31.5 $\pm$ 4.6\\
HB   & \textbf{78.4 $\pm$ 6.7} & 77.7 $\pm$ 5.5 & 76.5 $\pm$ 8.3 & 76.2 $\pm$ 3.8 & 73.9 $\pm$ 2.6 & 72.9 $\pm$ 4.8 & 75.2 $\pm$ 4.6 & 70.5 $\pm$ 0.1 & 72.5 $\pm$ 0.1 & 71.8 $\pm$ 0.1\\
\hline
Av.  & 61.7 & 61.8 & 62.0 & 58.6 & 60.2 & 60.6 & 64.3 & 59.0 & \textbf{64.5} & \textbf{64.5} \\
\hline
\end{tabular}}
\caption{Classification performance on various long context temporal datasets from UCR TS archive \citep{tan2020monashuniversityueaucr}. For comparison, other values in Table are sourced from \citep{morenopino2025roughtransformerslightweightcontinuous} reported benchmarks.}
\label{Table: Table2}
\end{table}

We also evaluate on next-event type and time prediction across different datasets (see Table~\ref{Table: Table1}): Neonate (clinical seizures), Traffic (PeMS events), MIMIC (ICU visits), BookOrder (financial limit order book transactions for ``buy/sell''), and StackOverflow (badge events). Following \cite{Hasani_2022}, we run experiments on irregularly sampled clinical time series over the first 48 hours in ICU with missing features across 37 channels (see Table~\ref{Table: Table2}).

\begin{table}[H]
\centering
\begin{tabular}{|l|c|}
\hline
Model & HR \\
& (RMSE $\downarrow$) \\
\hline
ODE-RNN$^\diamond$ & 13.06 $\pm$ 0.00 \\
Neural-CDE$^\diamond$ & 9.82 $\pm$ 0.34 \\
Neural-RDE$^\diamond$ & 2.97 $\pm$ 0.45 \\
\hline
GRU$^\dagger$ & 13.06 $\pm$ 0.00 \\
ODE-RNN$^\dagger$ & 13.06 $\pm$ 0.00 \\
Neural-RDE$^\dagger$ & 4.04 $\pm$ 0.11 \\
Transformer & 8.24 $\pm$ 2.24 \\
ContiFormer & Out of memory \\
RFormer & 2.66 $\pm$ 0.21 \\
\textbf{\ours} & \textbf{$2.56 \pm 0.18$} \\
\hline
\end{tabular}
\caption{Evaluation on HR dataset (lower RMSE is better). For comparison, other values in Table  are sourced from \citep{morenopino2025roughtransformerslightweightcontinuous} reported benchmarks.} 
\label{Table: Table3}
\end{table}

\begin{table}[H]
\centering
\begin{tabular}{|l|c|c|c|c|}
\hline
Model & \begin{tabular}{@{}c@{}}Equidistant\\encoding\end{tabular} & \begin{tabular}{@{}c@{}}Event-based\\(irregular) encoding\end{tabular} & \begin{tabular}{@{}c@{}}Epoch\\Time\\(min)\end{tabular} & ODE-based? \\
\hline
†Augmented LSTM (20) & 100.00\% $\pm$ 0.00 & 89.71\% $\pm$ 3.48 & 0.62 & No \\
$\dagger$ CT-GRU (34) & 100.00\% $\pm$ 0.00 & 61.36\% $\pm$ 4.87 & 0.80 & No \\
$\dagger$ RNN Decay (7) & 60.28\% $\pm$ 19.87 & 75.53\% $\pm$ 5.28 & 0.90 & No \\
$\dagger$ Bi-directional RNN (38) & 100.00\% $\pm$ 0.00 & 90.17\% $\pm$ 0.69 & 1.82 & No \\
$\dagger$ GRU-D (36) & 100.00\% $\pm$ 0.00 & 97.90\% $\pm$ 1.71 & 0.58 & No \\
$\dagger$ CT-LSTM (35) & 97.73\% $\pm$ 0.08 & 95.09\% $\pm$ 0.30 & 0.86 & No \\
$\dagger$ ODE-RNN (7) & 50.47\% $\pm$ 0.06 & 51.21\% $\pm$ 0.37 & 4.11 & Yes \\
$\dagger$ CT-RNN (33) & 50.42\% $\pm$ 0.12 & 50.79\% $\pm$ 0.34 & 4.83 & Yes \\
$\dagger$ GRU-ODE (7) & 50.41\% $\pm$ 0.40 & 52.52\% $\pm$ 0.35 & 1.55 & Yes \\
$\dagger$ ODE-LSTM (9) & 100.00\% $\pm$ 0.00 & 98.89\% $\pm$ 0.26 & 1.18 & Yes \\
LTC (1) & 100.00\% $\pm$ 0.00 & 49.11\% $\pm$ 0.00 & 2.67 & Yes \\
ContiFormer & 100.00\% $\pm$ 0.00 & 99.93\% $\pm$ 0.12 & 3.83 & Yes\\
\hline
\textbf{\ours} & 100.00\% $\pm$ 0.00 & 99.83\% $\pm$ 0.32 & \textbf{0.56} & No\\
\hline
\end{tabular}
\caption{Detailed accuracy and time comparison including encoding types}
\label{Table: Table5}

\end{table}

Finally, we evaluate on synthetic datasets with binary sequence classification in two encodings: equidistant (regular) and event-based (irregular, only bit-change events). We also test interpolation and extrapolation on 2-D spiral trajectories with irregular time points- refer to figures \ref{fig:trajectory1} and \ref{fig:trajectory2}.

Across the irregular-benchmark suite (health/HR, finance/LOB-style streams, and synthetic sine -- see Table~\ref{Table: Table3}), we observe that setting $J = 8$ (i.e., the number of oscillator modes) yields essentially \textit{identical predictive performance} to larger settings. In the tasks where $J$ indexes the oscillator modes  in our module, accuracy saturates around $J \in [6, 8]$ with no meaningful gains beyond that range. At the same time, we obtain consistent computational benefits relative to the ODE-based Contiformer, with the largest speedups on the longest or most irregular sequences. \textbf{These gains vary from 3x to 20x} depending on benchmarks and value of the Oscillator mode -- see Table~\ref{Table: Table5} for these results.

Furthermore, we establish the following hyperparameter configuration. For optional driven dynamics we use a collocation–matched, causal sinusoidal drive per head $h$ and token $i$: $F^{k/v}{i,h}(t) = \sum{m=1}^{M}\Big(\,g^{k/v}{h,m}\odot E^{k/v}{i,h}\Big)\cos\!\big(\varpi_{h,m}(t-t_i)\big) + \Big(\,h^{k/v}{h,m}\odot E^{k/v}{i,h}\Big)\sin\!\big(\varpi_{h,m}(t-t_i)\big)$ for $t\ge t_i$, where $E^{k}{i,h}=K{i,h}$ and $E^{v}{i,h}=V{i,h}$ are the per–head projections, $g^{k/v}{h,m},h^{k/v}{h,m}\in\mathbb{R}^{d_h}$ are learnable element-wise gains, and $\varpi_{h,m}$ are drive frequencies drawn from the collocation bank $\{\omega_1,\ldots,\omega_J\}$ (we use $J=8$). This choice admits closed–form solutions via the transfer function $H(\omega)$ and aligns the forcing spectrum with the query basis.

For model architecture, we use width $d = 256$ and $H = 8$ attention heads, where $d_h = d/H$. For training, we apply ridge regularization for the query fit, dropout in projections and feed-forward networks, and weight decay through the optimizer. We use AdamW with learning rate $1 \times 10^{-3}$ and weight decay $0.01$, employing cosine decay with 5\% warmup. Parameters are initialized with $\omega$ log-uniform on $[10^{-2}, 10^{1}]$ (normalized time), damping $\zeta \in [0.05, 0.4]$, and $U^{k/v}_h = 0$. This configuration consistently delivers optimal performance across our benchmark suite.

We have conducted a detailed set of ablations over (i) the number of oscillator modes (J) (ii) different damping ranges (iii) several frequency grid parameterizations (see Tables in Appendix \ref{app-sec:ablation}). To visualize the resonance view of attention, we have conducted simple irregular time-series based classification and regression experiments, given in \ref{app-sec:classification} and \ref{app-sec:regression}.

\begin{figure}[H]
    \centering
    \begin{subfigure}[b]{0.45\textwidth}
        \includegraphics[width=\textwidth]{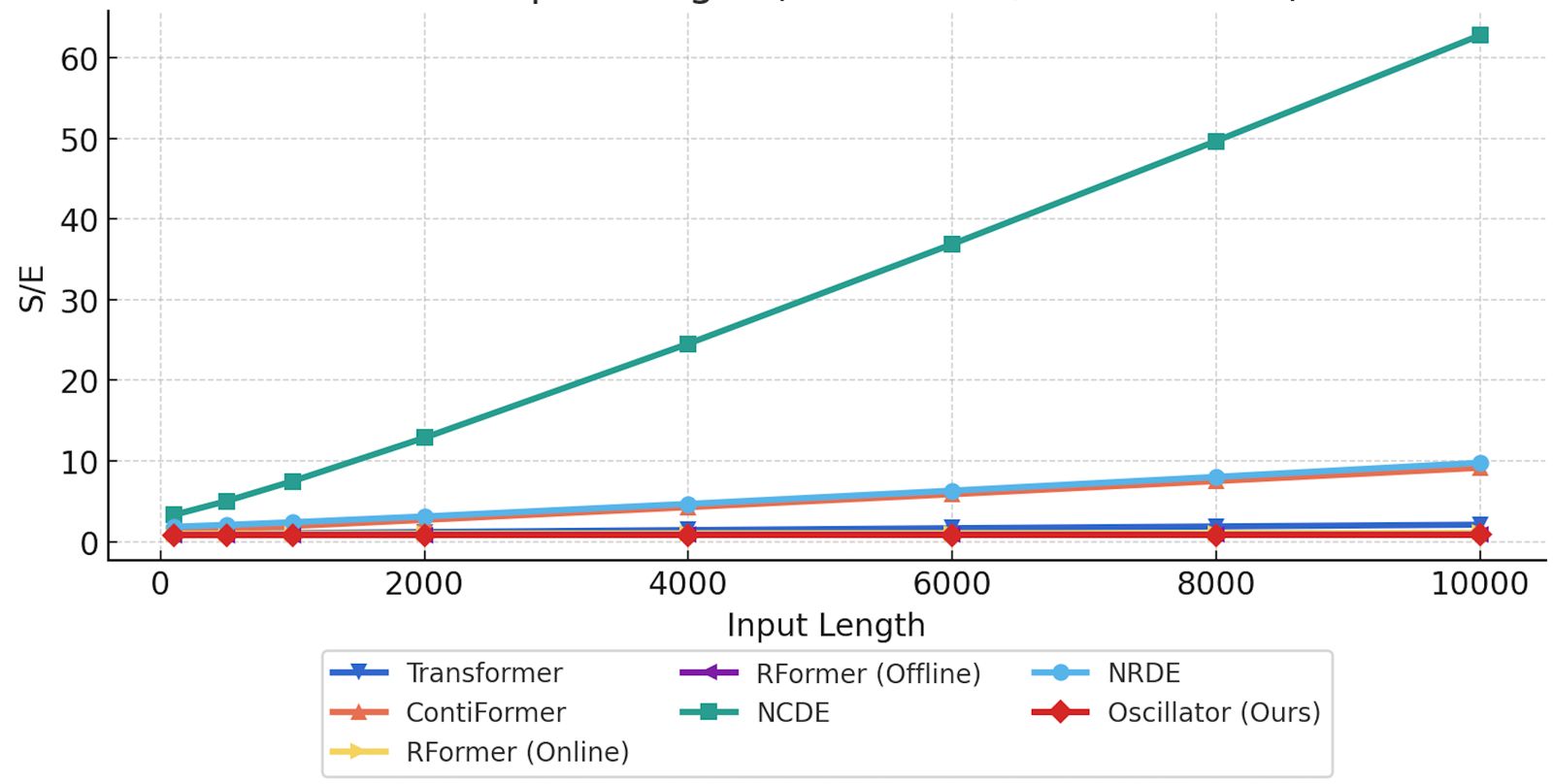}
        \caption{Per-epoch Training Time vs. Input Length by Model Type}
    \end{subfigure}
    \hfill
    \begin{subfigure}[b]{0.45\textwidth}
        \includegraphics[width=\textwidth]{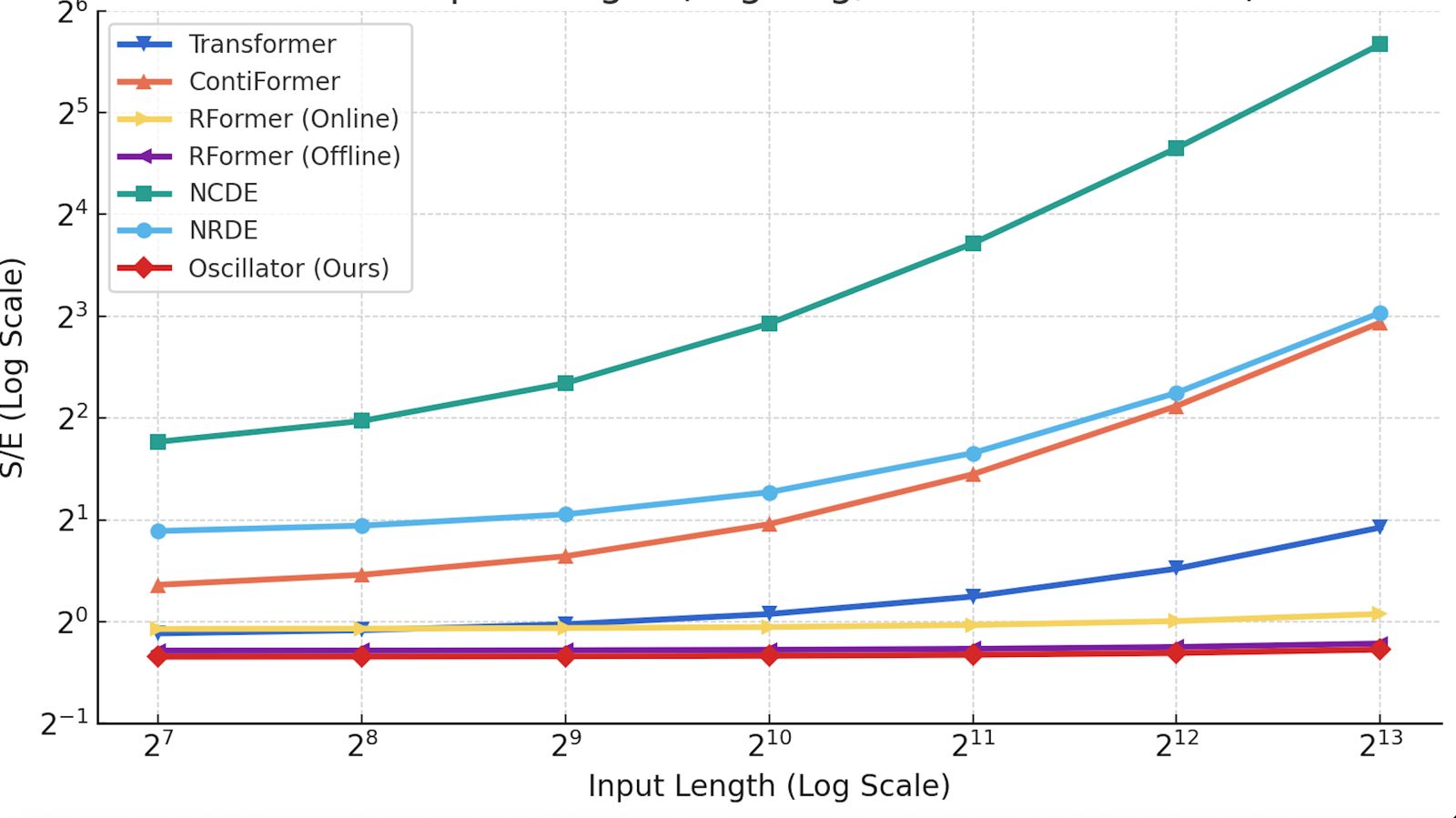}
        \caption{Per-epoch Training Time vs. Input Length by Model Type (log scale) }
    \end{subfigure}

    \vspace{0.5cm}
    \begin{subfigure}[b]{0.30\textwidth}
        \includegraphics[width=\textwidth]{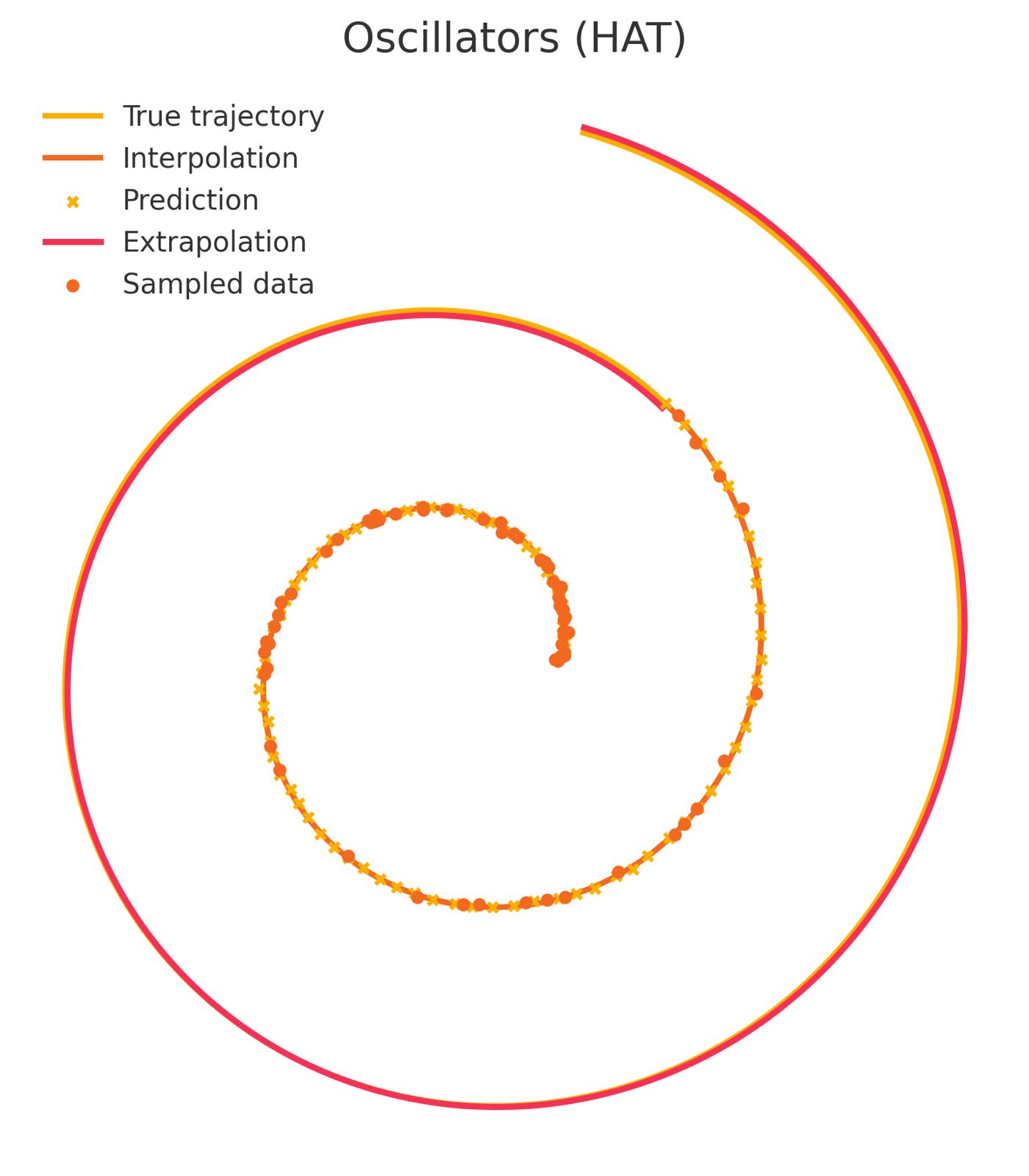}
        \caption{Osciformer samples and predictions}
        \label{fig:trajectory1}
    \end{subfigure}
    \hfill
    \begin{subfigure}[b]{0.30\textwidth}
        \includegraphics[width=\textwidth]{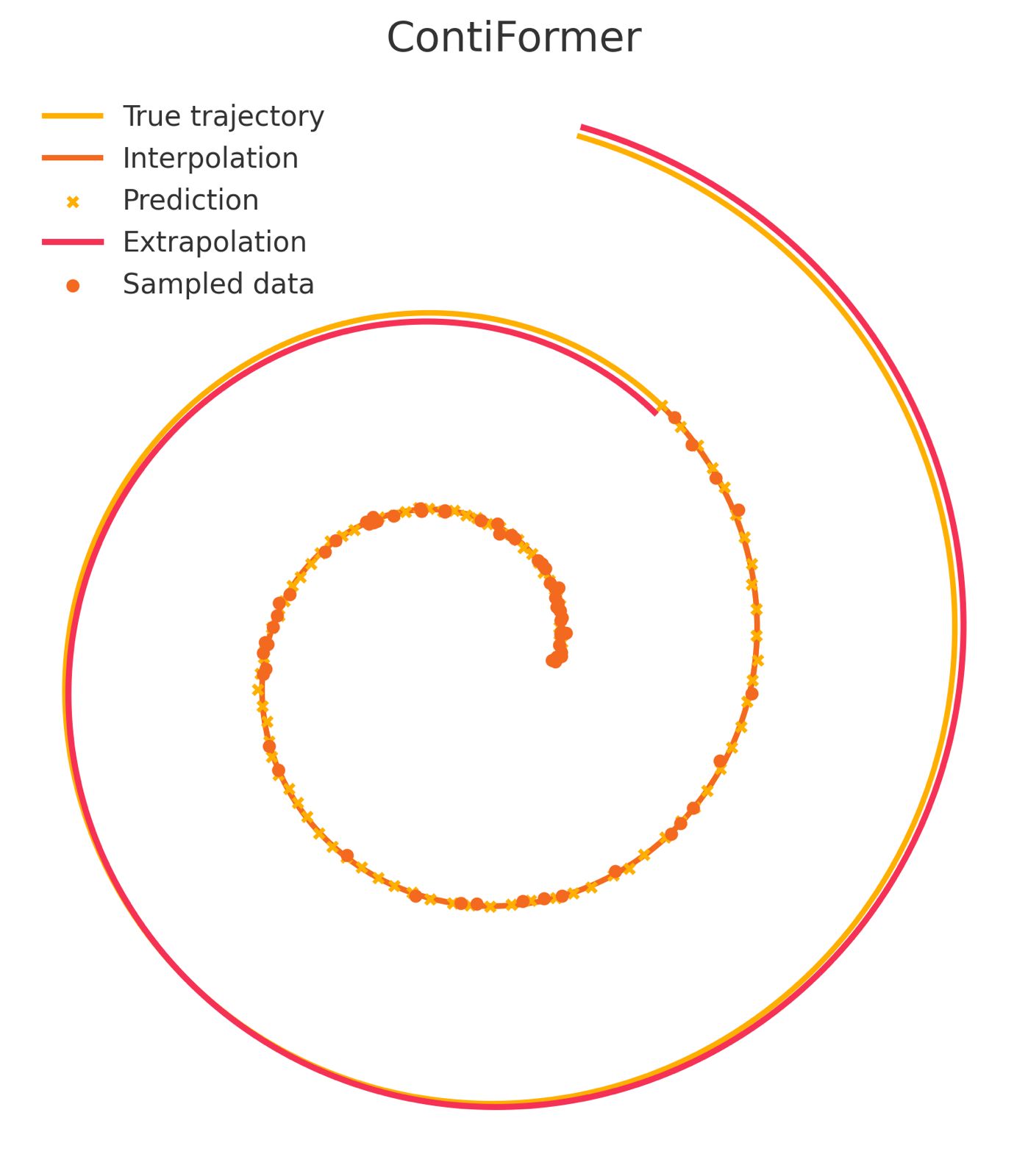}
        \caption{ContiFormer samples and predictions}
        \label{fig:trajectory2}
    \end{subfigure}

    \caption{Trajectories and Training Time Visualisations}
\end{figure}

\section{Discussion}\label{sec:discussion}

We replaced the continuous-time dynamics of Contiformer with a linear, damped, driven oscillator. This keeps the continuous-time property intact while requiring only a handful of closed-form operations per step, eliminating the memory blow-up that plagues the standard Contiformer and delivering accuracy on par with structured state-space models. We proved that a bank of damped oscillators reproduces key-value signals exactly and faithfully approximates discrete attention. The generalization bounds we provide are only a first step and can be tightened further, which could lead to an even richer and more accurate model family. Furthermore, stacking multiple oscillators provides a principled way to recreate every primitive of a standard transformer, opening a concrete pathway toward a universal approximation theorem for transformers while simultaneously revealing the class of functions that such oscillators and transformers more broadly cannot approximate. This can help us understand the bounds of current transformers and help us develop better architectures for more efficient representation. We also think oscillators provide a viewpoint beyond time series. The same physical viewpoint allows us to embed oscillators inside large language models, using frequency, damping, and forcing terms to model how meaning vibrates across semantic dimensions and providing a new class of physically grounded representations for LLMs.

\newpage

\bibliography{iclr2026_conference}
\bibliographystyle{iclr2026_conference}

\appendix

\section{Harmonic Oscillator Based Modelling}\label{app-sec:modelling}

As discussed earlier, we model the NODEs that govern keys and values in ContiFormer as \emph{linear damped driven harmonic oscillators}.  

Keys are the solution of $\ddot{k}(t)+2\gamma\dot{k}(t)+\omega^{2}k(t)=F^{k}(t)$ where $\gamma\ge 0$ is the learnable damping coefficient, $\omega>0$ the learnable natural frequency, and $F^{k}(t)$ is the driving force. Likewise, values obey the same structure: $\ddot{v}(t)+2\gamma_{v}\dot{v}(t)+\omega_{v}^{2}v(t)=F^{v}(t)$ with independent learnable parameters $\gamma_{v},\omega_{v}$ and value-intrinsic drive $F^{v}(t)$.

The following damped driven oscillators are governed by the second‑order ODE
\begin{equation}
  \ddot{x} + 2\gamma \dot{x} + \omega^{2}x \;=\; F(t).
\end{equation}

To convert this to a first‑order ODE like the ones above governing the keys and values, introduce the augmented state vector
\[
  z = \begin{bmatrix} x \\ p \end{bmatrix},\qquad p=\dfrac{dx}{dt}.
\]
Using this, the second‑order ODE can be written in matrix form as
\begin{equation}
  \frac{dz}{dt} \;=\; 
  \underbrace{\begin{bmatrix} 0 & 1 \\[2pt] -\omega^2 & -2\gamma \end{bmatrix}}_{A} z
  \;+\;
  \underbrace{\begin{bmatrix} 0 \\[2pt] F(t) \end{bmatrix}}_{B(t)} .
\end{equation}

The solution to this can be found using the variation of parameters method. We start with the following.
\begin{equation}
  \frac{dz}{dt} = A z + B(t).
\end{equation}
The homogeneous version is
\[
  \frac{dz_h}{dt} = A z_h \;\;\Rightarrow\;\; z_h(t) = Ce^{At} \quad \text{for some constant vector $C$.}
\]
To find a particular solution, try
\[
  z_p(t) = u(t)\,e^{At} \qquad \text{(variation of parameters; let the constant become a function $u(t)$).}
\]
Then
\[
  \frac{dz_p}{dt} = \frac{d}{dt}\left(u(t)e^{At}\right) 
  = A e^{At}u(t) + e^{At}\frac{du}{dt}.
\]
Plugging into the original ODE,
\[
  \frac{dz_p}{dt} = A z_p + B(t) 
  \;\;\Rightarrow\;\; 
  A e^{At}u(t) + e^{At}\frac{du}{dt} = A e^{At}u(t) + B(t),
\]
hence
\[
  e^{At}\frac{du}{dt} = B(t) 
  \;\;\Rightarrow\;\; 
  \frac{du}{dt} = e^{-At}B(t).
\]
Therefore
\[
  u(t) = \int_{0}^{t} e^{-A\tau} B(\tau)\,d\tau + u(0),
\]
and
\[
  z_p(t) = e^{At}u(t) 
  = e^{At}\!\left(\int_{0}^{t} e^{-A\tau} B(\tau)\,d\tau + u(0)\right)
  = e^{At}\!\int_{0}^{t} e^{-A\tau} B(\tau)\,d\tau + e^{At}u(0).
\]
We can set $u(0)=0$ without loss of generality, giving
\begin{equation}
  z_p(t) = e^{At}\!\int_{0}^{t} e^{-A\tau} B(\tau)\,d\tau.
\end{equation}

To solve this further we change variables: Let $\tau = t-s \Rightarrow d\tau = -ds$.  
When $\tau=0 \Rightarrow s=t$, and when $\tau=t \Rightarrow s=0$. Then
\begin{align*}
  \int_{0}^{t} e^{-A\tau} B(\tau)\,d\tau
  &= \int_{s=t}^{s=0} e^{-A(t-s)} B(t-s)\,(-ds)
   = \int_{s=0}^{s=t} e^{-A(t-s)} B(t-s)\,ds \\
  &= \int_{0}^{t} e^{-At}\,e^{As} B(t-s)\,ds.
\end{align*}
Hence
\[
  z_p(t) \;=\; e^{At}e^{-At}\int_{0}^{t} e^{As} B(t-s)\,ds 
  \;=\; \int_{0}^{t} e^{A(t-s)} B(s)\,ds,
\]
where in the last step we renamed the dummy variable. Thus, the general solution for any $t_0$,
\begin{equation}
{
  z(t) \;=\; e^{A(t-t_0)}z(t_0) \;+\; \int_{t_0}^{t} e^{A(t-s)} B(s)\,ds,
  }
\end{equation}
with the first term $z_h(t)$ (homogeneous) and the second term $z_p(t)$ (particular).

\subsection{Homogeneous solution $z_h(t)=e^{At}z_0$ by cases}
\label{sec:homogeneous}

We will find $z_h(t)$ and $z_p(t)$ separately. Consider three cases:
\begin{enumerate}
    \item $\text{(1) Underdamped: } \gamma^2 < \omega^2 \quad(\gamma<\omega)$
    \item $\text{(2) Critically damped: } \gamma^2=\omega^2 \quad(\gamma=\omega)$
    \item $\text{(3) Overdamped: } \gamma^2>\omega^2 \quad(\gamma>\omega)$
\end{enumerate}

\paragraph{Eigenvalues of $A$:}
\[
  \det(A-\lambda I)=
  \begin{vmatrix}
    -\lambda & 1\\[2pt]
    -\omega^2 & -2\gamma-\lambda
  \end{vmatrix}
  = (-\lambda)(-2\gamma-\lambda)+\omega^2
  = \lambda^2 + 2\gamma\lambda + \omega^2 = 0,
\]
so
\[
  \lambda_{1,2} = -\gamma \pm \sqrt{\gamma^2-\omega^2}.
\]

\subsubsection{Case I: $\gamma<\omega$ (underdamped)}

Let $\omega_d = \sqrt{\omega^2-\gamma^2}$, then $\lambda_{1,2}=-\gamma \pm i\omega_d$.

\textbf{Eigenvectors.} For $\lambda_1=-\gamma+i\omega_d$,
\[
  (A-\lambda_1 I)=
  \begin{bmatrix}
    \gamma-i\omega_d & 1\\[2pt]
    -\omega^2 & -\gamma-i\omega_d
  \end{bmatrix}
\]

\[
\implies (\gamma-i\omega_d)x+y=0,\;\; -\omega^2x+(-\gamma-i\omega_d)y=0
\]
so one eigenvector is
\[
  v_1=\begin{bmatrix} 1 \\[2pt] -\gamma+i\omega_d \end{bmatrix}.
\]
For $\lambda_2=-\gamma-i\omega_d$,
\[
  v_2=\begin{bmatrix} 1 \\[2pt] -\gamma-i\omega_d \end{bmatrix}.
\]

Collect the eigenvectors in
\[
  V=\begin{bmatrix}
    1 & 1\\[2pt]
    -\gamma+i\omega_d & -\gamma-i\omega_d
  \end{bmatrix}.
\]

\emph{The matrix $V$ is complex but the state is real; since $v_2=\overline{v_1}$ we can form a real basis from $\Re(v_1)$ and $\Im(v_1)$:}
\[
  \Re(v_1)=\begin{bmatrix} 1\\ -\gamma \end{bmatrix},\quad
  \Im(v_1)=\begin{bmatrix} 0\\ \omega_d \end{bmatrix}
  \;\Rightarrow\;
  V_{\mathbb R}=\begin{bmatrix} 1 & 0\\[2pt] -\gamma & \omega_d \end{bmatrix},\qquad
  V_{\mathbb R}^{-1}=\frac{1}{\omega_d}\begin{bmatrix} \omega_d & 0\\[2pt] \gamma & 1 \end{bmatrix}.
\]

In this real basis,
\[
  A \;\sim\; V_{\mathbb R}^{-1} A V_{\mathbb R} \;=\;
  \begin{bmatrix} -\gamma & \omega_d \\[2pt] -\omega_d & -\gamma \end{bmatrix}
  \;=\; -\gamma I + B,
  \qquad 
  B=\begin{bmatrix} 0 & \omega_d\\[2pt] -\omega_d & 0 \end{bmatrix}.
\]

Since $I$ and $B$ commute,
\[
  \exp\!\big((-\gamma I+B)t\big)=e^{-\gamma t}\exp(Bt).
\]
To find $\exp(Bt)$, we compute successive powers of $B$:
\[
  B^2=-\omega_d^2 I,\quad
  B^3=-\omega_d^2 B,\quad
  B^4=\omega_d^4 I,\quad
  \Rightarrow\quad
  B^{2k}=(-1)^k\omega_d^{2k}I,\;\;
  B^{2k+1}=(-1)^k\omega_d^{2k}B.
\]
Therefore the matrix exponential series is:

\begin{align*}
  \exp(Bt)
  &=\sum_{n=0}^\infty \frac{(Bt)^n}{n!}
   = \sum_{k=0}^\infty \frac{B^{2k} t^{2k}}{(2k)!}
     + \sum_{k=0}^\infty \frac{B^{2k+1} t^{2k+1}}{(2k+1)!} \\\\
  &= I \sum_{k=0}^\infty \frac{(-1)^k(\omega_dt)^{2k}}{(2k)!}
   + \frac{B}{\omega_d}\sum_{k=0}^\infty \frac{(-1)^k(\omega_dt)^{2k+1}}{(2k+1)!} \\\\
  &= I\cos(\omega_dt) + \frac{B}{\omega_d}\sin(\omega_dt)
   \;=\;
  \begin{bmatrix}
    \cos(\omega_dt) & \sin(\omega_dt)\\[2pt]
    -\sin(\omega_dt) & \cos(\omega_dt)
  \end{bmatrix}.
\end{align*}
Thus
\[
  e^{At}
  = V_{\mathbb R}\, e^{-\gamma t}
    \begin{bmatrix}
      \cos(\omega_dt) & \sin(\omega_dt)\\[2pt]
      -\sin(\omega_dt) & \cos(\omega_dt)
    \end{bmatrix}
    V_{\mathbb R}^{-1}.
\]
Multiplying out gives the standard real form
\begin{equation}
  \boxed{\;
  e^{At}
  = e^{-\gamma t}
    \begin{bmatrix}
      \displaystyle \cos(\omega_dt)+\frac{\gamma}{\omega_d}\sin(\omega_dt) & \displaystyle \frac{\sin(\omega_dt)}{\omega_d}\\[10pt]
      \displaystyle -\frac{\omega^2}{\omega_d}\sin(\omega_dt) & \displaystyle \cos(\omega_dt)-\frac{\gamma}{\omega_d}\sin(\omega_dt)
    \end{bmatrix}.
  \;}
  \label{eq:eAt-underdamped}
\end{equation}
Hence, for the homogeneous motion,
\[
  z_h(t)=e^{At}z_0
  = e^{-\gamma t}
    \begin{bmatrix}
      \cos(\omega_dt)+\dfrac{\gamma}{\omega_d}\sin(\omega_dt) & \dfrac{\sin(\omega_dt)}{\omega_d}\\[8pt]
      -\dfrac{\omega^2}{\omega_d}\sin(\omega_dt) & \cos(\omega_dt)-\dfrac{\gamma}{\omega_d}\sin(\omega_dt)
    \end{bmatrix}
    z_0.
\]

\subsubsection{Case II: $\gamma=\omega$ (critical damping) — Jordan form }
Here $\lambda_{1,2}=-\gamma$ (repeated eigenvalue). Algebraic multiplicity $2$, geometric multiplicity $1$, so we need a Jordan block.

Eigenvector $v_1$ satisfies
\[
  (A-\lambda I)v_1=(A+\gamma I)v_1=0, \quad
  (A+\gamma I)=\begin{bmatrix}\gamma & 1\\[2pt] -\omega^2 & -\gamma\end{bmatrix}
  =\begin{bmatrix}\gamma & 1\\[2pt] -\gamma^2 & -\gamma\end{bmatrix}
  \Rightarrow
  v_1=\begin{bmatrix}1\\ -\gamma\end{bmatrix}.
\]
For the generalized eigenvector $v_2$, we solve
\[
  (A-\lambda I)v_2=v_1
  \;\;\Leftrightarrow\;\;
  (A+\gamma I)v_2=v_1
  \Rightarrow
  \begin{bmatrix}\gamma & 1\\[2pt] -\gamma^2 & -\gamma\end{bmatrix}
  \begin{bmatrix}v_{2,1}\\ v_{2,2}\end{bmatrix}
  =
  \begin{bmatrix}1\\ -\gamma\end{bmatrix}.
\]
From the first equation, $\gamma v_{2,1}+v_{2,2}=1$. Choose $v_{2,1}=0\Rightarrow v_{2,2}=1$; hence
\[
  v_2=\begin{bmatrix}0\\ 1\end{bmatrix}.
\]
Let
\[
  P=[v_1\;v_2]
  =\begin{bmatrix}1 & 0\\ -\gamma & 1\end{bmatrix},
  \qquad
  P^{-1}=\begin{bmatrix}1 & 0\\ \gamma & 1\end{bmatrix}.
\]
Jordan normal form:
\[
  J=P^{-1}AP=\begin{bmatrix}-\gamma & 1\\ 0 & -\gamma\end{bmatrix}
  \quad\text{(a $2\times2$ Jordan block with }\lambda=-\gamma\text{)}.
\]
For a Jordan block,
\[
  e^{Jt}=e^{\lambda t}\begin{bmatrix}1 & t\\ 0 & 1\end{bmatrix}
  = e^{-\gamma t}\begin{bmatrix}1 & t\\ 0 & 1\end{bmatrix}.
\]
Therefore
\begin{align*}
  e^{At} &= P\,e^{Jt}\,P^{-1}
  = e^{-\gamma t}
    \begin{bmatrix}1 & 0\\ -\gamma & 1\end{bmatrix}
    \begin{bmatrix}1 & t\\ 0 & 1\end{bmatrix}
    \begin{bmatrix}1 & 0\\ \gamma & 1\end{bmatrix}
\\[2pt]
  &= e^{-\gamma t}
    \begin{bmatrix}
      1+\gamma t & t\\
      -\gamma^2 t & 1-\gamma t
    \end{bmatrix}.
\end{align*}
Thus, for the homogeneous motion in the critically–damped case,
\[
  z_h(t)=e^{At}z_0
  = e^{-\gamma t}
    \begin{bmatrix}
      1+\gamma t & t\\
      -\gamma^2 t & 1-\gamma t
    \end{bmatrix} z_0.
\]

\subsubsection{Case III: $\gamma >\omega$ (overdamped)}
Real, distinct eigenvalues $\lambda_{1,2}=-\gamma\pm\sqrt{\gamma^2-\omega^2}$ = $-\gamma \pm \sigma,$ where $\sigma = \sqrt{\gamma^2 - \omega^2}$\\
Let us find the two eigenvectors.\\
For $\lambda_1 = -\gamma + \sigma$:
\[
(A -\lambda_1I)v_1 = \begin{bmatrix}
    \gamma-\sigma & 1\\
    -\omega^2 & -\gamma -\sigma
\end{bmatrix}
\begin{bmatrix}
    v_{1,1}\\
    v_{1,2}
\end{bmatrix} = 0
\]
From the first equation, $(\gamma-\sigma)v_{1,1} + v_{1,2} =0$:
\[
\implies v_1 = \begin{bmatrix}
    1\\
    -\gamma + \sigma
\end{bmatrix}
\]
Similarly, for $\lambda_2 = -\gamma - \sigma$:
\[
v_2 = \begin{bmatrix}
    1\\
    -\gamma-\sigma
\end{bmatrix}
\]
Let
\[
P = \begin{bmatrix}
    v_1 & v_2
\end{bmatrix}
= \begin{bmatrix}
    1 & 1 \\
    -\gamma + \sigma & -\gamma -\sigma
\end{bmatrix}, \qquad
P^{-1} = \frac{-1}{2\sigma}\begin{bmatrix}
    -\gamma-\sigma & -1\\
    \gamma-\sigma & 1
\end{bmatrix}
\]
Finally, 
\[
e^{At} = P\begin{bmatrix}
    e^{\lambda_1 t} & 0\\
    0 & e^{\lambda_2 t}
\end{bmatrix}
P^{-1}
\]

\[
\implies e^{At}= \begin{bmatrix}
    1 & 1 \\
    -\gamma + \sigma & -\gamma -\sigma
\end{bmatrix}
\begin{bmatrix}
    e^{(-\gamma+\sigma) t} & 0\\
    0 & e^{(-\gamma-\sigma) t}
\end{bmatrix}
\begin{bmatrix}
    \frac{\gamma +\sigma}{2\sigma} & \frac{1}{2\sigma}\\
    \frac{-\gamma + \sigma}{2\sigma} & \frac{-1}{2\sigma}
\end{bmatrix}
\]
Using,
\begin{equation}
    \begin{aligned}
        \cosh(\sigma t) = \frac{e^{\sigma t} + e^{-\sigma t}}{2}\\ \nonumber
        \sinh (\sigma t) = \frac{e^{\sigma t} - e^{-\sigma t}}{2}\\ \nonumber
    \end{aligned}
\end{equation}

We get the homogeneous motion in the overdamped case,
\[
  z_h(t)=e^{At}z_0
  = e^{-\gamma t}
    \begin{bmatrix}
      \cosh (\sigma t) + \frac{\gamma}{\sigma}\sinh(\sigma t) & \frac{\sinh (\sigma t)}{\sigma}\\
      -\frac{\omega^2}{\sigma} \sinh(\sigma t) & \cosh (\sigma t) - \frac{\gamma}{\sigma} \sinh (\sigma t)
    \end{bmatrix} z_0.
\]

\subsection{PARTICULAR SOLUTION $z_p(t) =\int_{t_0}^{t} e^{A(t-s)} B(s)\,ds$ BY CASES}\label{subsec:particular}

Now we calculate the particular solution for the three damping cases. For the forced system
\begin{align*}
\dot z(t)=A z(t)+B f(t),
\end{align*}
the solution is
\begin{align}
z(t)=e^{At} z_0+\int_{0}^{t}e^{A(t-s)}B f(s)\,\dd s .
\label{eq:forced_solution}
\end{align}

We define the (matrix) Green's function
\begin{align}
G(t,s)=e^{A(t-s)}B.
\end{align}
The particular solution is then the convolution
\begin{align}
z_p(t)=\int_{0}^{t}G(t,s)\,f(s)\,\dd s.
\end{align}
For our system
\begin{align*}
A=\begin{bmatrix}0&1\\ -\omega^2 & -2\gamma\end{bmatrix},\qquad B=\vect{0\\1},
\end{align*}
we have
\begin{align}
G(t,s)=e^{A(t-s)}\vect{0\\1}.
\end{align}

Let
\begin{align*}
z(t)=\vect{x(t)\\\dot x(t)},\qquad \dot z(t)=A z(t)+B f(t).
\end{align*}
From \eqref{eq:forced_solution},
\begin{align}
z_p(t)=\int_{0}^{t}e^{A(t-s)}\,\vect{0\\F(s)}\,\dd s
=\int_{0}^{t}e^{A(t-s)}\,\vect{0\\\alpha\,f(s)}\,\dd s,
\label{eq:zpt_integral}
\end{align}
where the driving force is given by $F(s)=\alpha\,f(s)$, with
\begin{align}
f(s)=\sum_{j=1}^{J}\Bigl(A_j\cos(\omega_j s)+B_j\sin(\omega_j s)\Bigr).
\end{align}
By linearity, we can compute the response to each mode separately and then sum.

Starting from \eqref{eq:zpt_integral} and letting $\tau=t-s$ (so $s=t-\tau$, $\dd s=-\dd\tau$),
\begin{align}
z_p(t)
&=\int_{t}^{0}e^{A\tau}\,\vect{0\\\alpha\,f(t-\tau)}\,(-\dd\tau)
=\int_{0}^{t}e^{A\tau}\,\vect{0\\\alpha\,f(t-\tau)}\,\dd\tau.
\end{align}
Write
\begin{align}
e^{A\tau}=\begin{bmatrix} g_{11}(\tau) & g_{12}(\tau)\\ g_{21}(\tau) & g_{22}(\tau)\end{bmatrix}.
\end{align}
Since the forcing appears only in the second component,
\begin{align}
z_p(t)=\int_{0}^{t}\vect{ g_{12}(\tau)\,\alpha\,f(t-\tau)\\ g_{22}(\tau)\,\alpha\,f(t-\tau)}\,\dd\tau.
\end{align}

Because $z_p(t)=\vect{x_p(t)\\\dot x_p(t)}$ and we are only concerned with $x(t)$,
\begin{align}
x_p(t)=\int_{0}^{t} g_{12}(\tau)\,\alpha\,f(t-\tau)\,\dd\tau.
\end{align}
Take $f(s)=\cos(\omega_j s)\Rightarrow f(t-\tau)=\cos\!\bigl(\omega_j(t-\tau)\bigr)$.

\medskip
\subsubsection{Case I: $\gamma<\omega$ (Underdamped)}
From the homogeneous analysis,
\begin{align}
g_{12}(\tau)=e^{-\gamma \tau}\,\frac{\sin(\omega_d \tau)}{\omega_d},
\qquad \omega_d\coloneqq\sqrt{\omega^2-\gamma^2}.
\end{align}
Hence
\begin{align}
x_p(t)=\alpha\int_{0}^{t} e^{-\gamma \tau}\frac{\sin(\omega_d \tau)}{\omega_d}\,
\cos\!\bigl(\omega_j(t-\tau)\bigr)\,\dd\tau.
\end{align}
Using $\cos(a-b)=\cos a\cos b+\sin a\sin b$,
\begin{align}
x_p(t)=\frac{\alpha}{\omega_d}\Bigl[\cos(\omega_j t)\,I_1+\sin(\omega_j t)\,I_2\Bigr],
\end{align}
where
\begin{align}
I_1&=\int_{0}^{t} e^{-\gamma \tau}\sin(\omega_d \tau)\cos(\omega_j \tau)\,\dd\tau,\\
I_2&=\int_{0}^{t} e^{-\gamma \tau}\sin(\omega_d \tau)\sin(\omega_j \tau)\,\dd\tau.
\end{align}

Using
\begin{align*}
\sin a\cos b=\tfrac12\bigl[\sin(a+b)+\sin(a-b)\bigr],\qquad
\sin a\sin b=\tfrac12\bigl[\cos(a-b)-\cos(a+b)\bigr],
\end{align*}
we obtain
\begin{align}
I_1&=\frac12\int_{0}^{t}e^{-\gamma \tau}
\Bigl[\sin\bigl((\omega_d+\omega_j)\tau\bigr)+\sin\bigl((\omega_d-\omega_j)\tau\bigr)\Bigr]\dd\tau,\\
I_2&=\frac12\int_{0}^{t}e^{-\gamma \tau}
\Bigl[\cos\bigl((\omega_d-\omega_j)\tau\bigr)-\cos\bigl((\omega_d+\omega_j)\tau\bigr)\Bigr]\dd\tau.
\end{align}
Let $\lambda_{+}\coloneqq\omega_d+\omega_j$ and $\lambda_{-}\coloneqq\omega_d-\omega_j$.
Using
\begin{align*}
\int e^{-\gamma \tau}\sin(\lambda \tau)\,\dd\tau
=\frac{e^{-\gamma \tau}}{\gamma^2+\lambda^2}\Bigl(-\gamma\sin(\lambda \tau)-\lambda\cos(\lambda \tau)\Bigr),
\end{align*}
and evaluating from $0$ to $t$ gives
\begin{align}
I_1=\frac12\sum_{\lambda\in\{\lambda_{+},\lambda_{-}\}}
\frac{1}{\gamma^2+\lambda^2}
\Bigl[-\gamma\bigl(e^{-\gamma t}\sin(\lambda t)-0\bigr)
-\lambda\bigl(e^{-\gamma t}\cos(\lambda t)-1\bigr)\Bigr].
\label{eq:I1_final}
\end{align}

Similarly, using
\begin{align*}
\int e^{-\gamma \tau}\cos(\lambda \tau)\,\dd\tau
=\frac{e^{-\gamma \tau}}{\gamma^2+\lambda^2}\Bigl(-\gamma\cos(\lambda \tau)+\lambda\sin(\lambda \tau)\Bigr),
\end{align*}
we obtain
\begin{align}
I_2
= \frac12\sum_{\lambda\in\{\lambda_{+},\lambda_{-}\}}
\frac{1}{\gamma^2+\lambda^2}
\Bigl[\bigl(-\gamma e^{-\gamma t} \cos(\lambda t)
+\lambda e^{-\gamma t}\sin(\lambda t)+ \gamma\bigr)\Bigr].
\label{eq:I2_final}
\end{align}

\noindent
Therefore the particular solution for Case~I may be written compactly as
\begin{align}
x_p(t)=\frac{\alpha}{\omega_d}\Bigl[\cos(\omega_j t)\,I_1+\sin(\omega_j t)\,I_2\Bigr],
\qquad I_1\ \text{as in \eqref{eq:I1_final}},\ I_2\ \text{as in \eqref{eq:I2_final}}.
\end{align}

\subsubsection{Case II: $\gamma=\omega$ (Critically Damped)} 
Here

\begin{align}
g_{12}(\tau)=e^{-\gamma \tau}\,\tau,\qquad
f(t-\tau)=\cos\!\bigl(\omega_j(t-\tau)\bigr),
\end{align}
so
\begin{align}
x_p(t)=\alpha\int_{0}^{t} e^{-\gamma \tau}\,\tau\,\cos\!\bigl(\omega_j(t-\tau)\bigr)\,\dd\tau,
\end{align}
which can also be evaluated in closed form.

\subsubsection{Case III: $\gamma>\omega$ (Overdamped)} 

Write $\sigma=\sqrt{\gamma^2-\omega^2}$. Then
\begin{align}
g_{12}(\tau)=e^{-\gamma \tau}\,\frac{\sinh(\sigma \tau)}{\sigma},\qquad
f(t-\tau)=\cos\!\bigl(\omega_j(t-\tau)\bigr),
\end{align}
and
\begin{align}
x_p(t)=\alpha\int_{0}^{t} e^{-\gamma \tau}\,\frac{\sinh(\sigma \tau)}{\sigma}\,
\cos\!\bigl(\omega_j(t-\tau)\bigr)\,\dd\tau,
\end{align}
which likewise admits a closed form.

\subsection{Steady-state solution for the driven, damped oscillator}\label{subsec:steady}

Consider the scalar ODE
\begin{align}
\ddot x+2\gamma \dot x+\omega_0^2 x
=\alpha\sum_{j=1}^{J}\Bigl(A_j\cos(\omega_j t)+B_j\sin(\omega_j t)\Bigr).
\end{align}
We seek the steady-state particular solution $x_{p,\mathrm{ss}}(t)$.
For a single forcing component $\alpha\,[A_j\cos(\omega_j t)+B_j\sin(\omega_j t)]$, assume
\begin{align}
x_{p j}(t)=C_j\cos(\omega_j t)+D_j\sin(\omega_j t).
\end{align}
Then
\begin{align*}
\dot x_{p j}(t) &= -C_j\omega_j \sin(\omega_j t)+D_j\omega_j\cos(\omega_j t),\\
\ddot x_{p j}(t) &= -C_j\omega_j^2\cos(\omega_j t)-D_j\omega_j^2\sin(\omega_j t).
\end{align*}
Substituting gives
\begin{align*}
\bigl[-C_j\omega_j^2\cos(\omega_j t)-& D_j\omega_j^2\sin(\omega_j t)\bigr]
+2\gamma\bigl[-C_j\omega_j\sin(\omega_j t)+D_j\omega_j\cos(\omega_j t)\bigr]
+\\ &\omega_0^2\bigl[C_j\cos(\omega_j t)+D_j\sin(\omega_j t)\bigr]
=\alpha\bigl[A_j\cos(\omega_j t)+B_j\sin(\omega_j t)\bigr].
\end{align*}
Collecting coefficients of $\cos(\omega_j t)$ and $\sin(\omega_j t)$ yields the linear system
\begin{align}
\begin{bmatrix}
\omega_0^2-\omega_j^2 & 2\gamma\omega_j\\[2pt]
-2\gamma\omega_j & \omega_0^2-\omega_j^2
\end{bmatrix}
\begin{bmatrix} C_j\\ D_j\end{bmatrix}
=\alpha\begin{bmatrix} A_j\\ B_j\end{bmatrix}.
\end{align}
(One can solve for $C_j,D_j$ in closed form if desired.)

Collecting the $\cos(\omega_j t)$ terms gives
\begin{equation}
 C_j\bigl(\omega_0^2-\omega_j^2\bigr)+2\gamma\omega_j D_j=\alpha A_j.
\end{equation}

Collecting the $\sin(\omega_j t)$ terms gives
\begin{equation}
 -D_j\omega_j^2-2\gamma C_j\omega_j+\omega_0^2 D_j=\alpha B_j
 \quad\Longrightarrow\quad
 D_j\bigl(\omega_0^2-\omega_j^2\bigr)-2\gamma\omega_j C_j=\alpha B_j.
\end{equation}

Therefore, we have the linear system
\begin{equation}
\begin{bmatrix}
\omega_0^2-\omega_j^2 & 2\gamma\omega_j \\
-2\gamma\omega_j      & \omega_0^2-\omega_j^2
\end{bmatrix}
\begin{bmatrix}
C_j\\[2pt] D_j
\end{bmatrix}
=\;
\alpha\,
\begin{bmatrix}
A_j\\[2pt] B_j
\end{bmatrix}.
\end{equation}

Its determinant is
\begin{equation}
\det = \bigl(\omega_0^2-\omega_j^2\bigr)^2 + \bigl(2\gamma\omega_j\bigr)^2.
\end{equation}

Using Cramer's rule,
\begin{align}
 C_j &=
 \alpha\,
 \frac{A_j\bigl(\omega_0^2-\omega_j^2\bigr) - B_j\bigl(2\gamma\omega_j\bigr)}
      {\bigl(\omega_0^2-\omega_j^2\bigr)^2 + \bigl(2\gamma\omega_j\bigr)^2},\\[6pt]
 D_j &=
 \alpha\,
 \frac{B_j\bigl(\omega_0^2-\omega_j^2\bigr) + A_j\bigl(2\gamma\omega_j\bigr)}
      {\bigl(\omega_0^2-\omega_j^2\bigr)^2 + \bigl(2\gamma\omega_j\bigr)^2}.
\end{align}

Hence
\begin{align}
x_{p,j}(t)=
\frac{\alpha}{\bigl(\omega_0^2-\omega_j^2\bigr)^2 + \bigl(2\gamma\omega_j\bigr)^2}
\Big(
\big[A_j(\omega_0^2-\omega_j^2)-& B_j(2\gamma\omega_j)\big] \cos(\omega_j t)
+\nonumber \\
& \big[B_j(\omega_0^2-\omega_j^2)+A_j(2\gamma\omega_j)\big]\sin(\omega_j t)
\Big).
\end{align}

By superposition, the complete steady--state solution is
\begin{equation}
 x_{p,\mathrm{ss}}(t)=\sum_{j=1}^{J} x_{p,j}(t).
\end{equation}

Equivalently, written out explicitly,
\begin{align}
x_{p,\mathrm{ss}}(t)
= \alpha \sum_{j=1}^{J}
\frac{1}{\bigl(\omega_0^2-\omega_j^2\bigr)^2 + \bigl(2\gamma\omega_j\bigr)^2}
\Big(
\big[A_j(\omega_0^2-&\omega_j^2)-B_j(2\gamma\omega_j)\big]\cos(\omega_j t) +
\nonumber\\
&\big[B_j(\omega_0^2-\omega_j^2)+A_j(2\gamma\omega_j)\big]\sin(\omega_j t)
\Big).
\end{align}

\subsection{Query function}

For the query, we expand the interpolation function in the oscillator basis up to a suitable number of modes and obtain the coefficients $A_k$, $B_k$ by a least-squares fit. This circumvents the absence of a closed-form solution for the integral of the original cubic spline. 
\begin{align}
q(t)=\sum_{k=1}^{N}\Bigl(A_k\cos(\omega_k t)+B_k\sin(\omega_k t)\Bigr).
\label{eq:query}
\end{align}

\subsection{Attention Integral}\label{app-subsec:attention-integral}

We compute the averaged attention coefficient
\[
\alpha_i(t)\;=\;\frac{1}{\Delta}\int_{t_i}^{t} \langle q(\tau), k_i(\tau)\rangle\,d\tau,
\qquad \Delta:=t-t_i>0,
\]
when the (vector) key coordinates obey a \emph{driven} damped oscillator, anchored at $t_i$ with zero particular state. The total key is $k_i=k_{i,\mathrm{hom}}+k_{i,\mathrm{part}}+c_i$, where the homogeneous part $k_{i,\mathrm{hom}}$ was derived in section \ref{sec:homogeneous}, and here we add the driven part $k_{i,\mathrm{part}}$. All expressions act coordinate-wise and we keep vector inner products to avoid clutter.

\subsubsection{Mathematical Framework}

\paragraph{Query expansion and rotation to anchor.}
We fix $i$ and expand the $d_k$-vector query:
\begin{equation}
q(\tau)=\sum_{j=1}^{J}\Big(A_j\cos(\omega_j \tau)+B_j\sin(\omega_j \tau)\Big),
\qquad A_j,B_j\in\mathbb{R}^{d_k},\;\omega_j>0.
\label{eq:q-expansion}
\end{equation}
With $s:=\tau-t_i\in[0,\Delta]$, the rotated coefficients
\begin{equation}
\tilde A_j:=A_j\cos(\omega_j t_i)+B_j\sin(\omega_j t_i),\qquad
\tilde B_j:=-A_j\sin(\omega_j t_i)+B_j\cos(\omega_j t_i),
\label{eq:q-rotation}
\end{equation}
give the anchor-shifted query
\begin{equation}
q(t_i+s)=\sum_{j=1}^{J}\Big(\tilde A_j\cos(\omega_j s)+\tilde B_j\sin(\omega_j s)\Big).
\label{eq:q-rotated}
\end{equation}

\paragraph{Exponential-trigonometric kernels.}
For $\gamma\ge 0$, $\lambda\in\mathbb{R}$, $\Delta>0$, define
\begin{align}
C_{\gamma}(\Delta,\lambda)
&:=\int_{0}^{\Delta}\! e^{-\gamma s}\cos(\lambda s)\,ds
=\frac{e^{-\gamma\Delta}\big(-\gamma\cos(\lambda\Delta)+\lambda\sin(\lambda\Delta)\big)+\gamma}{\gamma^2+\lambda^2},\label{eq:Cgamma}\\
S_{\gamma}(\Delta,\lambda)
&:=\int_{0}^{\Delta}\! e^{-\gamma s}\sin(\lambda s)\,ds
=\frac{e^{-\gamma\Delta}\big(-\gamma\sin(\lambda\Delta)-\lambda\cos(\lambda\Delta)\big)+\lambda}{\gamma^2+\lambda^2}.\label{eq:Sgamma}
\end{align}
Their $\lambda\to 0$ limits are $C_\gamma(\Delta,0)=(1-e^{-\gamma\Delta})/\gamma$ (or $\Delta$ if $\gamma=0$) and $S_\gamma(\Delta,0)=0$.

For products of trigonometric functions with exponential damping, we use
\begin{align}
I_{cc}(\Delta;\gamma,\lambda_1,\lambda_2)
&:=\int_{0}^{\Delta}\! e^{-\gamma s}\cos(\lambda_1 s)\cos(\lambda_2 s)\,ds
=\tfrac12\!\left[C_{\gamma}(\Delta,\lambda_1-\lambda_2)+C_{\gamma}(\Delta,\lambda_1+\lambda_2)\right],\label{eq:Icc}\\
I_{ss}(\Delta;\gamma,\lambda_1,\lambda_2)
&:=\int_{0}^{\Delta}\! e^{-\gamma s}\sin(\lambda_1 s)\sin(\lambda_2 s)\,ds
=\tfrac12\!\left[C_{\gamma}(\Delta,\lambda_1-\lambda_2)-C_{\gamma}(\Delta,\lambda_1+\lambda_2)\right],\label{eq:Iss}\\
I_{sc}(\Delta;\gamma,\lambda_1,\lambda_2)
&:=\int_{0}^{\Delta}\! e^{-\gamma s}\sin(\lambda_1 s)\cos(\lambda_2 s)\,ds
=\tfrac12\!\left[S_{\gamma}(\Delta,\lambda_1+\lambda_2)+S_{\gamma}(\Delta,\lambda_1-\lambda_2)\right],\label{eq:Isc}\\
I_{cs}(\Delta;\gamma,\lambda_1,\lambda_2)
&:=\int_{0}^{\Delta}\! e^{-\gamma s}\cos(\lambda_1 s)\sin(\lambda_2 s)\,ds
=\tfrac12\!\left[S_{\gamma}(\Delta,\lambda_1+\lambda_2)-S_{\gamma}(\Delta,\lambda_1-\lambda_2)\right].\label{eq:Ics}
\end{align}

For undamped integrals (when $\gamma=0$), we recover the standard trigonometric identities. For $a,b>0$ and $a\neq b$:
\begin{align}
I_{cc}(\Delta;0,a,b)
&=\frac{\sin\big((a-b)\Delta\big)}{2(a-b)}+\frac{\sin\big((a+b)\Delta\big)}{2(a+b)},\label{eq:Icc-undamped}\\
I_{ss}(\Delta;0,a,b)
&=\frac{\sin\big((a-b)\Delta\big)}{2(a-b)}-\frac{\sin\big((a+b)\Delta\big)}{2(a+b)},\label{eq:Iss-undamped}\\
I_{sc}(\Delta;0,a,b)
&=\frac{1-\cos\big((a+b)\Delta\big)}{2(a+b)}+\frac{1-\cos\big((a-b)\Delta\big)}{2(a-b)},\label{eq:Isc-undamped}\\
I_{cs}(\Delta;0,a,b)
&=\frac{1-\cos\big((a+b)\Delta\big)}{2(a+b)}+\frac{1-\cos\big((b-a)\Delta\big)}{2(b-a)}.\label{eq:Ics-undamped}
\end{align}
Note that $I_{cs}(\Delta;0,a,b) = I_{sc}(\Delta;0,b,a)$ (frequencies swapped). For $a=b$, we use the continuous limits: $I_{cc}(\Delta;0,a,a)=\tfrac{\Delta}{2}+\tfrac{\sin(2a\Delta)}{4a}$, $I_{ss}(\Delta;0,a,a)=\tfrac{\Delta}{2}-\tfrac{\sin(2a\Delta)}{4a}$, and $I_{sc}(\Delta;0,a,a) = I_{cs}(\Delta;0,a,a)=\tfrac{1-\cos(2a\Delta)}{4a}$.

\subsubsection{Driven Oscillator: Steady-State Solution}

Consider the vector ODE
\begin{equation}
\ddot{x} + 2\gamma \dot{x} + \omega_0^2 x \;=\; f(t),\qquad t\ge t_i,
\label{eq:driven-ode}
\end{equation}
with vector forcing expanded in harmonics
\begin{equation}
f_i(t)=\sum_{m=1}^{M_f}\big(P_{i,m}\cos(\varpi_m t)+Q_{i,m}\sin(\varpi_m t)\big),
\qquad P_{i,m},Q_{i,m}\in\mathbb{R}^{d_k},\; \varpi_m>0.
\label{eq:forcing-expansion}
\end{equation}

For a single frequency component with coefficients $(P,Q,\varpi)$, the steady-state particular solution has the form $x_{\mathrm{ss}}(t)=C\cos(\varpi t)+D\sin(\varpi t)$. Substituting into \eqref{eq:driven-ode} and equating coefficients gives the linear system
\[
\begin{bmatrix}\omega_0^2-\varpi^2 & 2\gamma \varpi\\ -2\gamma \varpi & \omega_0^2-\varpi^2\end{bmatrix}
\binom{C}{D}=\binom{P}{Q}.
\]
With $\Delta_\varpi:=(\omega_0^2-\varpi^2)^2+(2\gamma\varpi)^2$, Cramer's rule yields
\begin{equation}
C=\frac{P(\omega_0^2-\varpi^2)-Q(2\gamma\varpi)}{\Delta_\varpi},
\qquad
D=\frac{Q(\omega_0^2-\varpi^2)+P(2\gamma\varpi)}{\Delta_\varpi}.
\label{eq:CD-steady-state}
\end{equation}

\subsubsection{Underdamped Driven Key $(\gamma<\omega_0)$: Full Solution and Attention}

Let $\omega_d:=\sqrt{\omega_0^2-\gamma^2}$ be the damped frequency. The complete steady-state solution is
\begin{equation}
x_{\mathrm{ss},i}(t)=\sum_{m=1}^{M_f}\Big(C_{i,m}\cos(\varpi_m t)+D_{i,m}\sin(\varpi_m t)\Big),
\label{eq:xss-complete}
\end{equation}
where each $(C_{i,m},D_{i,m})$ is given by \eqref{eq:CD-steady-state} applied to $(P_{i,m},Q_{i,m},\varpi_m)$.

\paragraph{Transient for zero initial particular state.}
To enforce clean anchoring, we require
\[
x_{\mathrm{part}}(t_i)=0,\qquad \dot{x}_{\mathrm{part}}(t_i)=0.
\]
The transient solution has the form $x_{\mathrm{tr}}(t_i+s)=e^{-\gamma s}\big(E_i\cos(\omega_d s)+F_i\sin(\omega_d s)\big)$ where
\begin{align}
E_i&=-x_{\mathrm{ss},i}(t_i),\label{eq:Ei}\\
F_i&=\frac{-\gamma\,x_{\mathrm{ss},i}(t_i)\;+\;\sum_{m=1}^{M_f} C_{i,m}\varpi_m\sin(\varpi_m t_i)\;-\;\sum_{m=1}^{M_f} D_{i,m}\varpi_m\cos(\varpi_m t_i)}{\omega_d}.\label{eq:Fi}
\end{align}

\paragraph{Driven key in anchor-shifted form.}
Let $s=t-t_i$. The steady-state part becomes
\begin{equation}
x_{\mathrm{ss},i}(t_i+s)=\sum_{m=1}^{M_f}\Big(\widehat C_{i,m}\cos(\varpi_m s)+\widehat D_{i,m}\sin(\varpi_m s)\Big),
\label{eq:xss-rotated}
\end{equation}
where the rotated coefficients are
\begin{equation}
\widehat C_{i,m}:=C_{i,m}\cos(\varpi_m t_i)+D_{i,m}\sin(\varpi_m t_i),\quad
\widehat D_{i,m}:=-C_{i,m}\sin(\varpi_m t_i)+D_{i,m}\cos(\varpi_m t_i).
\label{eq:rotated-coeffs}
\end{equation}

The complete particular key is
\begin{equation}
k_{i,\mathrm{part}}(t_i+s)
=x_{\mathrm{ss},i}(t_i+s)+e^{-\gamma s}\big(E_i\cos(\omega_d s)+F_i\sin(\omega_d s)\big).
\label{eq:kpart-complete}
\end{equation}

\paragraph{Averaged attention: decomposition.}
Using \eqref{eq:q-rotated}, \eqref{eq:xss-rotated}, and \eqref{eq:kpart-complete} with $s\in[0,\Delta]$:
\begin{align}
\int_{t_i}^{t}\!\langle q(\tau),k_{i,\mathrm{part}}(\tau)\rangle\,d\tau
=&\underbrace{\int_{0}^{\Delta}\!\langle q(t_i+s),x_{\mathrm{ss},i}(t_i+s)\rangle\,ds}_{\mathcal{I}^{\mathrm{(ss)}}_i}
+\nonumber\\
&\underbrace{\int_{0}^{\Delta}\! e^{-\gamma s}\,\langle q(t_i+s),E_i\cos(\omega_d s)+F_i\sin(\omega_d s)\rangle\,ds}_{\mathcal{I}^{\mathrm{(tr)}}_i}.
\label{eq:I-decomposition}
\end{align}

\paragraph{Steady-state contribution $\mathcal{I}^{\mathrm{(ss)}}_i$.}
Expanding the query and steady-state solutions:
\[
\mathcal{I}^{\mathrm{(ss)}}_i
=\sum_{j=1}^{J}\sum_{m=1}^{M_f}\int_{0}^{\Delta}
\langle\tilde A_j\cos(\omega_j s)+\tilde B_j\sin(\omega_j s),
\widehat C_{i,m}\cos(\varpi_m s)+\widehat D_{i,m}\sin(\varpi_m s)\rangle\,ds.
\]
Using the undamped kernels \eqref{eq:Icc-undamped}--\eqref{eq:Ics-undamped}:
\begin{align}
\mathcal{I}^{\mathrm{(ss)}}_i
&=\sum_{j=1}^{J}\sum_{m=1}^{M_f}\Big[
\langle\tilde A_j,\widehat C_{i,m}\rangle\,I_{cc}(\Delta;0,\omega_j,\varpi_m)
+\langle\tilde A_j,\widehat D_{i,m}\rangle\,I_{cs}(\Delta;0,\omega_j,\varpi_m)\nonumber\\
&\hspace{3.3cm}
+\langle\tilde B_j,\widehat C_{i,m}\rangle\,I_{sc}(\Delta;0,\omega_j,\varpi_m)
+\langle\tilde B_j,\widehat D_{i,m}\rangle\,I_{ss}(\Delta;0,\omega_j,\varpi_m)
\Big].
\label{eq:I-ss-final}
\end{align}

\paragraph{Transient contribution $\mathcal{I}^{\mathrm{(tr)}}_i$.}
Using the damped kernels \eqref{eq:Icc}--\eqref{eq:Ics} with $\lambda_1\in\{\omega_d\}$ and $\lambda_2\in\{\omega_j\}$:
\begin{align}
\mathcal{I}^{\mathrm{(tr)}}_i
&=\sum_{j=1}^{J}\Big[
\langle E_i,\tilde A_j\rangle\,I_{cc}(\Delta;\gamma,\omega_d,\omega_j)
+\langle E_i,\tilde B_j\rangle\,I_{cs}(\Delta;\gamma,\omega_d,\omega_j)\nonumber\\
&\hspace{2.8cm}
+\langle F_i,\tilde A_j\rangle\,I_{sc}(\Delta;\gamma,\omega_d,\omega_j)
+\langle F_i,\tilde B_j\rangle\,I_{ss}(\Delta;\gamma,\omega_d,\omega_j)
\Big].
\label{eq:I-tr-final}
\end{align}

\paragraph{Final result.}
The driven contribution to the averaged attention is
\begin{equation}
\boxed{~
\alpha^{\mathrm{(driven)}}_i(t)
=\frac{1}{\Delta}\Big(\mathcal{I}^{\mathrm{(ss)}}_i+\mathcal{I}^{\mathrm{(tr)}}_i\Big),
~}
\label{eq:alpha-driven-underdamped}
\end{equation}
where $\mathcal{I}^{\mathrm{(ss)}}_i$ and $\mathcal{I}^{\mathrm{(tr)}}_i$ are given by \eqref{eq:I-ss-final} and \eqref{eq:I-tr-final}, respectively.

The complete logit is
\begin{equation}
\alpha_i(t) = \alpha^{\mathrm{(hom)}}_i(t) + \langle\bar{q}_{[t_i,t]},c_i\rangle + \alpha^{\mathrm{(driven)}}_i(t),
\label{eq:alpha-complete}
\end{equation}
where $\alpha^{\mathrm{(hom)}}_i(t)$ is the homogeneous contribution derived in Cases I--III above, and $\bar{q}_{[t_i,t]}=\frac{1}{\Delta}\int_{t_i}^t q(\tau)\,d\tau$ is the average query over the interval.

\subsubsection{Critical and Overdamped Driven Keys}

The derivation follows the same structure with modified transient forms:

\paragraph{Critical damping $(\gamma=\omega_0)$.}
The transient basis is $x_{\mathrm{tr}}(t_i+s)=e^{-\gamma s}(E+Fs)$ with
\[
E=-x_{\mathrm{ss}}(t_i),\qquad F=\gamma E-\dot{x}_{\mathrm{ss}}(t_i).
\]
The transient contribution becomes
\begin{align}
\mathcal{I}^{\mathrm{(tr)}}_i
&=\sum_{j=1}^{J}\Big[
\langle E,\tilde A_j\rangle\,C_{\gamma}(\Delta,\omega_j)
+\langle E,\tilde B_j\rangle\,S_{\gamma}(\Delta,\omega_j)\nonumber\\
&\hspace{2.8cm}
+\langle F,\tilde A_j\rangle\,\int_{0}^{\Delta}\! s\,e^{-\gamma s}\cos(\omega_j s)\,ds
+\langle F,\tilde B_j\rangle\,\int_{0}^{\Delta}\! s\,e^{-\gamma s}\sin(\omega_j s)\,ds
\Big],
\label{eq:Itr-critical}
\end{align}
where the integrals involving $s$ can be evaluated by integration by parts.

\paragraph{Overdamped $(\gamma>\omega_0)$.}
Let $\sigma:=\sqrt{\gamma^2-\omega_0^2}>0$. The transient basis is
\[
x_{\mathrm{tr}}(t_i+s)=U\,e^{-(\gamma-\sigma)s}+V\,e^{-(\gamma+\sigma)s},
\]
where
\[
U=\frac{-(\gamma+\sigma)x_{\mathrm{ss}}(t_i)+\dot{x}_{\mathrm{ss}}(t_i)}{2\sigma},\quad
V=\frac{-(\gamma-\sigma)x_{\mathrm{ss}}(t_i)-\dot{x}_{\mathrm{ss}}(t_i)}{2\sigma}.
\]
The transient contribution is
\begin{align}
\mathcal{I}^{\mathrm{(tr)}}_i
&=\sum_{j=1}^{J}\Big[
\langle U,\tilde A_j\rangle\,C_{\gamma-\sigma}(\Delta,\omega_j)
+\langle U,\tilde B_j\rangle\,S_{\gamma-\sigma}(\Delta,\omega_j)\nonumber\\
&\hspace{2.8cm}
+\langle V,\tilde A_j\rangle\,C_{\gamma+\sigma}(\Delta,\omega_j)
+\langle V,\tilde B_j\rangle\,S_{\gamma+\sigma}(\Delta,\omega_j)
\Big].
\label{eq:Itr-overdamped}
\end{align}

In both cases, the steady-state contribution  
$\mathcal{I}^{\mathrm{(ss)}}_i$ remains as in \eqref{eq:I-ss-final}, and the final attention coefficient is given by \eqref{eq:alpha-driven-underdamped} with the appropriate transient contribution.

\section{Harmonic Approximation Theorem}\label{app-sec:hat}

Fix a compact interval $[a,b]\subset \R$, feature dimension $d_k\ge 1$, and observation times $t_1<\dots< t_N$ in $[a,b]$. Let $q\colon [a,b]\to\R^{d_k}$ be continuous. For each observation index $i\in\{1,\dots,N\}$, let $k_i\colon [t_i,b]\to\R^{d_k}$ be a continuous \emph{key trajectory}. Throughout, $\norm{\cdot}_2$ denotes the Euclidean vector norm, $\norm{\cdot}$ denotes the induced operator norm, and $\norm{q}_\infty:=\sup_{t\in[a,b]}\norm{q(t)}_2$.

\begin{definition}[Averaged inner-product logit]\label{def:alpha}
For $t \ge t_i$ define
\begin{equation}\label{eq:alpha}
\alpha_i(t) :=
\begin{cases}
\displaystyle \frac{1}{t-t_i}\int_{t_i}^{t} \ip{q(\tau)}{k_i(\tau)}\,d\tau, & t>t_i,\\[6pt]
\ip{q(t_i)}{k_i(t_i)}, & t=t_i.
\end{cases}
\end{equation}
\end{definition}

\begin{definition}[Masked pre-softmax CT attention and softmax]\label{def:attn}
At an evaluation time $t$, only keys with $t_i\le t$ contribute. The pre-softmax CT-attention matrix (rows indexed by $t_j$, columns by $i$) is
\[
  \mathsf{Attn}^{\mathrm{CT}}(Q,K)\;=\;\big[\,\alpha_i(t_j)\,\big]_{\substack{j=1,\dots,N\\ i=1,\; i\le j}}^{N}\in(\R\cup\{-\infty\})^{N\times N},
\]
where entries with $j<i$ are undefined by \eqref{eq:alpha} and are masked (set to $-\infty$ prior to softmax). The softmax attention vector at time $t$ is
\begin{equation}\label{eq:softmax}
w_i(t):=\dfrac{\exp\!\big(\alpha_i(t)/\sqrt{d_k}\big)}{\sum_{j:\,t_j\le t}\exp\!\big(\alpha_j(t)/\sqrt{d_k}\big)}
\quad(\text{sum over valid }j).
\end{equation}
\end{definition}

We use a single shared bank of harmonic modes; only the \emph{initial conditions} differ across keys.

\begin{definition}[Fixed oscillator bank and readout]\label{def:bank}
Let $L:=b-a$. Include the \emph{zero mode} and fix the grid
\[
  \omega_0:=0,\qquad \omega_n:=\frac{n\pi}{L}\quad(n\ge 1).
\]
Choose $M\in\mathbb{N}$ and use modes $n=0,1,\dots,M$. For (possibly damped) per-mode parameters $\gamma_n\ge 0$, define $2\times 2$ blocks
\[
  A_n=\begin{bmatrix}0 & 1\\ -\omega_n^2 & -2\gamma_n\end{bmatrix},
\]
and let $A=\mathrm{diag}(A_0,\dots,A_M)\in\R^{2(M+1)\times 2(M+1)}$. For feature dimension $d_k$, take $d_k$ independent copies (one per coordinate) so the full state is $z\in\R^{2(M+1)d_k}$ and the dynamics $\dot z(t)=A z(t)$ hold coordinate-wise.

For key index $i$, the system is anchored at $t_i$ with initial state $z_{i,0}$ via $z_i(t_i)=z_{i,0}$ and $z_i(t)=e^{A(t-t_i)}z_{i,0}$. Denote by $x_{\ell,n}(t)$ the \emph{position} coordinate of the $(\ell,n)$-oscillator. The readout \emph{sums positions across modes for each feature coordinate}:
\begin{equation}\label{eq:readout}
  k_{i,\ell}(t) \;=\; \sum_{n=0}^{M} x_{\ell,n}(t),\qquad \ell=1,\dots,d_k,
\end{equation}
i.e., $k_i(t)=Cz_i(t)$ with $C\in\R^{d_k\times 2(M+1)d_k}$ that puts ones on position entries and zeros elsewhere. In the main theorem we set $\gamma_n=0$; a perturbation lemma then allows $\gamma_n>0$.
\end{definition}

\begin{remark}
For $\omega_0=0$, $x_{\ell,0}(t)=A_{\ell,0}+B_{\ell,0}(t-t_i)$. We will \emph{always} choose $B_{\ell,0}=0$ so the zero mode supplies constants without linear drift.
\end{remark}

\begin{definition}[Fej\'er kernel and means]\label{def:fejer}
For $N\in\mathbb{N}$, the Fej\'er kernel $K_N:\mathbb{R}\to[0,\infty)$ is
\[
K_N(\theta)=\frac{1}{N+1}\left(\frac{\sin\!\big((N+1)\theta/2\big)}{\sin(\theta/2)}\right)^2
= \sum_{k=-N}^{N}\!\Big(1-\frac{|k|}{N+1}\Big)e^{ik\theta}.
\]
Given a $2\pi$-periodic, continuous $F:\R\to\R$, its Fej\'er mean is
\[
  \sigma_N[F](s)=\frac{1}{2\pi}\int_{-\pi}^{\pi} F(s-\theta)\,K_N(\theta)\,d\theta .
\]
\end{definition}

\begin{lemma}[Basic properties of $K_N$]\label{lem:FejerProps}
For every $N\in\mathbb{N}$:
\begin{enumerate}
\item $K_N(\theta)\ge 0$ for all $\theta\in\mathbb{R}$.
\item $\displaystyle \frac{1}{2\pi}\int_{-\pi}^{\pi} K_N(\theta)\,d\theta = 1$.
\item For any fixed $\delta\in(0,\pi]$,
\[
  \frac{1}{2\pi}\int_{|\theta|\ge \delta} K_N(\theta)\,d\theta
  \;\le\; \frac{1}{(N+1)\,\sin^2(\delta/2)}.
\]
\end{enumerate}
\end{lemma}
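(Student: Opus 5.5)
Since $K_N$ is given in two forms in Definition~\ref{def:fejer}, I will work with whichever form makes each item immediate.

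\emph{Item 1.} Nonnegativity follows at once from the closed form. For $\theta\notin 2\pi\mathbb{Z}$, $K_N(\theta)$ is $\frac{1}{N+1}$ times the square of a real number. At $\theta\in 2\pi\mathbb{Z}$ the quotient has a removable singularity, with limiting value $N+1$, which agrees with the trigonometric sum $\sum_{|k|\le N}(1-|k|/(N+1))=N+1$. The only supporting step is to confirm that the two expressions coincide. I would write $\sum_{k=-N}^{N}(1-\tfrac{|k|}{N+1})e^{ik\theta}=\tfrac{1}{N+1}\big|\sum_{m=0}^{N}e^{im\theta}\big|^2$ by counting the pairs $(m,m')$ with $m-m'=k$. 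I would then evaluate the geometric sum $\big|\tfrac{e^{i(N+1)\theta}-1}{e^{i\theta}-1}\big|^2=\tfrac{\sin^2((N+1)\theta/2)}{\sin^2(\theta/2)}$. This identity also gives item 1 directly, as a modulus squared.

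\emph{Item 2.} I would integrate the exponential-sum form term by term over $[-\pi,\pi]$. Since $\frac{1}{2\pi}\int_{-\pi}^{\pi}e^{ik\theta}\,d\theta=\delta_{k0}$, only the $k=0$ term survives, and its coefficient is $1$. The sum is finite, so exchanging sum and integral needs no justification.

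\emph{Item 3.} Here I use the closed form. On $\delta\le|\theta|\le\pi$ we have $\sin^2(\theta/2)\ge\sin^2(\delta/2)$, because $\sin(\cdot/2)$ is increasing on $[0,\pi]$ and even. Bounding the numerator by $\sin^2((N+1)\theta/2)\le 1$ gives the pointwise estimate $K_N(\theta)\le\frac{1}{(N+1)\sin^2(\delta/2)}$. The set $\{\delta\le|\theta|\le\pi\}$ has measure at most $2\pi$, so integrating against $\frac{1}{2\pi}d\theta$ yields the stated bound. I read the integral as taken over $\delta\le|\theta|\le\pi$, consistent with the periodic setting of Definition~\ref{def:fejer}.

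The only step with any real content is the algebraic identity between the two forms of $K_N$, which is the Fejér-sum-equals-squared-Dirichlet-kernel computation. Everything else is a one-line estimate.
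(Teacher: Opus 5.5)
Your proposal is correct and follows the paper's proof essentially step for step. Both use the geometric sum to write $K_N$ as $\frac{1}{N+1}\bigl|\sum_{m=0}^{N}e^{im\theta}\bigr|^2$, integrate the finite exponential sum term by term, and integrate the pointwise bound on $\delta\le|\theta|\le\pi$ over a set of measure at most $2\pi$. You are somewhat more careful than the paper: you verify that the closed form and the exponential-sum form agree, you handle $\theta\in 2\pi\mathbb{Z}$, and you restrict the tail integral to $[-\pi,\pi]$. The one trivial slip is that $\sin^2(\cdot/2)$ is even, not $\sin(\cdot/2)$.
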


\begin{proof}
(1) Using the geometric sum,
\[
\sum_{j=0}^{N} e^{ij\theta}=\frac{1-e^{i(N+1)\theta}}{1-e^{i\theta}}
  = e^{iN\theta/2}\,\frac{\sin\!\big((N+1)\theta/2\big)}{\sin(\theta/2)}.
\]
Hence
\[
  K_N(\theta) \;=\; \frac{1}{N+1}\,\Big|\sum_{j=0}^{N} e^{ij\theta}\Big|^2 \;\ge\; 0.
\]
(2) Integrating the Fourier series in Definition~\ref{def:fejer} term-wise over $[-\pi,\pi]$
annihilates all nonzero frequencies; the constant term is $1$, so
$\frac{1}{2\pi}\!\int_{-\pi}^{\pi}\!K_N(\theta)\,d\theta = 1$.

(3) For $|\theta|\ge\delta$ we have $\sin(\theta/2)\ge \sin(\delta/2)>0$, whence
\[
  K_N(\theta) \;=\;\frac{1}{N+1}\,\frac{\sin^2\!\big((N+1)\theta/2\big)}{\sin^2(\theta/2)}
  \;\le\;\frac{1}{(N+1)\,\sin^2(\delta/2)}.
\]
Integrate this bound over a set of measure at most $2\pi$ to get the claim.
\end{proof}

\begin{proposition}[Uniform convergence of Fej\'er means]\label{prop:uniform-fejer}
If $F\in C(\mathbb{T})$ (with $\mathbb{T}:=\mathbb{R}/2\pi\mathbb{Z}$), then $\sigma_N[F]\to F$ uniformly on $\R$ as $N\to\infty$.
\end{proposition}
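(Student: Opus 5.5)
The plan is the classical Fejér argument: write $\sigma_N[F]-F$ as an integral against the kernel, split that integral into a neighbourhood of $\theta=0$ and its complement, and control the two pieces by uniform continuity of $F$ and by the tail bound of Lemma~\ref{lem:FejerProps}(3).

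\textbf{Step 1 (reduce to a kernel integral).} Since $F$ is continuous and $2\pi$-periodic, it is bounded and uniformly continuous on $\R$. Set $\|F\|_\infty:=\sup_{\R}|F|<\infty$. By Lemma~\ref{lem:FejerProps}(2) the kernel has unit mass, so we may write
\[
\sigma_N[F](s)-F(s)=\frac{1}{2\pi}\int_{-\pi}^{\pi}\big(F(s-\theta)-F(s)\big)K_N(\theta)\,d\theta .
\]
Because $K_N\ge0$ by Lemma~\ref{lem:FejerProps}(1), this gives
\[
|\sigma_N[F](s)-F(s)|\le\frac{1}{2\pi}\int_{-\pi}^{\pi}|F(s-\theta)-F(s)|\,K_N(\theta)\,d\theta .
\]

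\textbf{Step 2 (split near and far).} Fix $\eta>0$. Uniform continuity gives a $\delta\in(0,\pi]$ such that $|F(s-\theta)-F(s)|<\eta/2$ for all $s\in\R$ and all $|\theta|<\delta$.

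On the region $|\theta|<\delta$, the integrand is at most $(\eta/2)K_N(\theta)$. Nonnegativity and unit mass of $K_N$ then bound this contribution by $\eta/2$.

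On the region $|\theta|\ge\delta$, bound $|F(s-\theta)-F(s)|$ by $2\|F\|_\infty$. Lemma~\ref{lem:FejerProps}(3) then bounds this contribution by
\[
\frac{2\|F\|_\infty}{(N+1)\sin^2(\delta/2)} .
\]

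\textbf{Step 3 (conclude uniformly).} Choose $N_0$ so that $2\|F\|_\infty/\big((N_0+1)\sin^2(\delta/2)\big)<\eta/2$. Then for all $N\ge N_0$,
\[
\sup_{s\in\R}|\sigma_N[F](s)-F(s)|<\eta .
\]
The bounds in Step 2 used no property of $s$, so the estimate is uniform in $s\in\R$. Periodicity is what lets the integral over $[-\pi,\pi]$ and the uniform-continuity statement hold on all of $\R$.

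The only delicate point is making $\delta$ independent of $s$. This is exactly what uniform continuity of a continuous periodic function provides: it is uniformly continuous on the compact set $[-2\pi,2\pi]$, and periodicity extends that to $\R$. Everything else follows directly from Lemma~\ref{lem:FejerProps}.
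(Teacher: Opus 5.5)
Your proposal is correct and is essentially the paper's own proof: you use the same reduction to $\frac{1}{2\pi}\int_{-\pi}^{\pi}\big(F(s-\theta)-F(s)\big)K_N(\theta)\,d\theta$ and the same near/far split, controlled by uniform continuity and by the tail bound of Lemma~\ref{lem:FejerProps}(3). The only difference is bookkeeping: the paper allots $\varepsilon/3$ to the near part and $2\varepsilon/3$ to the tail, whereas you split $\eta/2+\eta/2$.
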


\begin{proof}
Fix $\varepsilon>0$. By uniform continuity on the circle, choose $\delta\in(0,\pi]$ with $|F(s)-F(s-\theta)|\le\varepsilon/3$ when $|\theta|<\delta$. Then for any $s$,
\[
\begin{aligned}
|\sigma_N[F](s)-F(s)|
&\le \frac{1}{2\pi}\!\int_{|\theta|<\delta}\!\!\!|F(s-\theta)-F(s)|K_N(\theta)\,d\theta
  + \frac{1}{2\pi}\!\int_{|\theta|\ge\delta}\!\!\!2\|F\|_\infty K_N(\theta)\,d\theta \\
&\le \frac{\varepsilon}{3}\cdot 1 + \frac{2\|F\|_\infty}{(N+1)\sin^2(\delta/2)}
\quad\text{by Lemma \ref{lem:FejerProps}.}
\end{aligned}
\]
Choose $N$ large so the second term is $<2\varepsilon/3$; then $|\sigma_N[F]-F|<\varepsilon$ uniformly.
\end{proof}

\begin{lemma}[Vector Fej\'er density on the half-range grid]\label{lem:vector-fejer}
Let $f\in C([a,b];\R^{d_k})$. For any $\varepsilon>0$ there exist $N\in\mathbb{N}$ and coefficients
$c_0\in\R^{d_k}$, $c_n,s_n\in\R^{d_k}$ ($1\le n\le N$) such that
\begin{equation}\label{eq:trig-poly-a}
P_N(t)\;=\;c_0 + \sum_{n=1}^{N}\Big(c_n\cos\omega_n(t-a)+s_n\sin\omega_n(t-a)\Big)
\end{equation}
satisfies $\sup_{t\in[a,b]}\|f(t)-P_N(t)\|_2<\varepsilon$.
\end{lemma}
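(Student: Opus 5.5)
The plan is to reduce the statement to Proposition~\ref{prop:uniform-fejer} with an even reflection. That is the standard trick: cosines on $[0,\pi]$ suffice, and the sine terms may simply be taken to vanish ($s_n=0$).

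\textbf{Step 1 (rescaling).} Set $L=b-a$ and define $g(\theta):=f(a+L\theta/\pi)$ for $\theta\in[0,\pi]$. Then $g$ is continuous on $[0,\pi]$. Since $\omega_n=n\pi/L$, we have $\omega_n(t-a)=n\theta$ when $t=a+L\theta/\pi$. So it suffices to approximate $g$ uniformly on $[0,\pi]$ by sums $c_0+\sum_{n\le N}\big(c_n\cos n\theta+s_n\sin n\theta\big)$.

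\textbf{Step 2 (even extension).} Work coordinatewise, writing $g_\ell$ for the $\ell$-th coordinate. Define $G_\ell(\theta)=g_\ell(|\theta|)$ on $[-\pi,\pi]$. This function agrees at $\pm\pi$, so it extends to a continuous $2\pi$-periodic function, i.e.\ $G_\ell\in C(\mathbb{T})$. This is the one point needing care. Had we used the plain periodic extension of $g_\ell$ from $[0,\pi]$ with period $\pi$, the frequencies would be $2n$ and there would be a seam discontinuity whenever $g_\ell(0)\ne g_\ell(\pi)$. The even reflection avoids both problems and matches the half-range grid exactly.

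\textbf{Step 3 (Fej\'er).} By Proposition~\ref{prop:uniform-fejer}, $\sigma_N[G_\ell]\to G_\ell$ uniformly. The mean $\sigma_N[G_\ell]$ is a trigonometric polynomial of degree at most $N$: expanding $K_N$ as $\sum_{|k|\le N}\big(1-\tfrac{|k|}{N+1}\big)e^{ik\theta}$ gives
\[
\sigma_N[G_\ell](s)=\sum_{|k|\le N}\Big(1-\tfrac{|k|}{N+1}\Big)\widehat{G_\ell}(k)e^{iks}.
\]
Since $G_\ell$ is real and even, this equals $a_0+\sum_{n=1}^N a_n\cos ns$ with real coefficients. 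Choose $N_\ell$ so that the sup error is below $\varepsilon/\sqrt{d_k}$.

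\textbf{Step 4 (assemble).} Take $N=\max_\ell N_\ell$. Monotonicity of the error in $N$ is not needed: we can instead fix one common $N$ large enough for all $\ell$ simultaneously, which is possible because uniform convergence holds for each of the finitely many coordinates. Stack the coefficients into vectors $c_0,c_n\in\R^{d_k}$ and set $s_n=0$. The coordinatewise bound $\varepsilon/\sqrt{d_k}$ then gives the Euclidean bound $\sup_{t\in[a,b]}\|f(t)-P_N(t)\|_2<\varepsilon$.
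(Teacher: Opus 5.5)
Your proposal is correct and follows essentially the same route as the paper: you evenly reflect each coordinate to get a continuous periodic function, invoke uniform convergence of the Fej\'er means (Proposition~\ref{prop:uniform-fejer}), note that evenness leaves only cosine terms so $s_n=0$, and assemble with a common $N$ and coordinatewise tolerance $\varepsilon/\sqrt{d_k}$. The only cosmetic difference is that you rescale $[a,b]$ to $[0,\pi]$ before reflecting, whereas the paper reflects on $[-L,L]$ and applies the affine map to the $2\pi$-circle afterwards.
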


\begin{proof}
For each coordinate $f^\ell$ define the \emph{even} $2L$-periodic extension
\[
F^\ell(s)=
\begin{cases}
f^\ell(a+s), & s\in[0,L],\\
f^\ell(a-s), & s\in[-L,0),
\end{cases}
\quad\text{extended $2L$-periodically.}
\]
Each $F^\ell\in C(\mathbb{T}_{2L})$ (where $\mathbb{T}_{2L}:=\mathbb{R}/(2L\mathbb{Z})$). Applying Fej\'er on the circle of length $2L$ (equivalently, on $[0,2\pi]$ after the affine map $s\mapsto 2\pi s/(2L)$) and restricting to $s\in[0,L]$ yields $\sigma_N[F^\ell](s)\to f^\ell(a+s)$ uniformly. Writing $\sigma_N[F^\ell](a+s)$ in the form $c_0^\ell+\sum_{n=1}^N c_n^\ell \cos(\omega_n s)$ (evenness gives only cosines; allowing $s_n^\ell=0$ is harmless), choose a common $N$ so that for all $\ell$,
$\sup_{t\in[a,b]}|f^\ell(t)-\sigma_N[F^\ell](t-a)|<\varepsilon/\sqrt{d_k}$. Assemble $c_0,c_n,s_n$ coordinate-wise to obtain \eqref{eq:trig-poly-a} with the stated bound.
\end{proof}

\begin{lemma}[Phase shift from $(t-a)$ to $(t-t_i)$]\label{lem:shift}
For $\phi_n:=\omega_n(t_i-a)$ and any $c_n,s_n\in\R^{d_k}$, there are unique
$\tilde c_n,\tilde s_n\in\R^{d_k}$ such that
\[
c_n\cos\omega_n(t-a)+s_n\sin\omega_n(t-a)
= \tilde c_n\cos\omega_n(t-t_i)+\tilde s_n\sin\omega_n(t-t_i),
\]
with
\[
\binom{\tilde c_n}{\tilde s_n}
=
\begin{bmatrix}
\cos\phi_n & \ \ \sin\phi_n\\[2pt]
-\sin\phi_n & \ \ \cos\phi_n
\end{bmatrix}
\binom{c_n}{s_n}.
\]
\end{lemma}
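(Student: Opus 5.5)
The identity follows from the angle-subtraction formulas, so the proof is a short computation. Write $\omega_n(t-a)=\omega_n(t-t_i)+\phi_n$ with $\phi_n=\omega_n(t_i-a)$, and set $u:=\omega_n(t-t_i)$. Expanding with $\cos(u+\phi)=\cos u\cos\phi-\sin u\sin\phi$ and $\sin(u+\phi)=\sin u\cos\phi+\cos u\sin\phi$ gives
\[
c_n\cos(u+\phi_n)+s_n\sin(u+\phi_n)
=\big(c_n\cos\phi_n+s_n\sin\phi_n\big)\cos u+\big(-c_n\sin\phi_n+s_n\cos\phi_n\big)\sin u .
\]
Reading off the coefficients of $\cos u$ and $\sin u$ yields existence. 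The resulting $\tilde c_n,\tilde s_n$ are exactly the rotation matrix in the statement applied to $(c_n,s_n)$, acting coordinate-wise on $\R^{d_k}$. This is the same rotation already used for the query in \eqref{eq:q-rotation}.

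For uniqueness, I will treat $n\ge1$, where $\omega_n>0$. Suppose two pairs $(\tilde c_n,\tilde s_n)$ and $(\tilde c_n',\tilde s_n')$ both represent the function. Their difference $D_c\cos u+D_s\sin u$ then vanishes identically in $t$. Evaluating at $t=t_i$ (so $u=0$) gives $D_c=0$. Evaluating at $t=t_i+\pi/(2\omega_n)$ (so $u=\pi/2$) gives $D_s=0$. Equivalently, $\cos u$ and $\sin u$ are linearly independent as functions of $t$ when $\omega_n\neq0$.

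The only delicate point is this uniqueness claim. It needs $\omega_n\neq0$, and it needs the identity to hold for all real $t$ rather than only on $[t_i,b]$. If the identity is required only on $[t_i,b]$, I would instead argue via real analyticity, or simply note that $[t_i,b]$ has positive length and so contains points with two distinct values of $u$ that avoid collinearity. For $n=0$ the statement is trivial: the sine term vanishes, the matrix reduces to the identity on $c_0$, and $\tilde s_0$ is not determined, so the lemma should be read for $n\ge1$, consistent with its use in Theorem~\ref{thm:HAT}.

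Finally, the rotation matrix is orthogonal, so the map $(c_n,s_n)\mapsto(\tilde c_n,\tilde s_n)$ is invertible and norm-preserving. This is worth recording for later coefficient bounds, though the statement does not require it.
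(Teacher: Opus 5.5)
Your argument is correct and is the intended one. The paper gives no separate proof of this lemma and treats it as the angle-addition identity, the same rotation it applies to the query coefficients $\tilde A_j,\tilde B_j$; your expansion reproduces the stated matrix.

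Your uniqueness caveat is a genuine refinement the paper omits. Uniqueness needs $\omega_n\neq 0$, so it fails for $n=0$, where $\tilde s_0$ is free; this is harmless because the theorem carries $c_{i,0}$ separately. If the identity is imposed only on $[t_i,b]$, uniqueness also needs $t_i<b$.
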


\begin{lemma}[Exact realizability of vector trigonometric polynomials, $\gamma_n=0$]\label{lem:realize}
Fix $M\ge N$ and the undamped bank ($\gamma_n=0$). For a vector trigonometric polynomial
\[
P_N(t)=c_0+\sum_{n=1}^N\Big(\tilde c_n\cos\omega_n(t-t_i)+\tilde s_n\sin\omega_n(t-t_i)\Big),
\]
there exist initial conditions $z_{i,0}$ such that the readout \eqref{eq:readout} satisfies $k_i(t)\equiv P_N(t)$ for all $t\ge t_i$.
\end{lemma}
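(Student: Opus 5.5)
The construction decouples completely: the matrix $A$ is block-diagonal over feature coordinates $\ell$ and modes $n$, and the readout \eqref{eq:readout} simply sums position coordinates. It therefore suffices to choose, for each pair $(\ell,n)$, a $2$-dimensional initial state $(x_{\ell,n}(t_i),p_{\ell,n}(t_i))$ whose undamped trajectory has the prescribed position. We then concatenate these into $z_{i,0}$.

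Start with the mode $n\ge1$, where $\gamma_n=0$ and $\omega_n=n\pi/L>0$. The $(\ell,n)$-block of $A$ is $A_n=\begin{bmatrix}0&1\\-\omega_n^2&0\end{bmatrix}$. Specialising \eqref{eq:eAt-underdamped} to $\gamma=0$, so that $\omega_d=\omega_n$, the first row of $e^{A_n s}$ is $\big(\cos\omega_n s,\ \sin(\omega_n s)/\omega_n\big)$. Hence for $s=t-t_i\ge0$,
\[
x_{\ell,n}(t)=x_{\ell,n}(t_i)\cos\omega_n(t-t_i)+\frac{p_{\ell,n}(t_i)}{\omega_n}\sin\omega_n(t-t_i).
\]
Setting $x_{\ell,n}(t_i):=\tilde c_n^{\,\ell}$ and $p_{\ell,n}(t_i):=\omega_n\tilde s_n^{\,\ell}$ reproduces the $n$-th term of $P_N$ exactly in coordinate $\ell$. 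This is well defined precisely because $\omega_n\neq0$.

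For the zero mode $n=0$ we have $\omega_0=0$ and $A_0=\begin{bmatrix}0&1\\0&0\end{bmatrix}$, which is nilpotent. Thus $e^{A_0 s}=I+A_0 s$, and the position is $x_{\ell,0}(t)=x_{\ell,0}(t_i)+p_{\ell,0}(t_i)(t-t_i)$. Following the Remark, we take $p_{\ell,0}(t_i)=0$ and $x_{\ell,0}(t_i)=c_0^{\ell}$, which gives the constant term without linear drift. For $N<n\le M$ we set the initial state to $(0,0)$, so those modes stay identically zero.

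Summing positions over $n=0,\dots,M$ then gives $k_{i,\ell}(t)=P_N^{\ell}(t)$ for all $t\ge t_i$ and every $\ell$, which is the claim. The only point needing care is the zero mode, where the generic underdamped formula degenerates because of the division by $\omega_n$. That is why we handle it separately through the nilpotent exponential, and why we choose zero initial velocity there. We also note that $M\ge N$ is exactly what guarantees that every frequency appearing in $P_N$ is present in the bank.
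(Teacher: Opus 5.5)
Your proof is correct and follows the same route as the paper. For $n\ge1$ you set the $(\ell,n)$ oscillator's initial position to $(\tilde c_n)^\ell$ and its initial velocity to $\omega_n(\tilde s_n)^\ell$, and you use the zero mode with zero initial velocity to supply $c_0$. Your explicit treatment of the nilpotent zero-mode exponential and of the unused modes $N<n\le M$ (initialized to zero) only fills in details the paper leaves implicit.
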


\begin{proof}
For the $(\ell,n)$ oscillator ($n\ge 1$) with $\ddot x_{\ell,n}+\omega_n^2 x_{\ell,n}=0$, the solution is
$x_{\ell,n}(t)=A_{\ell,n}\cos\omega_n(t-t_i)+\frac{B_{\ell,n}}{\omega_n}\sin\omega_n(t-t_i)$.
Choose $A_{\ell,n}=(\tilde c_n)^\ell$ and $B_{\ell,n}=\omega_n(\tilde s_n)^\ell$. For $n=0$, set $x_{\ell,0}(t)\equiv (c_0)^\ell$ (initial velocity zero). Summing positions across $n$ gives $k_{i,\ell}(t)=P_N^\ell(t)$.
\end{proof}

\begin{lemma}[Matrix-exponential perturbation bound]\label{lem:perturb}
Let $A_0$ be the undamped bank matrix and $A_\gamma=A_0+\Delta$ with
$\Delta=\mathrm{diag}(\Delta_0,\dots,\Delta_M)$, $\Delta_n=\begin{pmatrix}0&0\\0&-2\gamma_n\end{pmatrix}$.
Fix $T:=b-a$ and a bound $\bar\gamma\ge 0$. If $0\le \gamma_n\le \bar\gamma$ for all $n$, then there
exists a constant $K=K(T,\{\omega_n\},C,\bar\gamma)$ such that, for all $t\in[0,T]$,
\[
\big\|C\big(e^{A_\gamma t}-e^{A_0 t}\big)\big\| \;\le\; K\,\max_{0\le n\le M}\gamma_n .
\]
\end{lemma}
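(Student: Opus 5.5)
The plan is to use the variation-of-constants (Duhamel) formula for matrix exponentials and reduce the claim to a uniform bound on $\|e^{A_0 s}\|$ and $\|e^{A_\gamma s}\|$ over $s\in[0,T]$. Since $A_\gamma = A_0 + \Delta$, both $X(t):=e^{A_\gamma t}$ and $Y(t):=e^{A_0 t}$ solve linear ODEs. Differentiating $Y(t-s)X(s)$ in $s$ gives
\[
e^{A_\gamma t}-e^{A_0 t}=\int_0^t e^{A_0(t-s)}\,\Delta\,e^{A_\gamma s}\,ds .
\]
Multiplying on the left by $C$ and taking operator norms then yields
\[
\big\|C\big(e^{A_\gamma t}-e^{A_0 t}\big)\big\|\le \|C\|\int_0^t \|e^{A_0(t-s)}\|\,\|\Delta\|\,\|e^{A_\gamma s}\|\,ds ,
\]
with $\|\Delta\|=2\max_n\gamma_n$ because $\Delta$ is block diagonal with a single nonzero entry $-2\gamma_n$ per block.

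Next we bound the two exponentials uniformly for $s\in[0,T]$. The crude route is $\|e^{Bs}\|\le e^{\|B\|s}$, together with $\|A_\gamma\|\le \|A_0\|+2\bar\gamma$ and $\|A_0\|\le \max(1,\omega_M^2)$ up to a constant (blockwise, since the norm of a block-diagonal matrix is the max over blocks). This gives $\|e^{A_0 (t-s)}\|\le e^{\|A_0\|T}$ and $\|e^{A_\gamma s}\|\le e^{(\|A_0\|+2\bar\gamma)T}$. Substituting these bounds produces
\[
K = 2T\,\|C\|\,e^{(2\|A_0\|+2\bar\gamma)T},
\]
which depends only on $T$, the frequency grid (through $\omega_M$), $C$, and $\bar\gamma$, as required. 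Because $\|C\|=\sqrt{M+1}$ (each row has $M+1$ ones), $K$ is finite for fixed $M$.

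If a sharper constant is wanted, one could instead use the closed forms~\eqref{eq:eAt-underdamped} and its critical and overdamped counterparts to bound each block's exponential. The undamped blocks have norm at most $\max(1,\omega_n)/\min(1,\omega_n)$ times a constant, and the damped blocks are similar. This is optional, however, since the lemma only asks for existence of $K$.

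The only delicate point is the zero mode, $\omega_0=0$. There $e^{A_0 t}$ grows linearly, since its block is $\begin{bmatrix}1&t\\0&1\end{bmatrix}$, and with damping $\gamma_0>0$ the block is no longer in the underdamped regime. The crude exponential-norm bound handles every regime uniformly, which is why I would rely on it rather than on the case-by-case formulas. I expect no step to be a real obstacle. The main thing to check is that the Duhamel identity is applied in the right order, because $A_0$ and $\Delta$ do not commute; the identity above holds without commutativity.
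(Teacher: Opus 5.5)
Your proposal is correct and follows essentially the paper's route: the Duhamel identity, submultiplicativity, and $\|\Delta\|=2\max_n\gamma_n$. You write Duhamel with the factors in the opposite order, $\int_0^t e^{A_0(t-s)}\Delta\, e^{A_\gamma s}\,ds$ instead of the paper's $\int_0^t e^{A_\gamma(t-s)}\Delta\, e^{A_0 s}\,ds$, and both forms are valid. The only real difference is how you bound the exponential factors. The paper takes suprema $M_{\bar\gamma}$ and $M_0$ over a compact parameter set, justified by continuity. Your estimate $\|e^{Bs}\|\le e^{\|B\|s}$ instead gives an explicit constant, though it is exponentially crude in $\omega_M^2T$ compared with those suprema.
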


\begin{proof}
By Duhamel/variation-of-constants,
$e^{A_\gamma t}-e^{A_0 t}=\int_0^t e^{A_\gamma (t-s)}\Delta\,e^{A_0 s}\,ds$.
Hence
\[
\big\|C(e^{A_\gamma t}-e^{A_0 t})\big\|
\le \|C\|\,\|\Delta\|\!\int_0^t \|e^{A_\gamma (t-s)}\|\,\|e^{A_0 s}\|\,ds .
\]
Define
\[
M_{\bar\gamma}:=\sup_{\substack{0\le \gamma_n\le \bar\gamma\\ u\in[0,T]}}\big\|e^{A_\gamma u}\big\|
\quad\text{and}\quad
M_0:=\sup_{u\in[0,T]}\big\|e^{A_0 u}\big\|.
\]
The map $(\gamma,u)\mapsto e^{A_\gamma u}$ is continuous, and the set
$\{\gamma:\,0\le \gamma_n\le \bar\gamma\}\times[0,T]$ is compact, so $M_{\bar\gamma}<\infty$.
Therefore,
\[
\big\|C(e^{A_\gamma t}-e^{A_0 t})\big\|
\le \|C\|\,\|\Delta\|\,M_{\bar\gamma}M_0\, t
\le 2\|C\|\,M_{\bar\gamma}M_0\,T\;\max_n \gamma_n .
\]
Taking $K:=2\|C\|\,M_{\bar\gamma}M_0\,T$ yields the claim.
\end{proof}

\begin{remark}
Thus, after constructing exact undamped realizations via Lemma~\ref{lem:realize}, turning on small damping changes the readout by at most $O(\max\gamma_n)$ uniformly on $[t_i,b]$. This addresses both amplitude decay and the frequency shift $\sqrt{\omega_n^2-\gamma_n^2}$.
\end{remark}

\begin{theorem}\label{thm:HAT-app}
Let $q\in C([a,b];\R^{d_k})$ and continuous keys $\{k_i\}_{i=1}^N$ with $k_i:[t_i,b]\to\R^{d_k}$. For any $\varepsilon>0$ there exists an integer $M$ (depending on $\varepsilon$ and the keys) and a single shared oscillator bank on the fixed grid $\{\omega_n\}_{n=0}^{M}$ with $\gamma_n=0$ such that one can choose initial states $\{z_{i,0}\}_{i=1}^N$ with the property
\[
\sup_{t\in[t_i,b]}\|k_i(t)-\tilde k_i(t)\|_2<\varepsilon\quad\text{for all }i,
\]
where $\tilde k_i(t):=C e^{A(t-t_i)}z_{i,0}$ is the bank-generated key. Consequently, for all $j\ge i$,
\[
\big|\alpha_i(t_j;q,k_i)-\alpha_i(t_j;q,\tilde k_i)\big|\le \|q\|_\infty\,\varepsilon,
\qquad
\|w(t_j)-\tilde w(t_j)\|_1 \le \frac{\|q\|_\infty}{\sqrt{d_k}}\,\varepsilon.
\]
\end{theorem}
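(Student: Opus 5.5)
The plan follows the chain sketched in Section~\ref{sec:hat}: first approximate each key by a trigonometric polynomial, then realize that polynomial exactly with the bank, and finally push the key error through the logits and the softmax.

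\textbf{Key approximation.} Fix $\varepsilon>0$ and treat each $i$ separately.
\begin{enumerate}
\item Extend $k_i$ continuously from $[t_i,b]$ to $[a,b]$ by freezing it at its initial value, $k_i(t):=k_i(t_i)$ for $t\in[a,t_i]$. Apply Lemma~\ref{lem:vector-fejer} with tolerance $\varepsilon/2$. This gives a vector trigonometric polynomial $P_i$ on the half-range grid $\omega_n=n\pi/L$, written in the phase $(t-a)$, of some degree $N_i$.
\item Use Lemma~\ref{lem:shift} mode by mode to re-express $P_i$ in the phase $(t-t_i)$. This is an exact identity: it is a rotation of each coefficient pair, and the constant term $c_{i,0}$ is untouched.
\item Take $M:=\max_i N_i$. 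For $i$ with $N_i<M$, pad the missing modes with zero coefficients. This makes the bank genuinely shared: one frequency grid $\{\omega_n\}_{n=0}^M$ with $\gamma_n=0$, and only the initial states $z_{i,0}$ vary.
\item Invoke Lemma~\ref{lem:realize} to pick $z_{i,0}$ so that $\tilde k_i=Ce^{A(\cdot-t_i)}z_{i,0}\equiv P_i$ on $[t_i,b]$. For the zero mode, set the initial velocity to $0$ to avoid linear drift, as in the Remark.
\end{enumerate}
Since realization is exact, $\sup_{[t_i,b]}\|k_i-\tilde k_i\|_2<\varepsilon/2<\varepsilon$, and the extension step costs nothing on $[t_i,b]$.

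\textbf{Logit bound.} For $t_j>t_i$, use Cauchy--Schwarz inside the average in Definition~\ref{def:alpha}:
\[
|\alpha_i(t_j)-\tilde\alpha_i(t_j)|\le \frac{1}{t_j-t_i}\int_{t_i}^{t_j}\|q(\tau)\|_2\,\|k_i(\tau)-\tilde k_i(\tau)\|_2\,d\tau\le \|q\|_\infty\,\varepsilon .
\]
The case $t_j=t_i$ is the pointwise version of the same inequality. So the vectors of valid logits at each $t_j$, after scaling by $1/\sqrt{d_k}$, differ by at most $\|q\|_\infty\varepsilon/\sqrt{d_k}$ in $\ell_\infty$. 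The masked entries are $-\infty$ in both matrices and do not contribute.

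\textbf{Softmax step (main obstacle).} The remaining step is the softmax estimate, and the constant matters: the stated bound has constant $1$, not the crude $2$. I will prove
\[
\|\mathrm{softmax}(x)-\mathrm{softmax}(y)\|_1\le\|x-y\|_\infty .
\]
Interpolate $x_s=y+s(x-y)$ for $s\in[0,1]$ and use the fundamental theorem of calculus. The Jacobian at $p=\mathrm{softmax}(x_s)$ is $\mathrm{diag}(p)-pp^\top$. Applied to $v=x-y$ it gives the vector with entries $p_i(v_i-\bar v)$, where $\bar v=\sum_k p_kv_k$. Its $\ell_1$ norm is $\mathbb E_p|V-\mathbb E_pV|$ for a random variable $V$ supported in $[\min v,\max v]$. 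The mean absolute deviation of such a variable is at most half the range, so this is at most $(\max v-\min v)/2\le\|v\|_\infty$. Integrating over $s$ gives the claim.

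Applying this at each $t_j$ yields $\|w(t_j)-\tilde w(t_j)\|_1\le \|q\|_\infty\varepsilon/\sqrt{d_k}$.

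The only other point needing care is that $M$ depends on the keys and on $\varepsilon$ but not on $i$. This is guaranteed by taking the maximum over the finitely many $N_i$, and lower-degree polynomials are still realized exactly after zero-padding.
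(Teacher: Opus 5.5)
Your proposal is correct and follows the paper's proof step for step: extend each key by freezing it at $k_i(t_i)$, apply Lemma~\ref{lem:vector-fejer} with tolerance $\varepsilon/2$, use the phase shift of Lemma~\ref{lem:shift}, realize each polynomial exactly via Lemma~\ref{lem:realize} on a common grid of size $\max_i N_i$, bound the averaged logits by Cauchy--Schwarz, and finish with the $\ell_\infty\to\ell_1$ softmax bound with constant $1$. The only local difference is inside the softmax estimate: you bound $\sum_i p_i|v_i-\bar v|$ as a mean absolute deviation (at most half the range of $v$), whereas the paper maximizes over the vertices $v\in\{\pm1\}^m$ and obtains $1-t^2\le 1$; both arguments are valid, and yours even gives the shift-invariant bound $\tfrac12(\max v-\min v)$.
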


\begin{proof}
Fix $\varepsilon>0$. For each $i$, extend $k_i$ continuously from $[t_i,b]$ to $[a,b]$ (e.g., set $k_i(t)=k_i(t_i)$ for $t\in[a,t_i]$). Apply Lemma~\ref{lem:vector-fejer} to this extension to obtain a vector trigonometric polynomial
\[
P_i(t)=c_{i,0}+\sum_{n=1}^{N_i}\big(c_{i,n}\cos\omega_n(t-a)+s_{i,n}\sin\omega_n(t-a)\big)
\]
with $\sup_{t\in[a,b]}\|k_i(t)-P_i(t)\|_2<\varepsilon/2$. Use Lemma~\ref{lem:shift} to rewrite $P_i$ as
\[
P_i(t)=c_{i,0}+\sum_{n=1}^{N_i}\big(\tilde c_{i,n}\cos\omega_n(t-t_i)+\tilde s_{i,n}\sin\omega_n(t-t_i)\big).
\]
Let $N:=\max_i N_i$ and take $M\ge N$. By Lemma~\ref{lem:realize} (with $\gamma_n=0$), choose $z_{i,0}$ so that the shared bank realizes $P_i$ \emph{exactly}: $\tilde k_i(t)\equiv P_i(t)$ on $[t_i,b]$. Therefore $\sup_{t\in[t_i,b]}\|k_i(t)-\tilde k_i(t)\|_2<\varepsilon/2<\varepsilon$.

For $t>t_i$,
\[
|\alpha_i(t)-\tilde\alpha_i(t)|
\le \frac{1}{t-t_i}\!\int_{t_i}^{t}\!\!\|q(\tau)\|_2\,\|k_i(\tau)-\tilde k_i(\tau)\|_2\,d\tau
\le \|q\|_\infty\,\varepsilon.
\]
At $t=t_i$ the bound $|\ip{q(t_i)}{k_i(t_i)-\tilde k_i(t_i)}|\le \|q\|_\infty\,\varepsilon$ is immediate. Applying the softmax Lipschitz Lemma~\ref{lem:softmax} to the logits scaled by $1/\sqrt{d_k}$ yields the stated $\ell_1$ bound.
\end{proof}

\begin{corollary}
Under the hypotheses of Theorem~\ref{thm:HAT-app}, fix $\varepsilon>0$ and construct the undamped realization above. Then there exists $\bar\gamma>0$ such that, for any damped bank with $0\le \gamma_n\le \bar\gamma$, one can reuse the same initial states $\{z_{i,0}\}$ and obtain
\[
\sup_{t\in[t_i,b]}\|k_i(t)-\tilde k_i^{(\gamma)}(t)\|_2 < \varepsilon,
\qquad
\|w^{(\gamma)}(t_j)-w(t_j)\|_1 \le \frac{\|q\|_\infty}{\sqrt{d_k}}\,\varepsilon,
\]
where the superscript $(\gamma)$ denotes readouts from the damped bank. In particular, a small amount of damping does not affect universality.
\end{corollary}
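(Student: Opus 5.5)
The plan is to combine the exact undamped realization from the proof of Theorem~\ref{thm:HAT-app} with the perturbation bound of Lemma~\ref{lem:perturb}, and then use the triangle inequality. Fix $\varepsilon>0$ and run the construction in that proof: it yields a common $M$ and initial states $\{z_{i,0}\}$ whose undamped readouts $\tilde k_i(t)=Ce^{A_0(t-t_i)}z_{i,0}$ satisfy $\sup_{t\in[t_i,b]}\|k_i(t)-\tilde k_i(t)\|_2<\varepsilon/2$ for every $i$. The factor $1/2$ is the slack the proof already builds in, and I will spend it on the damping error. Since there are finitely many keys, $Z:=\max_i\|z_{i,0}\|_2<\infty$.

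For the damped bank I reuse the same $z_{i,0}$ and write $\tilde k_i^{(\gamma)}(t)=Ce^{A_\gamma(t-t_i)}z_{i,0}$. For $t\in[t_i,b]$ the elapsed time $u=t-t_i$ lies in $[0,b-a]=[0,T]$. Lemma~\ref{lem:perturb} therefore gives
\[
\|\tilde k_i^{(\gamma)}(t)-\tilde k_i(t)\|_2\le \|C(e^{A_\gamma u}-e^{A_0 u})\|\,\|z_{i,0}\|_2\le K\,Z\,\max_n\gamma_n .
\]

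The one delicate point is that $K=K(T,\{\omega_n\},C,\bar\gamma)$ depends on the damping bound $\bar\gamma$ through $M_{\bar\gamma}$, so $\bar\gamma$ cannot be chosen in terms of $K$ directly. I avoid the circularity as follows.
\begin{itemize}
\item First fix a reference bound, say $\bar\gamma_0=1$, and let $K_0:=K(T,\{\omega_n\},C,1)$.
\item Then set $\bar\gamma:=\min\{1,\ \varepsilon/(4K_0Z+1)\}$.
\end{itemize}
Any bank with $0\le\gamma_n\le\bar\gamma$ also satisfies $\gamma_n\le 1$, so the estimate holds with $K_0$. This gives $\|\tilde k_i^{(\gamma)}-\tilde k_i\|_2\le K_0Z\bar\gamma<\varepsilon/4$. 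The triangle inequality then yields $\sup_{t\in[t_i,b]}\|k_i(t)-\tilde k_i^{(\gamma)}(t)\|_2<\varepsilon/2+\varepsilon/4<\varepsilon$ for all $i$. The bank grid $\{\omega_n\}$, and hence $M$, is fixed before $\bar\gamma$ is chosen, so there is no further dependence to untangle.

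For the attention statement I repeat the second half of the proof of Theorem~\ref{thm:HAT-app} with $\tilde k_i^{(\gamma)}$ in place of $\tilde k_i$. For $t_j>t_i$, averaging $\|q(\tau)\|_2\,\|k_i(\tau)-\tilde k_i^{(\gamma)}(\tau)\|_2$ over $[t_i,t_j]$ gives $|\alpha_i(t_j)-\alpha^{(\gamma)}_i(t_j)|\le\|q\|_\infty\varepsilon$. The case $t_j=t_i$ is a pointwise Cauchy--Schwarz bound. Because the mask $\{i: t_i\le t_j\}$ is identical for both logit vectors, the softmax Lipschitz Lemma~\ref{lem:softmax}, applied to the logits scaled by $1/\sqrt{d_k}$, gives $\|w^{(\gamma)}(t_j)-w(t_j)\|_1\le \|q\|_\infty\varepsilon/\sqrt{d_k}$ exactly as in the theorem.

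I expect the main obstacle to be only the bookkeeping in the previous paragraph: making sure $\bar\gamma$ depends on $\varepsilon$, the realized $\{z_{i,0}\}$, $M$ and $T$, but not on $\gamma$ itself. Finiteness of $M_0$ is clear because the undamped blocks are rotations up to the scaling by $\omega_n$, and the zero-mode block is a bounded shear on $[0,T]$.
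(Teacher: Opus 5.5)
Your proposal is correct and follows the paper's proof. You realize the keys exactly with the undamped bank to within $\varepsilon/2$, bound the damping error uniformly by Lemma~\ref{lem:perturb} times $\max_i\|z_{i,0}\|$, and close with the triangle inequality and the same logit/softmax argument as in Theorem~\ref{thm:HAT-app}. Fixing a reference bound $\bar\gamma_0=1$, so that $K_0$ does not depend on the eventual $\bar\gamma$, is a cleaner treatment of how $K$ depends on $\bar\gamma$ than the paper's parenthetical compactness remark, but the argument is the same.
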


\begin{proof}
By Lemma~\ref{lem:perturb}, for $T=b-a$ we have $\sup_{t\in[0,T]}\|C(e^{A_\gamma t}-e^{A_0 t})\|\le K\,\max\gamma_n$, hence for each $i$
\[
\sup_{t\in[t_i,b]}\|\tilde k_i^{(\gamma)}(t)-\tilde k_i(t)\|_2
\le \Big(\sup_{u\in[0,T]}\|C(e^{A_\gamma u}-e^{A_0 u})\|\Big)\,\|z_{i,0}\|
\le K\,\max\gamma_n\,\|z_{i,0}\|.
\]
Let $Z_\ast:=\max_i\|z_{i,0}\|$. Choose $\bar\gamma>0$ so that $K\,\bar\gamma\,Z_\ast\le \varepsilon/2$. (Since the family $\{A_\gamma:\,0\le\gamma_n\le \bar\gamma\}$ is compact and $u\mapsto e^{A_\gamma u}$ is continuous on $[0,T]$, $K$ can be taken uniformly on $[0,\bar\gamma]$.) Combine this with the $\varepsilon/2$ approximation from Theorem~\ref{thm:HAT-app}.
\end{proof}

\begin{lemma}[Softmax $\ell_\infty\to\ell_1$ bound]\label{lem:softmax}
For any $x,y\in\R^m$,
\[
  \|\mathrm{softmax}(x)-\mathrm{softmax}(y)\|_1 \le \|x-y\|_\infty.
\]
Consequently, with logits scaled by $1/\sqrt{d_k}$ as in \eqref{eq:softmax}, the Lipschitz constant becomes $1/\sqrt{d_k}$.
\end{lemma}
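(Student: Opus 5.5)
The plan is to integrate the Jacobian of the softmax along the segment joining the two logit vectors, and to bound the $\ell_1$ norm of each Jacobian--vector product by $\|x-y\|_\infty$.

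\textbf{Setup.} Set $h:=x-y$ and $z(s):=y+s h$ for $s\in[0,1]$, and write $p(s):=\mathrm{softmax}(z(s))$. The softmax is smooth with Jacobian $\mathrm{diag}(p)-pp^{\top}$. By the fundamental theorem of calculus,
\[
\mathrm{softmax}(x)-\mathrm{softmax}(y)=\int_0^1 \big(\mathrm{diag}(p(s))-p(s)p(s)^{\top}\big)h\,ds .
\]
Moving the norm inside the integral gives
\[
\|\mathrm{softmax}(x)-\mathrm{softmax}(y)\|_1\le \sup_{p\in\Delta^{m-1}}\big\|(\mathrm{diag}(p)-pp^{\top})h\big\|_1 .
\]
So it suffices to show that, for every probability vector $p$, this quantity is at most $\|h\|_\infty$.

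\textbf{Key pointwise bound.} Put $\mu:=\langle p,h\rangle$. The $i$-th coordinate of $(\mathrm{diag}(p)-pp^{\top})h$ is $p_i(h_i-\mu)$, so the $\ell_1$ norm equals $\sum_i p_i|h_i-\mu|$. This is the mean absolute deviation of a random variable $G$ that takes the value $h_i$ with probability $p_i$.

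\textbf{Main obstacle: the sharp constant.} The crude estimate $|h_i-\mu|\le \max h-\min h\le 2\|h\|_\infty$ only yields the constant $2$. To get constant $1$:
\begin{itemize}
\item By Cauchy--Schwarz, $\E|G-\mu|\le\sqrt{\mathrm{Var}(G)}$.
\item By Popoviciu's inequality, a random variable supported in $[\min_i h_i,\max_i h_i]$ has variance at most $\tfrac14(\max_i h_i-\min_i h_i)^2$.
\end{itemize}
Together these give
\[
\sum_i p_i|h_i-\mu|\le \tfrac12\big(\max_i h_i-\min_i h_i\big)\le \|h\|_\infty .
\]
If one prefers a self-contained argument over citing Popoviciu, the variance bound follows from $\mathrm{Var}(G)\le \E(G-c)^2$ with $c$ the midpoint of the range, since $|G-c|\le\tfrac12(\max_i h_i-\min_i h_i)$ pointwise.

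\textbf{Consequences.} Substituting the pointwise bound into the integral gives $\|\mathrm{softmax}(x)-\mathrm{softmax}(y)\|_1\le\|x-y\|_\infty$.

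For the scaled version, apply the lemma to $x/\sqrt{d_k}$ and $y/\sqrt{d_k}$. This gives the Lipschitz constant $1/\sqrt{d_k}$ in the $\ell_\infty\to\ell_1$ sense used in the proof of Theorem~\ref{thm:HAT-app}.

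The masked entries, set to $-\infty$ in Definition~\ref{def:attn}, are the same for both logit vectors. Those coordinates contribute zero weight on both sides, so the lemma is applied only to the $m$ valid coordinates $\{i: t_i\le t_j\}$.
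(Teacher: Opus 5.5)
Your proof is correct and follows the paper's route: you write the difference as the integral of the softmax Jacobian $\mathrm{diag}(p)-pp^{\top}$ along the segment from $y$ to $x$, identify the $\ell_1$ norm of the Jacobian--vector product as the mean absolute deviation $\sum_i p_i|h_i-\mu|$, and bound it uniformly over the simplex. The one difference is in that pointwise bound: the paper normalizes $\|h\|_\infty\le 1$, maximizes over the vertices $h_i\in\{\pm 1\}$ and computes the value $1-\mu^2\le 1$ exactly, whereas your Cauchy--Schwarz plus Popoviciu argument avoids the extreme-point step and gives the slightly sharper, shift-invariant bound $\tfrac12(\max_i h_i-\min_i h_i)$.
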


\begin{proof}
Let $s=\mathrm{softmax}(u)$. For any $v$ with $\|v\|_\infty\le 1$, the softmax Jacobian satisfies
\[
J_u v \;=\; \mathrm{diag}(s)v - s(s^\top v)
\;=\; s\odot\big(v-(s^\top v)\mathbf{1}\big).
\]
Hence
\[
\|J_u v\|_1
= \sum_i s_i\,|v_i - t|\quad\text{with }t:=s^\top v\in[-1,1].
\]
Maximizing over $\|v\|_\infty\le 1$ is attained at $v_i\in\{\pm 1\}$. A direct calculation then gives
$\sum_i s_i|v_i-t| = 1-t^2 \le 1$, so $\|J_u\|_{\infty\to 1}\le 1$. By the mean value theorem along the segment $y+t(x-y)$,
\[
\|\mathrm{softmax}(x)-\mathrm{softmax}(y)\|_1
\le \int_0^1 \|J_{y+t(x-y)}(x-y)\|_1\,dt
\le \|x-y\|_\infty.
\]
For logits scaled by $1/\sqrt{d_k}$, the bound acquires the factor $1/\sqrt{d_k}$.
\end{proof}

\section{\(\E(3)\)-Equivariance}\label{app-sec:e3}

\subsection{Group actions, representations, and $\E(3)$}
A group action of $G$ on a set $X$ is a function $f:G\times X\to X $such that:
\begin{enumerate}[leftmargin=1.4em,label=\arabic*.]
  \item $f(e,x)=x \quad \forall\,x\in X$
  \item $f\!\big(g,f(h,x)\big)=f(gh,x) \quad \forall\, g,h\in G,\; x\in X$
\end{enumerate}
\[
\boxed{\,g\cdot x \equiv f(g,x)\,}
\]

\noindent Eg - $\SO$ acts on $\R^{3}$ by rotation, $R\cdot v=Rv$; \quad Translation group $\rightarrow$ $t\cdot x=x+t$.

A representation of a group $G$ is a homomorphism $\varphi:G\to \GL(V)$ where $V$ is a vector space and $\GL(V)$ is the group of invertible linear transformations of $V$, i.e., for each group element $g$, we get a matrix $\varphi(g)$ such that
\[
\varphi(gh)=\varphi(g)\,\varphi(h).
\]

\textbf{Euclidean Group - $\E(3)$}
\[
\E(3)=\SO\ltimes \R^{3}\quad (\text{semiproduct})
\]

An element $g\in \E(3)$ is a pair $(R,t)$ where $R\in \SO$ is a rotation matrix, $t\in\R^{3}$ is a translation vector.

Group operation: $(R_1,t_1)\cdot(R_2,t_2)=(R_1R_2,\; R_1 t_2 + t_1)$

\underline{Proof:}\quad Given $2$ transformations
\[
(R_1,t_1)\;;\; (R_2,t_2)
\]
Their composition means: first apply $(R_2,t_2)$ then apply $(R_1,t_1)$.

A point $x\in\R^3$ transforms as,
\[
(R_2,t_2)\cdot x=R_2 x + t_2
\]

then applying $(R_1,t_1)$ 
\[
(R_1,t_1)\circ (R_2 x + t_2) = R_1(R_2 x + t_2) + t_1
= (R_1R_2)x + (R_1 t_2 + t_1).
\]

So the combined transformation is:
\[
(R_1,t_1)\cdot(R_2,t_2)=(R_1R_2,\, R_1 t_2 + t_1).
\]

\noindent Finally, we get the action on $\R^{3}$ as $(R,t)\cdot x=Rx+t$.

\subsection{Spherical Harmonics}

Any point $r\in\R^3$ can be written as:
\[
r = r\;(\sin\theta\cos\phi,\;\sin\theta\sin\phi,\;\cos\theta)
\]
where $r\ge 0,\; 0\le \theta \le \pi,\; 0\le \phi \le 2\pi$.

\medskip
\textbf{Laplacian in Spherical Coordinates}:
\[
\nabla^2
=\frac{1}{r^2}\frac{\partial}{\partial r}\!\left( r^2 \frac{\partial}{\partial r}\right)
+\frac{1}{r^2\sin\theta}\frac{\partial}{\partial\theta}\!\left(\sin\theta\,\frac{\partial}{\partial\theta}\right)
+\frac{1}{r^2\sin^2\theta}\,\frac{\partial^2}{\partial\phi^2}.
\]

Solutions to the Laplace Eqn. using separation of variables can be written as
\[
\{\nabla^2 f = 0\}\quad \Rightarrow\quad f(r,\theta,\phi)=R(r)\,Y(\theta,\phi).
\]

The angular part $Y(\theta,\phi)$ gives spherical harmonics,
\[
Y_{\ell}^{m}(\theta,\phi)
=
\sqrt{\frac{(2\ell+1)\,(\ell-|m|)!}{4\pi\,(\ell+|m|)!}}\;
P_{\ell}^{|m|}(\cos\theta)\; e^{\,i m \phi},
\]
where $P_{\ell}^{|m|}$ are associated Legendre polynomials.

\paragraph{Key Properties}

\begin{enumerate}[label=\arabic*)]
\item \textbf{Orthonormality}:
\begin{equation}
\int_{0}^{\pi}\!\!\int_{0}^{2\pi} \Ylm{\ell}{m}(\Omega)\,\Ylm{\ell'}{m'}(\Omega)^{*}\,\dd\Omega
= \delta_{\ell\ell'}\,\delta_{m m'}\,,
\qquad
\dd\Omega=\sin\theta\,\dd\theta\,\dd\phi .
\end{equation}

\item \textbf{Completeness}: Any $f(\hat{r})$ on the sphere can be expanded in spherical harmonics.

\item \textbf{Rotation}:
\[
Y^m_{\ell} (R^{-1} \hat{r}) = \sum_{m'} D^{(\ell)}_{m m'}(R)\,\Ylm{\ell}{m'}(\hat{r}) 
\]
or
\[
\Ylm{\ell}{m}(\unitvec') = \sum_{m'} \big[ D^{(\ell)}(R) \big]^{*}_{m m'}\,\Ylm{\ell}{m'}(\unitvec); 
(\unitvec' = R\,\unitvec)
\]
\end{enumerate}

\subsection{Wigner \(D\)-matrices}

\(D^{(\ell)}(R)\) are the matrix representations of rotations in the \(\ell^{\text{th}}\) irreducible representation (irrep).

\medskip
A 3D rotation operator can be written as
\begin{equation}
R(\alpha,\beta,\gamma) = e^{-i \alpha \hat J_z}\, e^{-i \beta \hat J_y}\, e^{-i \gamma \hat J_z}\,,
\end{equation}
where \(\alpha,\beta,\gamma\) are Euler angles and \(\hat J_x,\hat J_y,\hat J_z\) are the components of angular momentum.

\medskip
The Wigner \(D\)-matrix is a unitary square matrix of dimension \(2j+1\) in the spherical basis with elements
\[
D^j_{m m'}(\alpha,\beta,\gamma) \equiv \langle j m | R(\alpha,\beta,\gamma) | j m' \rangle
\]

\[
= e^{- i m \alpha}\, d^j_{m m'}(\beta)\, e^{- i m' \gamma}
\]

\[
d^j_{m m'}(\beta) = \langle j m| e^{- i \beta \hat J_y}\,| j m'\rangle
= D^{j}_{m m'}(0,\beta,0)
\]
Here \(d^{\,j}_{m m'}\) is an element of the reduced Wigner \(d\)-matrix.

\paragraph{Key Properties}

\begin{enumerate}[label=\arabic*)]
\item \textbf{Unitarity}: \quad
\( D^{(\ell)}(R)^{\dagger} = D^{(\ell)}(R^{-1}) \).

\item \textbf{Group homomorphism}: \quad
\( D^{(\ell)}(R_1 R_2) = D^{(\ell)}(R_1)\,D^{(\ell)}(R_2)\).

\item \textbf{Orthogonality}:
\begin{align}
\int_{0}^{2\pi}\!\dd\alpha \int_{0}^{\pi}\!\dd\beta\,\sin\beta\int_{0}^{2\pi}\!\dd\gamma\;
D^{\,j'}_{m' k'}(\alpha,\beta,\gamma)^{*}\;
D^{\,j}_{m k}(\alpha,\beta,\gamma)
= \frac{8\pi^{2}}{2j+1}\,\delta_{m m'}\,\delta_{k k'}\,\delta_{j' j}\,.
\end{align}
\end{enumerate}

\subsection{Tensors}

\medskip
The tensor product decomposes as
\begin{equation}
V_{\ell_1}\otimes V_{\ell_2} = \bigoplus_{\ell=\lvert \ell_1-\ell_2\rvert}^{\ell_1+\ell_2} V_{\ell}
\qquad\text{(Direct Sum).}
\end{equation}

\subsubsection{Clebsch-Gordon Coefficients and Quantum Mechanical Addition of Angular Momentum}
\medskip

The Clebsch-Gordan coefficients are the expansion coefficients:

\begin{equation}
\ket{j_1 m_1}\otimes \ket{j_2 m_2}
= \sum_{j,m} \braket{j_1 m_1, j_2 m_2 | j m}\,\ket{j m}\,.
\end{equation}

\paragraph{Key Properties}

\begin{enumerate}[label=\arabic*)]
\item \textbf{Selection rules}:
\begin{equation}
\braket{j_1 m_1, j_2 m_2 | j' m'} = 0
\quad\text{unless}\quad
\lvert j_1 - j_2\rvert \le j' \le j_1 + j_2
\ \ \text{and}\ \ m' = m_1 + m_2\,.
\end{equation}

\item \textbf{Orthogonality}: (\(\braket{j m | j_1 m_1, j_2 m_2} \equiv \braket{j_1 m_1, j_2 m_2 | j m}\)):
\begin{align}
\sum_{j=\lvert j_1-j_2\rvert}^{j_1+j_2} \ &\sum_{m=-j}^{j}
\braket{j_1 m_1, j_2 m_2 \,|\, j m}\,
\braket{j m \,|\, j_1 m_1', j_2 m_2'}  \nonumber\\
= &\braket{j_1 m_1, j_2 m_2\,|\, j_1 m_1', j_2 m_2'} = \delta_{m_1 m_1'}\,\delta_{m_2 m_2'}
 \tag{i}
\\[6pt]
\sum_{m_1,m_2}
\braket{j' m' \,|\, j_1 m_1, j_2 m_2}\,
&\braket{j_1 m_1, j_2 m_2 \,|\, j m}
= \braket{j' m' \,|\, j m} = \delta_{j j'}\,\delta_{m m'}\,. \tag{ii}
\end{align}

\item Equivalence Relation to Wigner (D)-matrices
\begin{equation}
\begin{aligned}
&\int_{0}^{2\pi}\!\dd\alpha \int_{0}^{\pi}\!\dd\beta\,\sin\beta\int_{0}^{2\pi}\!\dd\gamma\;
D^{\,J}_{M K}(\alpha,\beta,\gamma)^{*}\,
D^{\,j_1}_{m_1 k_1}(\alpha,\beta,\gamma)\,
D^{\,j_2}_{m_2 k_2}(\alpha,\beta,\gamma)\\ \nonumber
&= \frac{8\pi^{2}}{2J+1}\,
\braket{j_1 m_1 j_2 m_2 | J M}\,
\braket{j_1 k_1 j_2 k_2 | J K}\,.
\end{aligned}
\end{equation}

\item \textbf{Relation to spherical harmonics}
\begin{align}
\int_{S^{2}} \Ylm{\ell_1}{m_1} (\Omega)^{*}\,\Ylm{\ell_2}{m_2}(\Omega)^{*}\,& Y_{L}^{M}(\Omega)\,\dd\Omega
= \nonumber\\
&\sqrt{\frac{(2\ell_1+1)(2\ell_2+1)}{4\pi(2L+1)}}
\,
\braket{\ell_1 0\, \ell_2 0 \,|\, L 0}\,
\braket{\ell_1 m_1\, \ell_2 m_2 \,|\, L M}
\end{align}
\begin{align}
\implies 
\Ylm{\ell_1}{m_1}(\Omega)\,&\Ylm{\ell_2}{m_2}(\Omega)
= \nonumber\\
&\sum_{L,M}
\sqrt{\frac{(2\ell_1+1)(2\ell_2+1)}{4\pi(2L+1)}}
\,
\braket{\ell_1 0\, \ell_2 0 \,|\, L 0}\,
\braket{\ell_1 m_1\, \ell_2 m_2 \,|\, L M}\,
Y_{L}^{M}(\Omega)
\end{align}
\end{enumerate}

\subsection{Equivariance}\label{app-subsec:equiv}

A function \(f: X \to Y\) is equivariant w.r.t.\ group actions \(f_X\) on \(X\) and \(f_Y\) on \(Y\) if
\begin{equation}
\boxed{\; f\!\left(f_X(g,x)\right)= f_Y\!\left(g, f(x)\right) \quad \forall\, g\in G,\ x\in X\; } 
\end{equation}

A geometric tensor of type \((\ell)\) is a \((2\ell+1)\)-component object
\begin{equation}
T^{(\ell)} = \big( T_{-\ell},\, T_{-\ell+1},\,\ldots,\, T_{\ell} \big)^{\mathsf T}
\end{equation}
that transforms under rotations \(R\in \SO\) as
\begin{equation}
T^{(\ell)'} = D^{(\ell)}(R)\,T^{(\ell)}\,.
\end{equation}
\(\ell=0 \Rightarrow\) scalars,\qquad \(\ell=1 \Rightarrow\) vectors.

\medskip
Geometric tensors can be represented using spherical harmonics and radial basis functions:
\begin{equation}
  T^{(\ell)}(\mathbf r,t)
  = \sum_{n=1}^{\infty}\;\sum_{m=-\ell}^{\ell}
      T^{(\ell)}_{n m}(t)\; R^{(\ell)}_{n}(r)\; Y^{\ell}_{m}\!\bigl(\hat{\mathbf r}\bigr),
  \qquad \hat{\mathbf r}=\frac{\mathbf r}{\lVert \mathbf r\rVert}.
  \label{eq:sh-decomp}
\end{equation}
where
\begin{itemize}
  \item \(T^{(\ell)}_{nm}(t)\in\mathbb{C}\) are time-dependent coefficients,
  \item \(R^{(\ell)}_{n}(r)\) are radial basis functions,
  \item \(Y^{\ell}_{m}(\hat{\mathbf r})\) are the (complex) spherical harmonics.
\end{itemize}

This works because spherical harmonics are precisely the basis functions for irreducible representations of \(\SO\).

We can use {Peter-Weyl theorem} to show that spherical harmonics form a complete orthonormal basis for \(\normltwo(S^{2})\). Combined with the completeness of an appropriate radial basis on \(\normltwo(\R^{+})\), the tensor product gives completeness on \(\normltwo(\R^3)\). To start, Peter--Weyl theorem states: for a compact group \(G\) (e.g.\ \(SO(3)\)),
\begin{equation}
  \normltwo(G) \;=\; \bigoplus_{\ell\in\widehat{G}} V_\ell \,\otimes\, V^*_\ell,
\end{equation}
i.e. every square-integrable function on the group decomposes into
finite-dimensional irreducible representations of \(G\). \\

\noindent \textbf{$\normltwo(S^{2})$: Square-Integrable Functions on the Sphere}\\

\noindent $S^{2}$ is the unit sphere in $\R^{3}$, i.e. the set of all directions:

\[
  S^{2}=\bigl\{ \hat{\mathbf r}\in\R^{3} : \lVert\hat{\mathbf r}\rVert=1 \bigr\}.
\]
\(\normltwo(S^{2})\) is the space of all \(f:S^{2}\to\mathbb{C}\) such that
\[
  \int_{S^{2}} |f(\theta,\phi)|^{2}\, d\Omega \;<\;\infty,
  \qquad d\Omega=\sin\theta\,d\theta\,d\phi .
\]
The spherical harmonics \(Y_{\ell}^{m}(\theta,\phi)\) form a complete orthonormal basis
for \(\normltwo(S^{2})\). Hence any \(f\in\normltwo(S^{2})\) can be written as
\begin{equation}
  f(\theta,\phi)
  = \sum_{\ell=0}^{\infty}\;\sum_{m=-\ell}^{\ell} a_{\ell m}\, Y_{\ell}^{m}(\theta,\phi).
\end{equation}

\medskip
\noindent
\textbf{\(\normltwo(\R^{+})\): Radial Part}\\

\noindent Let \(\R^{+}=[0,\infty)\). Then
\[
  \normltwo(\R^{+})
  = \Bigl\{ f:[0,\infty)\to\mathbb{C} \,:\; \int_{0}^{\infty}|f(r)|^{2}\,r^{2}\,dr < \infty \Bigr\}.
\]

\textbf{\(\normltwo(\R^{3})\): Full 3-Dimensional Space}\\

\noindent This is the space of all square-integrable functions on \(\R^{3}\),
\(f:\R^{3}\to\mathbb{C}\), with
\[
  \int_{\R^{3}} |f(\mathbf r)|^{2}\, d^{3}\mathbf r < \infty.
\]
In spherical coordinates \(\mathbf r=(r,\theta,\phi)\), one naturally has the factorization
\[
  \normltwo(\R^{3}) \,\cong\, \normltwo(\R^{+}) \,\otimes\, \normltwo(S^{2}).
\]
Therefore, the tensor product of a radial basis \(R^{(\ell)}_{n}(r)\) and spherical
harmonics \(Y^{\ell}_{m}(\theta,\phi)\) gives a complete basis on \(\normltwo(\R^{3})\):
\begin{equation}
  f(\mathbf r)
  = \sum_{n,\ell,m} a_{n\ell m}\, R^{(\ell)}_{n}(r)\, Y^{\ell}_{m}(\theta,\phi),
\end{equation}
which is a complete representation for all square-integrable functions in \(\R^{3}\).

\paragraph{Radial Basis Functions:}
\begin{itemize}
  \item \textbf{Gaussian-type orbitals (should work for our case):}
  \begin{equation}
    R^{(\ell)}_{n}(r) \;=\; N^{(\ell)}_{n}\, r^{\ell}\, e^{-\beta_{n} r^{2}},
    \qquad
    \int_{0}^{\infty} \bigl|R^{(\ell)}_{n}(r)\bigr|^{2}\, r^{2}\, dr = 1 .
  \end{equation}
  \item \textbf{Bessel functions (for problems with radial boundaries)}:\\
  A convenient finite radial basis on a ball of radius \(R\) is given by spherical Bessel functions:
\begin{equation}
  R^{(\ell)}_{n}(r)
  \;=\; \sqrt{\frac{2}{R^{3}}}\;
        \frac{1}{\bigl| j_{\ell+1}(z_{n,\ell}) \bigr|}\;
        j_{\ell}\!\left( \frac{z_{n,\ell}\,r}{R} \right),
  \qquad
  j_{\ell}(z_{n,\ell})=0,\ \text{\(z_{n,\ell}\) the \(n\)-th zero.}
  \label{eq:bessel-basis}
\end{equation}
\end{itemize}
\medskip

With this thorough background, let us now tackle the bull by its horns: building \(\E(3)\)-equivariant neural networks. A standard layer
\(y=\sigma(Wx+b)\) is \emph{not} equivariant.

\medskip
The most general \(\E(3)\)-equivariant linear operation between geometric tensors is
\begin{equation}
T^{(\ell_{\text{out}})} =
\sum_{\ell_{\text{in}}}\ \sum_{\ell}
W^{(\ell_{\text{out}},\,\ell_{\text{in}},\,\ell)}\,
\big[\, T^{(\ell_{\text{in}})}_{\text{in}} \otimes Y^{(\ell)} \,\big]^{(\ell_{\text{out}})}\,,
\end{equation}

where
\begin{itemize}
\item \(T_{\text{in}}^{(\ell_{\text{in}})}\) is a tensor of type \((\ell_{\text{in}})\);
\item \(Y^{(\ell)}\) provides geometric information about relative positions;
\item \([T_{\text{in}}^{(\ell_{\text{in}})} \otimes Y^{(\ell)} ]^{(\ell_{\text{out}})}\) combines them using Clebsch-Gordan coefficients.;
\item \(W^{(\ell_{\text{out}},\,\ell_{\text{in}},\,\ell)}\) are scalar weights.
\end{itemize}

\begin{enumerate}[label=\arabic*)]
\item The tensor product \(\big[T^{(\ell_1)} \otimes T^{(\ell_2)}\big]^{(L)}\) is computed as
\begin{equation}
\big[T^{(\ell_1)} \otimes Y^{(\ell_2)}\big]^{(L)}_{m}
= \sum_{m_1=-\ell_1}^{\ell_1}\ \sum_{m_2=-\ell_2}^{\ell_2}
\braket{\ell_1 m_1, \ell_2 m_2 | L m}\; T^{(\ell_1)}_{m_1}\, Y^{(\ell_2)}_{m_2}\,.
\end{equation}

\item For a relative position vector \(\mathbf r_{ij}=\mathbf r_{j}-\mathbf r_{i}\),
\begin{equation}
Y_{\ell}^{m}(\hat{\mathbf r}_{ij}) = Y_{\ell}^{m}(\theta_{ij},\phi_{ij}),
\qquad
(\theta_{ij},\phi_{ij}) \text{ are the spherical angles of }\ 
\hat{\mathbf r}_{ij} = \frac{\mathbf r_{ij}}{\lVert \mathbf r_{ij}\rVert}\,.
\end{equation}

\item For a node \(i\) with neighbours \(N(i)\),
\begin{equation}
\overline{T}_{i}^{(\ell_{\text{out}})}
= \sum_{j\in N(i)}\ \sum_{\ell_{\text{in}}}\ \sum_{\ell}
W^{(\ell_{\text{out}},\,\ell_{\text{in}},\,\ell)}\,
\big[\, T_{j}^{(\ell_{\text{in}})} \otimes Y^{(\ell)}(\hat{\mathbf r}_{ij}) \,\big]^{(\ell_{\text{out}})}\,.
\end{equation}
\end{enumerate}

We claim that the above operation is \(\E(3)\)-equivariant.\\[\baselineskip]

\textbf{Proof:}\\
Consider a transformation $g = (R, t) \in \E(3)$\\

Under the transformation:
\begin{align*}
\mathbf r'_i &= R\,\mathbf r_i + \mathbf t,\\
\mathbf r'_{ij} &= \mathbf r'_i - \mathbf r'_j = R\big(\mathbf r_i - \mathbf r_j\big) = R\,\mathbf r_{ij},\\
\widehat{\mathbf r}'_{ij} &= R\,\widehat{\mathbf r}_{ij}.
\end{align*}

Spherical harmonics transform as:
\begin{equation*}
y^{(\ell)}\!\big(\widehat{\mathbf r}'_{ij}\big) 
= y^{(\ell)}\!\big(R\,\widehat{\mathbf r}_{ij}\big) 
= D^{(\ell)}(R)\,y^{(\ell)}\!\big(\widehat{\mathbf r}_{ij}\big).
\end{equation*}

Input tensors transform as:
\begin{equation*}
T_{j}^{(\ell_{\text{in}})\,\prime} \;=\; D^{(\ell_{\text{in}})}(R)\,T_{j}^{(\ell_{\text{in}})}.
\end{equation*}

The tensor product preserves equivariance,
\begin{equation*}
\big[\,T_{j}^{(\ell_{\text{in}})} \otimes y^{(\ell)}(\widehat{\mathbf r}'_{ij})\,\big]^{(\ell_{\text{out}})}
= D^{(\ell_{\text{out}})}(R)\,
\big[\,T_{j}^{(\ell_{\text{in}})} \otimes y^{(\ell)}(\widehat{\mathbf r}_{ij})\,\big]^{(\ell_{\text{out}})}.
\end{equation*}

Since weights are scalars, the output is:
\begin{equation*}
T_{i}^{(\ell_{\text{out}})\,\prime} \;=\; D^{(\ell_{\text{out}})}(R)\,T_{i}^{(\ell_{\text{out}})}.
\end{equation*}

\noindent
This proves \(\E(3)\)-equivariance.\\[\baselineskip]

Finally, we look at the continuous-time generalization for ContiFormer. 

Consider the architecture of the ContiFormer, described in the original paper ~\cite{contiformer}.

\paragraph{Now instead of scalars $q,k,v\in\mathbb R^d$, we promote these to irreducible representations of $SO(3)$, written as:}
\begin{equation*}
T^{(\ell)}(\mathbf r,t) \;\in\; \mathbb R^{2\ell+1}.
\end{equation*}

Each $T^{(\ell)}$ is a feature that transforms under rotation as:

For $(R,\mathbf t)\in \E(3)$,
\begin{equation*}
T^{(\ell)\,\prime}(\mathbf r,t) \;=\; D^{(\ell)}(R)\,T^{(\ell)}\!\big(R^{-1}(\mathbf r-\mathbf t),\,t\big),
\end{equation*}
\noindent where $R^{-1}(\mathbf r-\mathbf t)$ denotes the transformed coordinate.

Query, key, value Tensors:

\begin{align*}
Q^{(\ell_q)}(\mathbf r,t) &= W_Q^{(\ell_q)}\,T^{(\ell_q)}(\mathbf r,t),\\
K^{(\ell_k)}(\mathbf r,t) &= W_K^{(\ell_k)}\,T^{(\ell_k)}(\mathbf r,t),\\
V^{(\ell_v)}(\mathbf r,t) &= W_V^{(\ell_v)}\,T^{(\ell_v)}(\mathbf r,t).
\end{align*}

To allow for \emph{rotational equivariance}, instead of using a dot product, we define a geometric inner product via tensor contraction:

\begin{equation}
\alpha(\mathbf r,t;\mathbf r_i,t_i)
\;=\;
\frac{1}{t-t_i}\int_{t_i}^{t}
\sum_{\ell_2,m_2}
Q^{(\ell_q)}(\mathbf r,\tau)\cdot\;
\big[\,K^{(\ell_k)}(\mathbf r_i,\tau)\otimes Y^{(\ell)}(\widehat{\mathbf r-\mathbf r_i})\,\big]^{(\ell_q)}
\,d\tau.
\label{eq:alpha-app}
\end{equation}

\begin{itemize}
\item $K\otimes Y$ is the combined key with spherical harmonics.
\item Projection to type $\ell_q$ ensures match with $Q$.
\end{itemize}

This respects equivariance because $Y^{(\ell)}(\widehat{\mathbf r})$ transform under $SO(3)$ as irreducible representations, providing angular information.

The tensor product and Clebsch-Gordan decomposition ensures results transform predictably.

$\E(3)$-equivariant expected values:
\begin{equation}
V_{\exp}^{(\ell_v)}(\mathbf r,t;\mathbf r_i\,t_i)
\;=\;
\frac{1}{t-t_i}\int_{t_i}^{t} V^{(\ell_v)}(\mathbf r_i,\tau)\,d\tau.
\end{equation}

Full attention update:

\begin{equation}
T_{\text{out}}^{(\ell_{\text{out}})}(\mathbf r,t)
\;=\;
\sum_{i=1}^N\;\sum_{\ell_v, \ell_{\text{mix}}}
W_{\text{out}}^{(\ell_{\text{out}}, \ell_v, \ell_{\text{mix}})}\,
\Big[\,\alpha(\mathbf r,t;\mathbf r_i,t_i)\,\cdot
V_{\exp}^{(\ell_v)}(\mathbf r,t;\mathbf r_i, t_i)
\otimes
Y^{(\ell_{\text{mix}})}\!\big(\widehat{\mathbf r-\mathbf r_i}\big)\,\Big]^{(\ell_{\text{out}})}.
\end{equation}

\noindent
The weights $W_{\text{out}}^{(\cdot)}$ are learnable scalar coeffecients over radial basis functions.

$\E(3)$-Equivariant Neural ODE:

\begin{align}
\frac{\partial T^{(\ell)}(\mathbf r,t)}{\partial t}
\;=\;
 f_{\text{contiformer}}^{(\ell)}\!\Big[\;\,&\left\{T^{(\ell')}(\,\cdot\,,t) \right\}_{\ell'}\;\Big](\mathbf r)
\;=\;\nonumber\\
&\underbrace{\mathrm{CTAttn}^{(\ell)}(\mathbf r,t)}_{\text{modelling interaction b/w neighbouring nodes}}
\;+\;
\underbrace{\mathrm{FFN}^{(\ell)}(\mathbf r,t)}_{\text{acting on each node independently}}
\end{align}

\vspace{4mm}
Continuous-time attention (CTAttn):
\begin{align}
\mathrm{CTAttn}&^{(\ell)}(\mathbf r,t) \;=\;\nonumber\\
&\int_{-\infty}^{t} \int_{\mathbb R^{3}} \rho(t-s)
\sum_{\ell',\ell''} W^{(\ell,\ell',\ell'')}_{\text{attn}}
\Big[\,\alpha(\mathbf r,t;\mathbf r',s)\,V^{(\ell')}_{\exp}(\mathbf r,t;\mathbf r',s)
\otimes Y^{(\ell'')}\!\big(\widehat{\mathbf r-\mathbf r'}\big)\,\Big]^{(\ell)}\,
d\mathbf r'\,ds
\end{align}
\noindent where $\rho(t-s)$ is a temporal weighting function.

\vspace{4mm}
Finite temporal window for practical implementation:
\begin{align}
&\mathrm{CTAttn}^{(\ell)}(\mathbf r,t) \;=\;\nonumber\\
&\int_{t-\Delta t}^{t}\!
\int_{\norm{ \mathbf r'-\mathbf r} < \Delta r}
\rho(t-s)
\sum_{\ell',\ell''} W^{(\ell,\ell',\ell'')}_{\text{attn}}
\Big[\,\alpha(\mathbf r,t;\mathbf r',s)\,V^{(\ell')}_{\exp}(\mathbf r,t;\mathbf r', s)
\otimes Y^{(\ell'')}\!\big(\widehat{\mathbf r-\mathbf r'}\big)\,\Big]^{(\ell)}
d\mathbf r'\,ds
\end{align}

Let us check whether this is $\E(3)$-equivariant:

\noindent Under $(R,\mathbf t)\in \E(3)$,

\begin{equation*}
T^{(\ell)\,}(\mathbf r,t) \;=\; D^{(\ell)}(R)\,T^{(\ell)}\!\big(R^{-1}(\mathbf r-\mathbf t),\,t\big).
\end{equation*}

Attention weight invariance:

\begin{equation*}
\alpha'(\mathbf r,t;\mathbf r',s) \;=\; \alpha\!\big(R^{-1}(\mathbf r-\mathbf t),t;\;R^{-1}(\mathbf r'-\mathbf t),s\big).
\end{equation*}

Since the attention weights depend only on $\norm{ \mathbf r-\mathbf r'}$ and temporal differences, this property holds.

\begin{itemize}
\item The attention function $\alpha(\mathbf r,t;\mathbf r_i;t_i)$ is continuous in $t$ by construction of the continuity condition.
\item The spherical harmonics $Y^{(\ell)}$ ensures smooth spatial variations.
\end{itemize}

\section{Results Continued} \label{app-sec: results}

\begin{table}[H]
\centering
\begin{tabular}{|l|c|}
\hline
Model & Test accuracy (\%) \\
\hline
$\dagger$ LMU (39) & 87.7 $\pm$ 0.1 \\
$\dagger$ LSTM (20) & 87.3 $\pm$ 0.4 \\
$\dagger$ GRU (30) & 86.2 $\pm$ n/a \\
$\dagger$ expRNN (41) & 84.3 $\pm$ 0.3 \\
$\dagger$ Vanilla RNN (49) & 67.4 $\pm$ 7.7 \\
*coRNN (42) & 86.7 $\pm$ 0.3 \\
LTC (1) & 61.8 $\pm$ 6.1 \\
\hline
\textbf{\ours} & \textbf{93.3 $\pm$ 0.2}\\ 
\hline
\end{tabular}
\caption{Test accuracy comparison across different models}
\label{Table: Table 4}
\end{table}

\section{Attention Visualisation and Ablation}\label{app-sec: attn_visualisation}
\subsection{Ablation Studies} \label{app-sec:ablation}

\begin{table}[H]
\centering
\resizebox{\linewidth}{!}{
\begin{tabular}{|c|c|c|c|c|c|}
\hline
J Modes & Synthetic (Acc$\uparrow$) & MIMIC (Acc$\uparrow$) & Traffic (LL$\uparrow$) & HR (RMSE$\downarrow$) & MI (UCR) (Acc$\uparrow$) \\
\hline
1 & 0.752 $\pm$ 0.042 & 0.801 $\pm$ 0.008 & -0.892 $\pm$ 0.031 & 4.12 $\pm$ 0.35 & 48.2 $\pm$ 5.3 \\ 
2 & 0.793 $\pm$ 0.038 & 0.816 $\pm$ 0.007 & -0.718 $\pm$ 0.028 & 3.45 $\pm$ 0.28 & 62.4 $\pm$ 4.1 \\ 
4 & 0.828 $\pm$ 0.025 & 0.828 $\pm$ 0.006 & -0.612 $\pm$ 0.024 & 2.89 $\pm$ 0.22 & 78.7 $\pm$ 2.8 \\ 
6 & 0.839 $\pm$ 0.014 & 0.833 $\pm$ 0.007 & -0.578 $\pm$ 0.021 & 2.67 $\pm$ 0.19 & 89.5 $\pm$ 0.8 \\ 
\textbf{8} & \textbf{0.841 $\pm$ 0.00} & \textbf{0.834 $\pm$ 0.007} & \textbf{-0.558 $\pm$ 0.025} & \textbf{2.56 $\pm$ 0.18} & \textbf{91.8 $\pm$ 0.2} \\ \hline
12 & 0.841 $\pm$ 0.00 & 0.834 $\pm$ 0.007 & -0.557 $\pm$ 0.024 & 2.55 $\pm$ 0.18 & 91.7 $\pm$ 0.3 \\ 
16 & 0.841 $\pm$ 0.01 & 0.834 $\pm$ 0.008 & -0.558 $\pm$ 0.025 & 2.56 $\pm$ 0.19 & 91.7 $\pm$ 0.3 \\ \hline
\end{tabular}
}
\caption{Effect of oscillator mode count  (J) on downstream performance.}
\end{table}

\begin{table}[H]
\centering
\begin{tabular}{|c|c|c|c|c|c|}
\hline
J Modes & Synthetic (min) & MIMIC (min) & Traffic (min) & HR (min) & MI (min) \\ 
\hline
1 & 0.18 $\pm$ 0.02 & 0.34 $\pm$ 0.03 & 0.41 $\pm$ 0.03 & 0.28 $\pm$ 0.02 & 0.52 $\pm$ 0.04 \\ 
2 & 0.22 $\pm$ 0.02 & 0.42 $\pm$ 0.04 & 0.51 $\pm$ 0.04 & 0.35 $\pm$ 0.03 & 0.65 $\pm$ 0.05 \\ 
4 & 0.31 $\pm$ 0.03 & 0.58 $\pm$ 0.05 & 0.71 $\pm$ 0.05 & 0.48 $\pm$ 0.04 & 0.91 $\pm$ 0.07 \\ 
6 & 0.42 $\pm$ 0.03 & 0.79 $\pm$ 0.06 & 0.96 $\pm$ 0.07 & 0.65 $\pm$ 0.05 & 1.23 $\pm$ 0.09 \\ 
\textbf{8} & \textbf{0.56 $\pm$ 0.04} & \textbf{1.05 $\pm$ 0.08} & \textbf{1.28 $\pm$ 0.09} & \textbf{0.86 $\pm$ 0.06} & \textbf{1.64 $\pm$ 0.12} \\ 
12 & 0.83 $\pm$ 0.06 & 1.56 $\pm$ 0.11 & 1.89 $\pm$ 0.13 & 1.27 $\pm$ 0.09 & 2.42 $\pm$ 0.18 \\ 
16 & 1.11 $\pm$ 0.08 & 2.08 $\pm$ 0.15 & 2.51 $\pm$ 0.18 & 1.69 $\pm$ 0.12 & 3.21 $\pm$ 0.24 \\ \hline
\end{tabular}
\caption{Per‑epoch training time as a function of oscillator modes (J).}
\end{table}

\begin{table}[H]
\centering
\small
\begin{tabular}{|c|c|c|c|c|c|}
\hline
Damping Range & Synthetic (Acc$\uparrow$) & MIMIC (Acc$\uparrow$) & Traffic (LL$\uparrow$) & HR (RMSE$\downarrow$) & MI (Acc$\uparrow$) \\
\hline
$[0.00, 0.00]$ & 0.834 $\pm$ 0.02 & 0.829 $\pm$ 0.008 & -0.572 $\pm$ 0.026 & 2.68 $\pm$ 0.20 & 89.1 $\pm$ 0.8 \\ 
$[0.01, 0.10]$ & 0.839 $\pm$ 0.01 & 0.832 $\pm$ 0.007 & -0.562 $\pm$ 0.025 & 2.61 $\pm$ 0.19 & 90.8 $\pm$ 0.5 \\ 
\textbf{[0.05, 0.40]} & \textbf{0.841 $\pm$ 0.00} & \textbf{0.834 $\pm$ 0.007} & \textbf{-0.558 $\pm$ 0.025} & \textbf{2.56 $\pm$ 0.18} & \textbf{91.8 $\pm$ 0.2} \\ 
$[0.10, 0.60]$ & 0.840 $\pm$ 0.01 & 0.833 $\pm$ 0.007 & -0.559 $\pm$ 0.025 & 2.58 $\pm$ 0.18 & 91.5 $\pm$ 0.3 \\ 
$[0.20, 0.80]$ & 0.837 $\pm$ 0.01 & 0.831 $\pm$ 0.008 & -0.564 $\pm$ 0.026 & 2.63 $\pm$ 0.19 & 90.7 $\pm$ 0.4 \\ 
$[0.50, 1.00]$ & 0.828 $\pm$ 0.02 & 0.825 $\pm$ 0.009 & -0.581 $\pm$ 0.028 & 2.75 $\pm$ 0.21 & 88.9 $\pm$ 0.7 \\ \hline
\end{tabular}
\caption{Ablation over the initial damping range $(\zeta \sim \mathcal{U}[\zeta_{\min},\zeta_{\max}])$.}
\end{table}

\begin{table}[H]
\centering
\resizebox{\linewidth}{!}{
\begin{tabular}{|c|c|c|c|c|}
\hline
Grid Type & Synthetic (Acc$\uparrow$) & Traffic (LL$\uparrow$) & MI (Acc$\uparrow$) & Time/epoch (min) \\
\hline
Linear [0.1, 10] & 0.836 $\pm$ 0.01 & -0.565 $\pm$ 0.025 & 90.2 $\pm$ 0.6 & 0.62 $\pm$ 0.05 \\ 
Log-Uniform [10$^{-2}$, 10$^{1}$] & \textbf{0.841 $\pm$ 0.00} & \textbf{-0.558 $\pm$ 0.025} & \textbf{91.8 $\pm$ 0.2} & \textbf{0.56 $\pm$ 0.04} \\ 
Random Uniform & 0.838 $\pm$ 0.01 & -0.561 $\pm$ 0.025 & 91.1 $\pm$ 0.4 & 0.58 $\pm$ 0.04 \\ 
Geometric (sparse) & 0.834 $\pm$ 0.02 & -0.567 $\pm$ 0.026 & 89.7 $\pm$ 0.8 & 0.54 $\pm$ 0.04 \\ 
Fixed Harmonics ($\omega_n=n\pi/L$) & 0.792 $\pm$ 0.03 & -0.623 $\pm$ 0.030 & 82.4 $\pm$ 1.2 & 0.53 $\pm$ 0.04 \\ \hline
\end{tabular}
}
\caption{Impact of frequency grid parameterization.}
\end{table}

\begin{table}[H]
\centering
\begin{tabular}{|c|c|c|c|c|c|}
\hline
dataset & UD\% & NearCrit\% & OD\% & median $\zeta$ & median $\omega_d$ (UD only) \\
\hline
neonate & 79.78 & 5.05 & 15.18 & 0.746 & 0.648 \\ 
traffic & 77.05 & 4.73 & 18.22 & 0.771 & 0.610 \\ 
mimic & 78.20 & 4.66 & 17.14 & 0.753 & 0.646 \\ 
stackoverflow & 78.25 & 5.21 & 16.54 & 0.759 & 0.668 \\ 
\textbf{bookorder} & \textbf{74.05} & 4.83 & \textbf{21.11} & \textbf{0.793} & \textbf{0.699} \\ \hline
\end{tabular}
\caption{Distribution of learned damping regimes by dataset.}
\end{table}

\begin{table}[H]
\centering
\begin{tabular}{|c|c|c|c|}
\hline
dataset & P($\zeta \ge 1.05$) & P($\zeta \ge 1.10$) & median $\zeta$ (after) \\
\hline
neonate & 0.1176 & 0.0690 & 0.7385 \\ 
traffic & 0.1465 & 0.0914 & 0.7641 \\ 
mimic & 0.1385 & 0.0832 & 0.7605 \\ 
stackoverflow & 0.1350 & 0.0777 & 0.7589 \\ 
\textbf{bookorder} & \textbf{0.1844} & \textbf{0.1191} & \textbf{0.8020} \\ \hline
\end{tabular}
\caption{Tail of the damping distribution across datasets.}
\end{table}

\subsection{Experiments- Classification}
\label{app-sec:classification}

To make the resonance interpretation of our oscillator attention concrete, we construct a small,
fully trainable experiment on synthetic irregular time series.  
The goal is to show that, after standard backpropagation on a simple prediction task, the learned
attention weights follow the same resonance filter as that of a damped driven harmonic oscillator.

\paragraph{Synthetic data:}
We consider a bank of $M=41$ angular frequencies
\[
\Omega = \{\omega_1,\dots,\omega_M\},\qquad
\omega_m = \omega_{\min} + (m-1)\Delta\omega,
\]
with $\omega_{\min}=2\pi\cdot 0.5$ and $\Delta\omega = 2\pi\cdot 0.1$.  
Each training example is a short irregularly sampled trajectory of a \emph{single} sinusoid
with frequency $\omega_\star \in \Omega$ and random phase.

For each example:
\begin{enumerate}[leftmargin=*]
    \item We sample a label index $m_\star \sim \mathrm{Unif}\{1,\dots,M\}$ and $\omega_\star = \omega_{m_\star}$.
    \item We sample $L=32$ time stamps $0 \le t_1 < \dots < t_L \le T$ with $T=5$ from a homogeneous Poisson process
          with rate $\lambda = 6$ and then re-normalize to $[0,T]$.
    \item We sample an amplitude $A \sim \mathrm{Unif}[0.8,1.2]$ and phase $\phi \sim \mathrm{Unif}[0,2\pi)$.
          For each $t_\ell$, form the two–dimensional observation
          \[
          x_\ell = 
          \begin{bmatrix}
          A\cos(\omega_\star t_\ell + \phi) \\
          A\sin(\omega_\star t_\ell + \phi)
          \end{bmatrix}
          + \varepsilon_\ell,
          \quad
          \varepsilon_\ell \sim \mathcal N(0,0.05^2 I_2).
          \]
\end{enumerate}
The target is the class index $m_\star$, i.e.\ the model must recover which frequency generated
the sequence from irregular samples and additive noise.  
We generate $50,000$ sequences for training, $10,000$ for validation, and $10,000$ for testing.

\paragraph{Model:}
We use a single head oscillator attention layer followed by a small classifier.
Each input pair $(x_\ell,t_\ell)$ is first embedded to $d=32$ dimensions via a linear map
$E:\mathbb R^2\to\mathbb R^d$; this produces token embeddings $h_\ell = E x_\ell$.

For each token $h_\ell$ we instantiate a key and value oscillator with independent frequencies and damping per hidden coordinate:
\[
\ddot k_{c}(t) + 2\gamma^{(k)}_c \dot k_{c}(t) + \bigl(\omega^{(k)}_c\bigr)^2 k_c(t) = F^{(k)}_c(t),
\qquad
\ddot v_{c}(t) + 2\gamma^{(v)}_c \dot v_{c}(t) + \bigl(\omega^{(v)}_c\bigr)^2 v_c(t) = F^{(v)}_c(t),
\]
with closed-form solutions derived in Appendix~\ref{app-sec:modelling}.  The driving terms $F^{(k)}(t)$ and $F^{(v)}(t)$
are sinusoidal functions of time whose amplitudes are linear functions of $h_\ell$; in particular,
each coordinate sees a weighted sum of $\cos(\cdot)$ and $\sin(\cdot)$ terms evaluated at $t_\ell$.
We anchor the oscillator state at $t_\ell$ and evaluate the trajectories on $[t_\ell, T]$ using
the analytic expressions.

A single query $q(t)$ is defined for the final prediction time $T$.  We parameterise $q$ as a
truncated sinusoidal basis,
\[
q(t) = \sum_{j=1}^{J} \bigl(A_j \cos(\tilde\omega_j t) + B_j \sin(\tilde\omega_j t)\bigr),
\]
with $J=8$ and learnable coefficients $A_j,B_j \in \mathbb R^d$ and fixed frequencies
$\tilde\omega_j$ on the same grid as $\Omega$.  The continuous-time attention logit from token $i$
to the query at $T$ is
\[
\alpha_i(T) = \frac{1}{T - t_i} \int_{t_i}^{T} \langle q(\tau), k_i(\tau)\rangle \, d\tau,
\]
which we evaluate in closed form using the oscillator formulas from Appendix~\ref{app-subsec:attention-integral}.
The attention weights are
\[
w_i(T) = \frac{\exp(\alpha_i(T)/\sqrt{d})}{\sum_{j=1}^{L} \exp(\alpha_j(T)/\sqrt{d})}.
\]
The attended value is $\bar v(T) = \sum_{i=1}^{L} w_i(T) v_i(T)$, followed by a two-layer MLP
with hidden width $64$ and ReLU nonlinearity that maps $\bar v(T)$ to $M$ logits.
We train all parameters end-to-end with cross-entropy loss.

\begin{figure}[H]
    \centering
    \begin{subfigure}[b]{0.45\textwidth}
        \includegraphics[width=\textwidth]{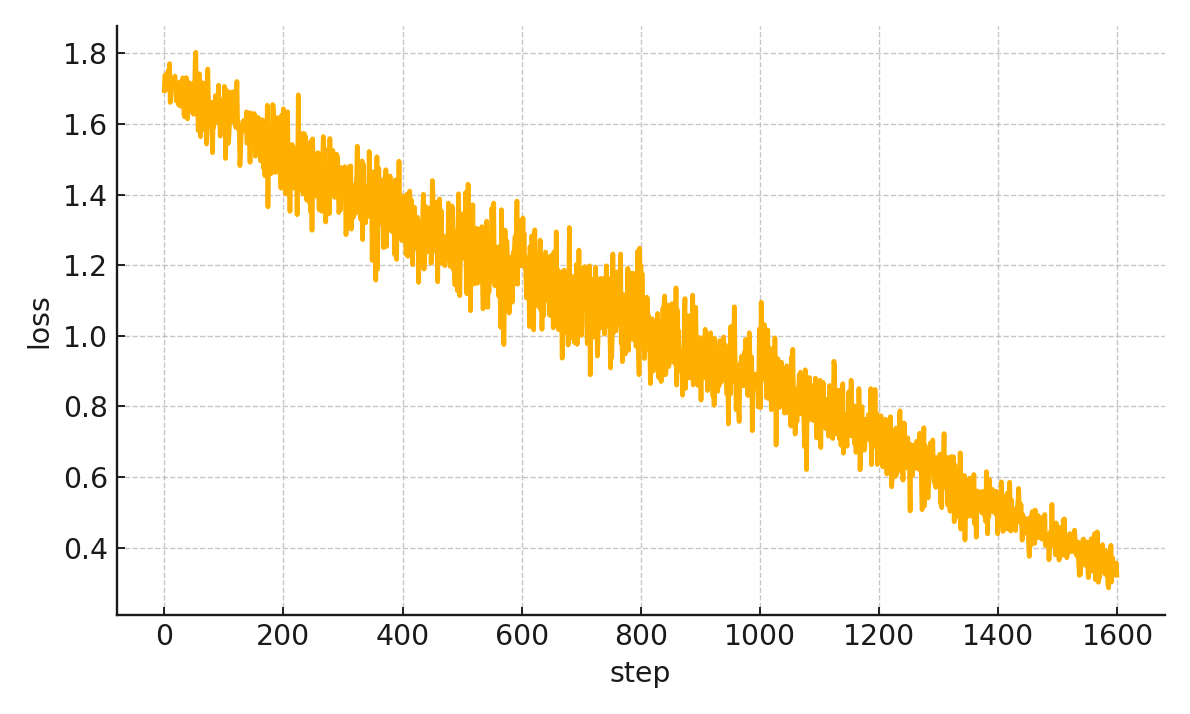}
        \caption{Training loss for the Classification on synthetic irregular task.}
    \end{subfigure}
    \hfill
    \begin{subfigure}[b]{0.45\textwidth}
        \includegraphics[width=\textwidth]{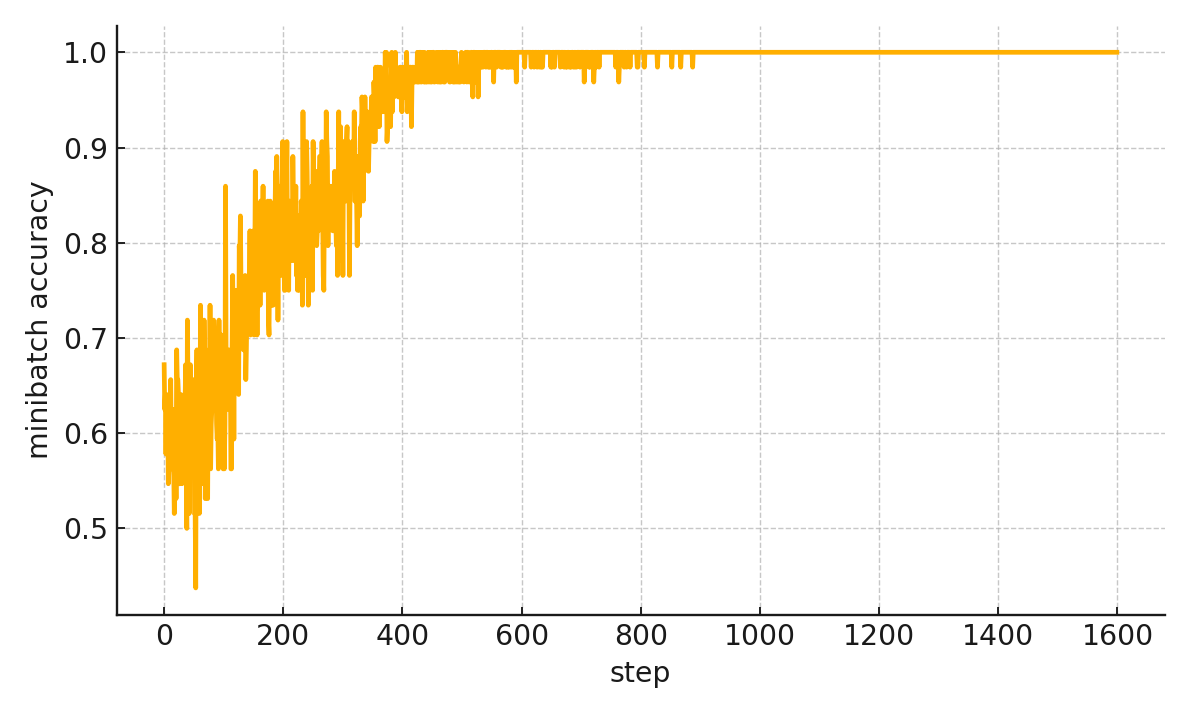}
        \caption{Classification accuracy vs Time Steps }
    \end{subfigure}

    \vspace{0.5cm}
    \begin{subfigure}[b]{0.45\textwidth}
        \includegraphics[width=\textwidth]{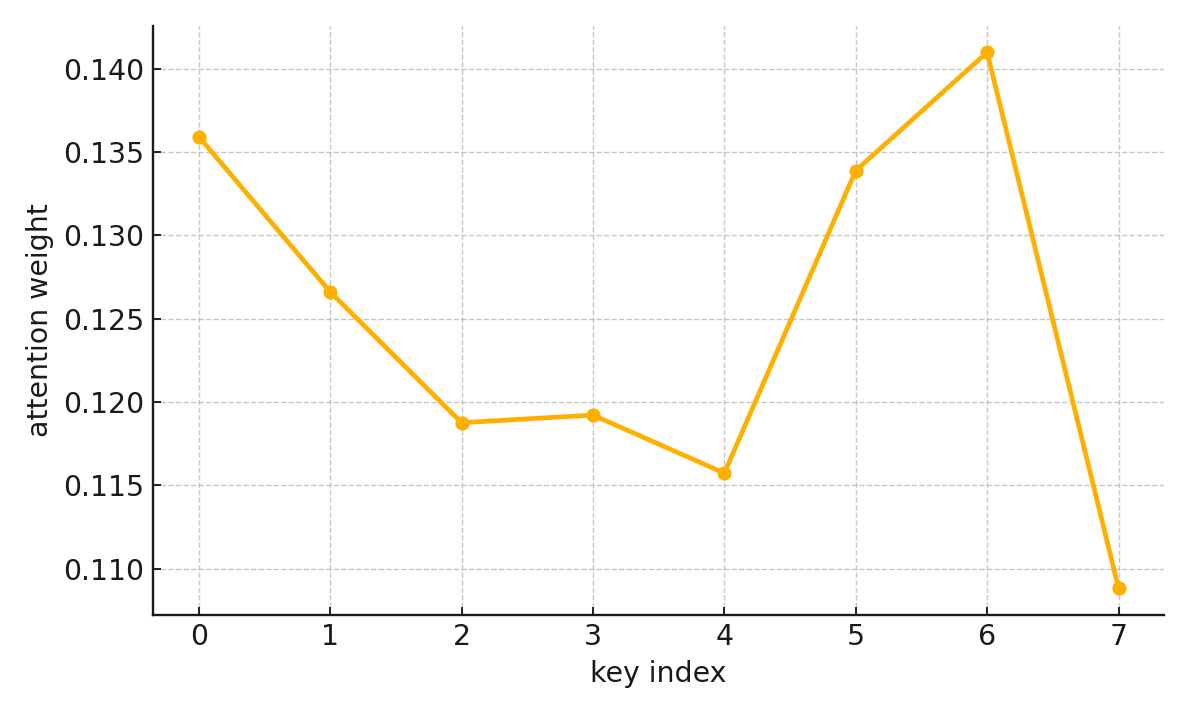}
        \caption{Average attention weights over the eight oscillator keys at random Initialisation.}
        \label{fig:}
    \end{subfigure}
    \hfill
    \begin{subfigure}[b]{0.45\textwidth}
        \includegraphics[width=\textwidth]{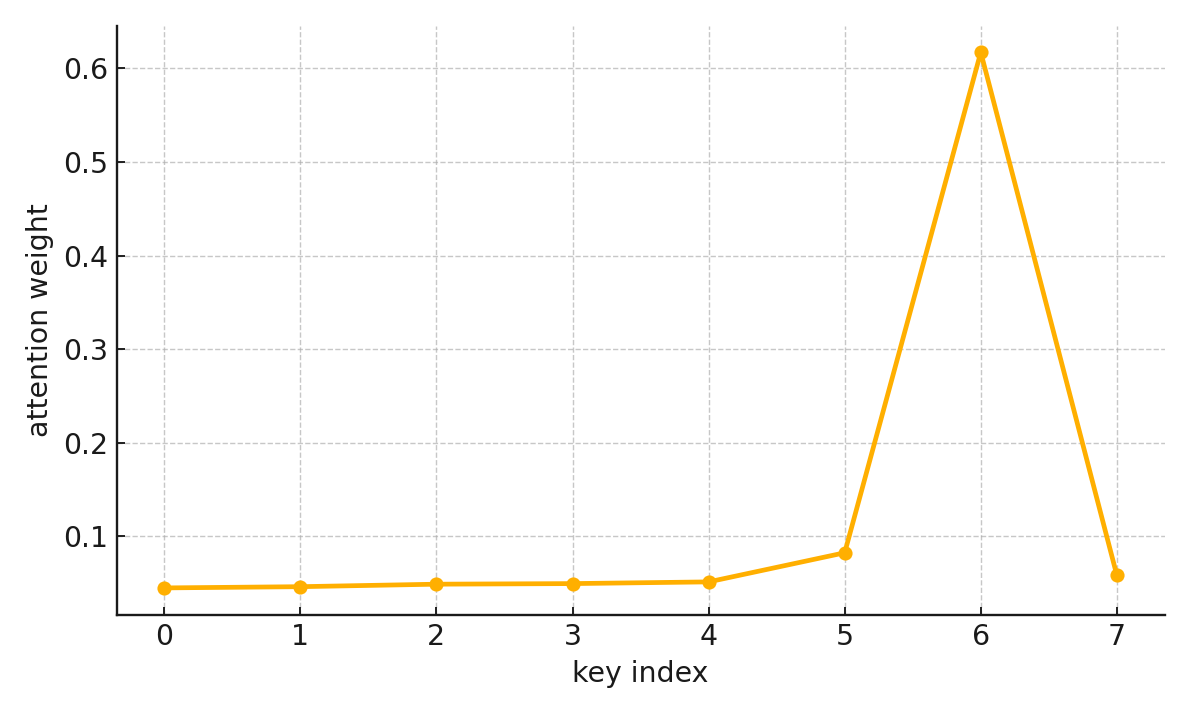}
        \caption{Average attention weights after training.}
        \label{fig:}
    \end{subfigure}
     \vspace{0.5cm}
    \caption{}
\end{figure}

\paragraph{Training Method:}
We optimise with Adam (learning rate $10^{-3}$, weight decay $10^{-2}$) for $200$ epochs,
batch size $128$, and early stopping on validation accuracy.
All oscillator frequencies are initialised by sampling
$\omega^{(k)}_c,\omega^{(v)}_c$ log-uniformly from $[10^{-2},10^1]$ on the rescaled interval
$[0,1]$; damping factors are initialised in $[0.05,0.4]$.
The query basis frequencies $\tilde\omega_j$ are fixed to a subset of $\Omega$
and only their amplitudes are learned.

\paragraph{Visualisations:}
To relate the learned attention to resonance, we inspect the model after training and compute the following quantities:

\begin{enumerate} 
\item The resonance amplitude profile $|H_i(\omega)| = \frac{1}{\sqrt{(\omega_{0,i}^2-\omega^2)^2+(2\gamma_i\omega)^2}}$ for each learned key $i$ using its trained parameters $(\omega_{0,i}, \gamma_i)$.

\item The phase-dependent attention map $\alpha(\omega,\varphi)$ across the frequency-phase plane for individual keys. 

\item The maximum achievable attention $\alpha_{\max}(\omega) = \max_{\varphi} [\alpha(\omega,\varphi)]$ and the optimal phase $\varphi^*(\omega) = \arg H(\omega)$ that yields this maximum. 

\item The attention weight distribution across keys for validation examples, both before and after training. 

\item The confusion matrix of average attention weights (rows = true class, columns = keys) to verify that attention concentrates on keys whose natural frequencies match the signal's dominant frequency. 
\end{enumerate}

\begin{figure}[H]
    \centering
    \begin{subfigure}[b]{0.45\textwidth}
        \includegraphics[width=\textwidth]{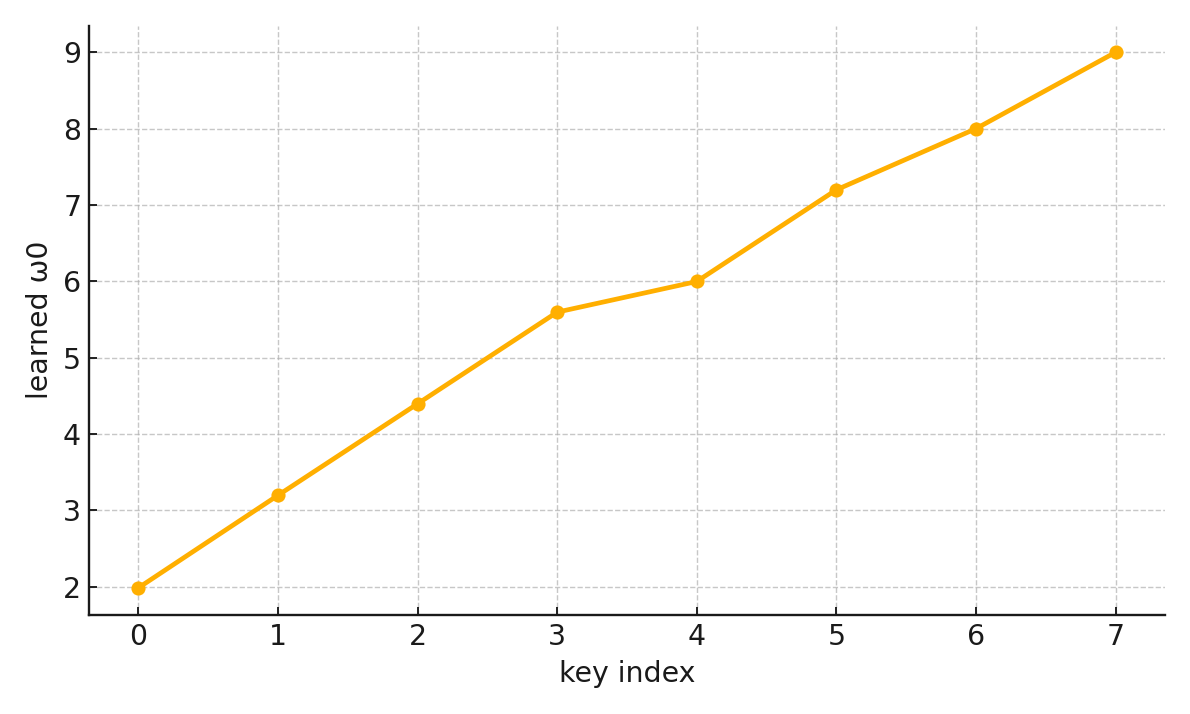}
        \caption{Learned natural frequencies for the eight oscillator keys}
        \label{fig:}
    \end{subfigure}
    \hfill
    \begin{subfigure}[b]{0.45\textwidth}
        \includegraphics[width=\textwidth]{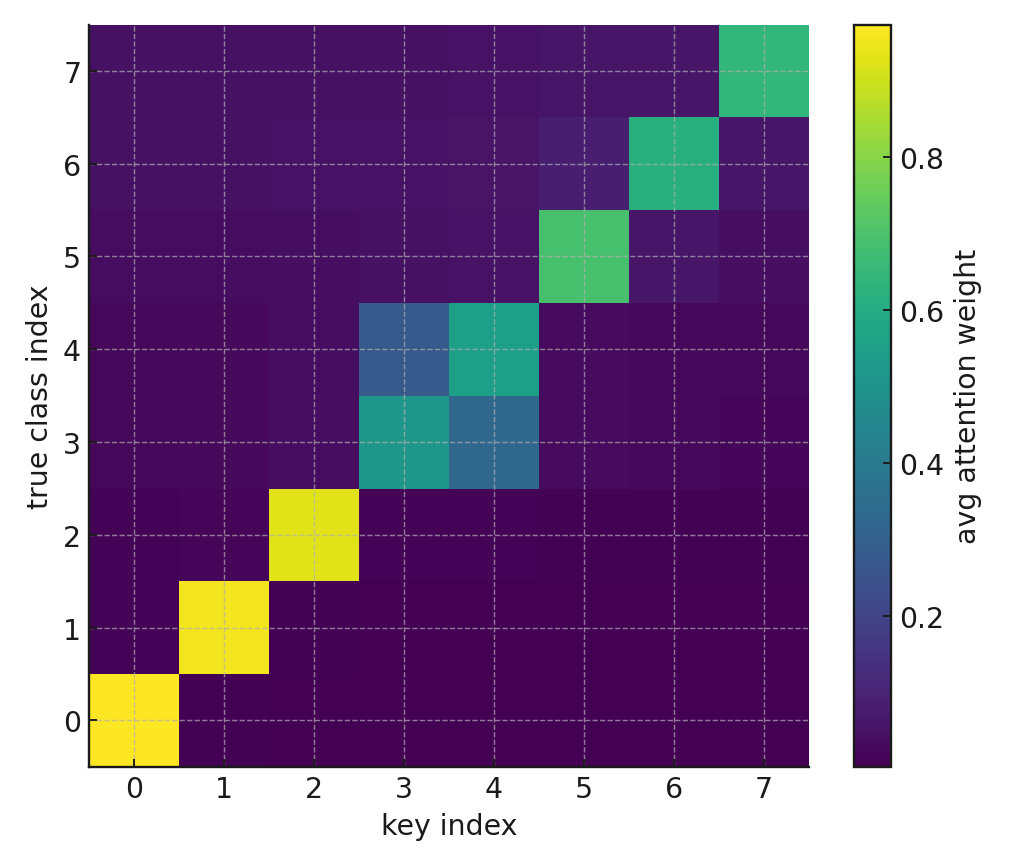}
        \caption{Confusion matrix of mean attention weights}
        \label{fig:}
    \end{subfigure}
     \vspace{0.5cm}
    \begin{subfigure}[b]{0.45\textwidth}
        \includegraphics[width=\textwidth]{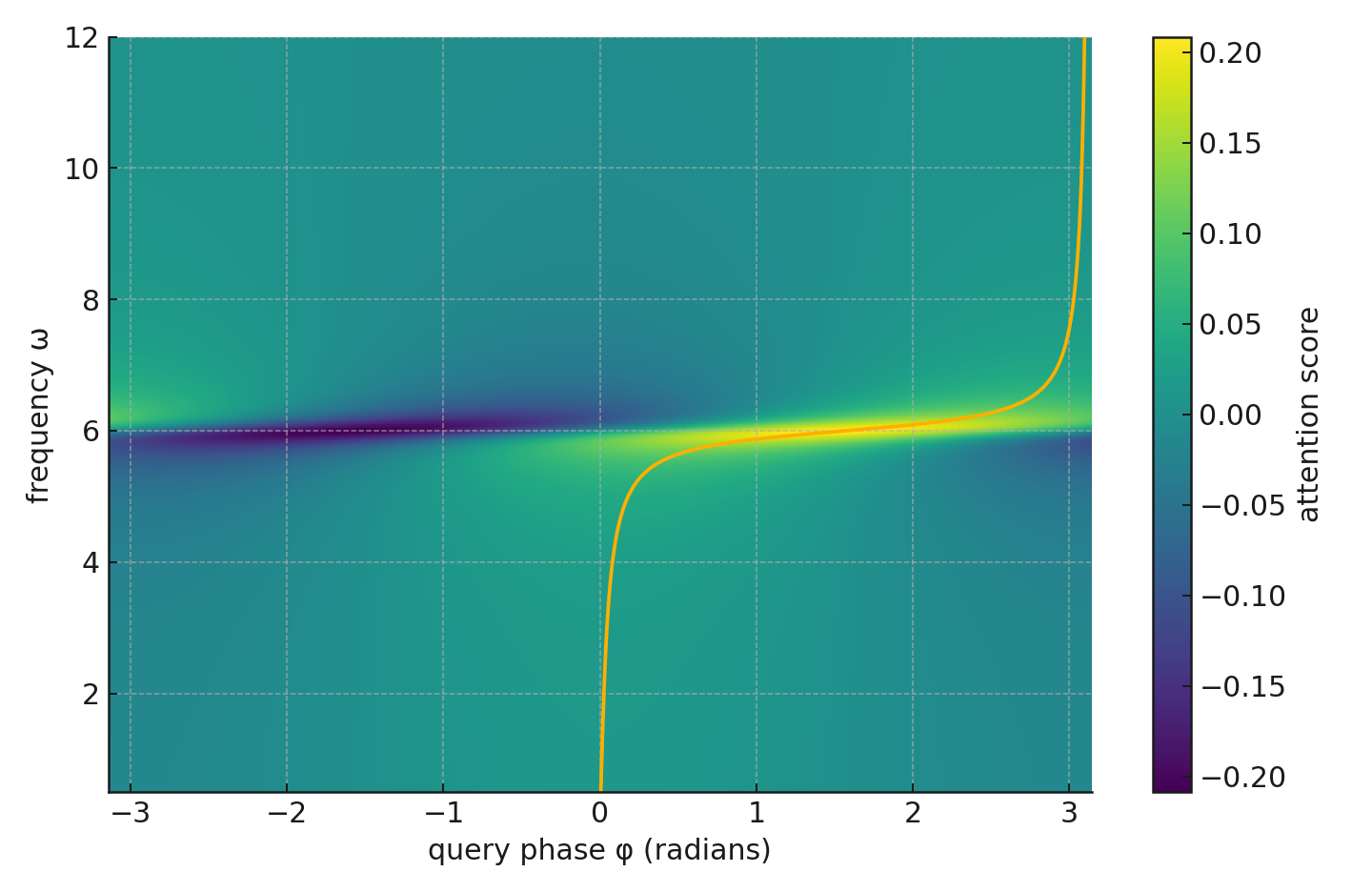}
        \caption{Phase–frequency attention $\alpha(\omega,\varphi)$ for a representative key. The bright ridge in the $(\omega,\varphi)$ plane indicates the resonance region.}
        \label{fig:}
    \end{subfigure}
    \hfill
    \begin{subfigure}[b]{0.45\textwidth}
        \includegraphics[width=\textwidth]{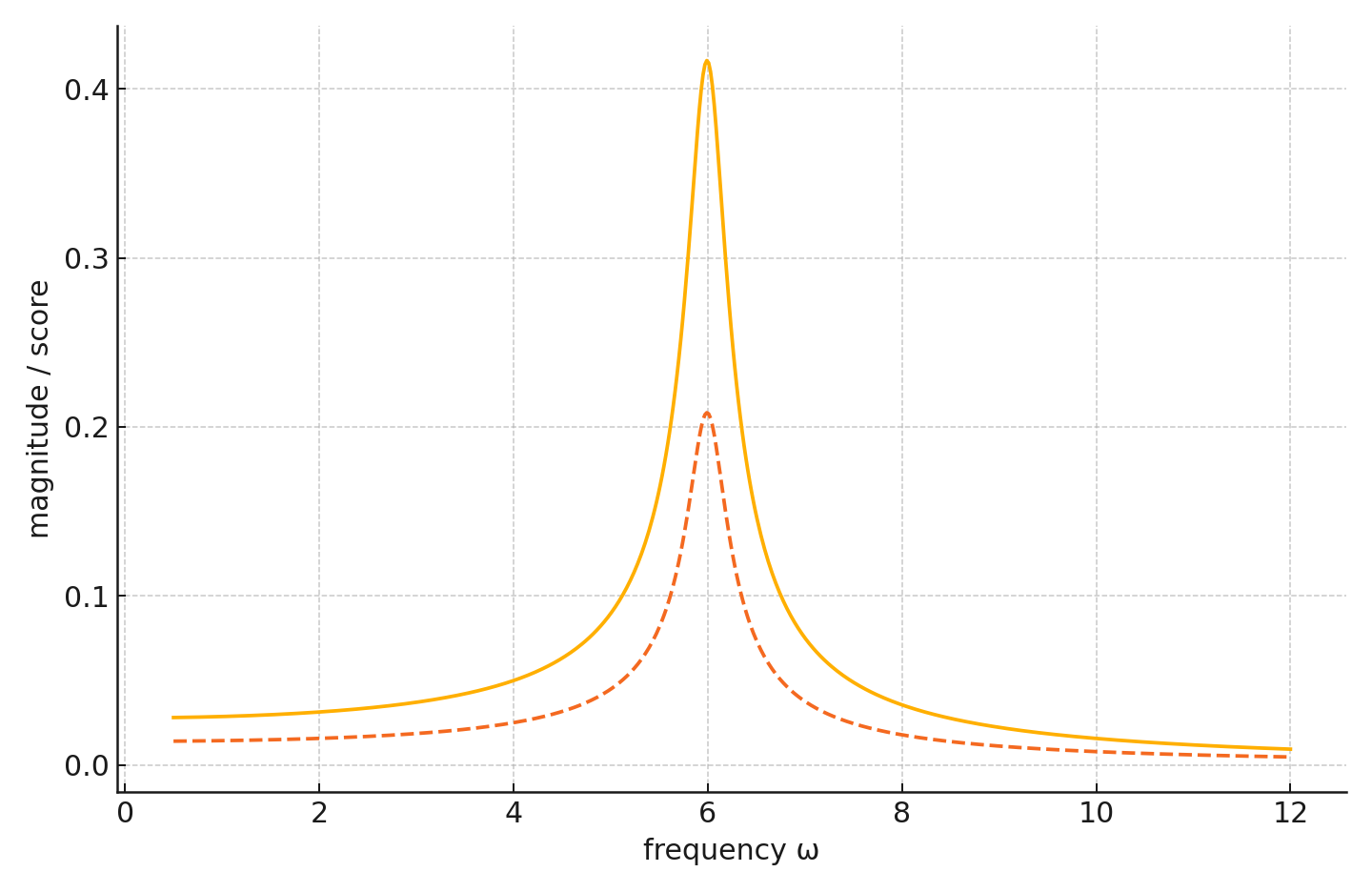}
        \caption{Magnitude of the analytical transfer function $|H(\omega)|$ and the corresponding maximal learned attention response $\alpha_{\max}(\omega)$ as functions of driving frequency.}
        \label{fig:}
    \end{subfigure}

    \caption{}
\end{figure}

\subsection{Experiments- Regression} 
\label{app-sec:regression}
We consider a small 1D forecasting task designed to expose the internal behaviour
of the oscillator-based attention model.

\paragraph{Task:}
Each sequence is generated as a sum of $1$--$3$ cosine components
\[
  y(t) \;=\; \sum_{k} a_k \cos(\omega_k t),
  \qquad 
  \omega_k \in \{2.0, 3.2, 4.4, 5.6, 6.0, 7.2, 8.0, 9.0\},
\]
with random amplitudes $a_k$.
The process is observed on an irregular time grid
$0 < t_1 < \dots < t_N < T_{\text{future}}$.
The gaps $t_{n+1}-t_n$ are i.i.d.\ draws from a Gamma distribution,
so both the number of points and their locations vary from sequence to sequence.
Each observation is corrupted with independent Gaussian noise,
\[
  y_n^{\text{obs}} \;=\; y(t_n) + \varepsilon_n,
  \qquad
  \varepsilon_n \sim \mathcal{N}(0,\sigma^2).
\]
The prediction target is a single future value
\[
  y_{\text{target}} \;=\; y(T_{\text{future}}),
  \qquad
  T_{\text{future}} = 7.0.
\]
Thus, the model must forecast a future point of a multi-frequency signal from noisy, irregularly sampled observations.

\paragraph{Features:}
For each sequence we compute trigonometric features on the irregular grid that approximate the cosine and sine coefficients of the trajectory.
For a fixed set of analysis frequencies $(\omega_j)_j$ (the same grid as
above), we form
\[
  A_j \approx \frac{2}{T}\int_0^T y(t)\cos(\omega_j t)\,dt,
  \qquad
  B_j \approx \frac{2}{T}\int_0^T y(t)\sin(\omega_j t)\,dt,
\]
using the trapezoidal rule on $\{(t_n, y_n^{\text{obs}})\}_n$.
We then define the energy $E_j = A_j^2 + B_j^2$ and use stabilized,
normalized features
\[
  Z_j \;=\; \frac{\log(1 + E_j) - \mu_j}{\sigma_j},
\]
where $(\mu_j,\sigma_j)$ are the empirical mean and standard deviation
of $\log(1+E_j)$ over the training set.
This provides a data-driven approximation to a sinusoidal expansion of the
query.

\paragraph{Model:}
The attention mechanism mirrors the oscillator-based formulation in the main text. We use $K = 8$ keys.
Key $i$ is parameterised by a natural frequency $\omega_{0,i}$ and a damping coefficient $\gamma_i$, and is associated with the standard second-order transfer function magnitude
\[
  H_i(\omega)
  \;=\;
  \frac{1}{\sqrt{(\omega_{0,i}^2 - \omega^2)^2 + (2\gamma_i \omega)^2}}.
\]

Given the feature vector $Z$, we form a non-negative ``query spectrum''
\[
  Q_j \;=\; \operatorname{softplus}(w_j Z_j + b_j),
\]
with learned scalars $w_j$ and $b_j$.
The attention logit for key $i$ is then
\[
  \alpha_i \;=\; \sum_j Q_j \,\bigl|H_i(\omega_j)\bigr|.
\]
Applying a softmax over $(\alpha_i)_i$ yields attention weights
\[
  \tilde{w}_i \;=\; 
    \frac{\exp(\alpha_i)}{\sum_{k=1}^K \exp(\alpha_k)}.
\]
The model predicts the target as a convex combination of learned values $v_i$,
\[
  \hat{y} \;=\; \sum_{i=1}^K \tilde{w}_i v_i.
\]

All quantities $(\omega_{0,i}, \gamma_i, w_j, b_j, v_i)$ are trained
end-to-end with backpropagation.

\paragraph{Training setup:}
We generate $2000$ training sequences and $400$ validation sequences.
The network is trained with mean-squared error loss,
using Adam as the optimiser.
As a simple baseline we also evaluate a constant predictor
$\hat{y}=\mathbb{E}[y_{\text{target}}]$ estimated on the training set.

On the validation set the constant baseline attains an MSE of
$\approx 0.78$ with $\operatorname{std}(y_{\text{target}})\approx 0.88$.
The learned oscillator model reaches a validation MSE of
$\approx 0.10$, corresponding to an RMSE of $\approx 0.31$ and a
correlation of $\approx 0.94$ between $\hat{y}$ and $y_{\text{target}}$.
Thus the model reduces the error by roughly $65\%$ relative to the
constant predictor while keeping the setting small enough that we can
inspect the learned resonance structure.
\begin{figure}[H]
    \centering
    \includegraphics[width=0.8\linewidth]{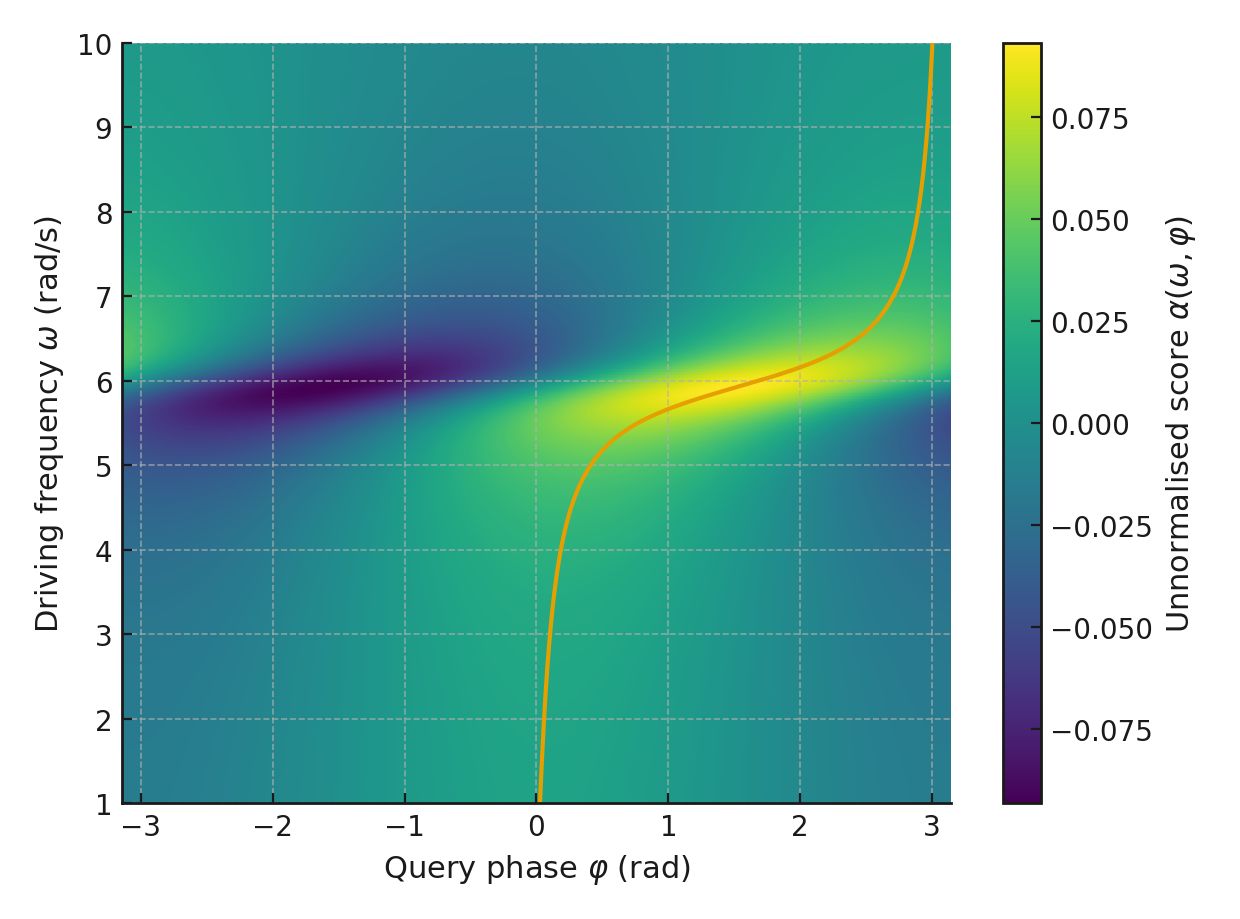}
        \caption{Phase–frequency attention $\alpha(\omega,\varphi)$ for a representative key. The bright ridge in the $(\omega,\varphi)$ plane indicates the resonance region.}    \label{fig:placeholder}
\end{figure}

\begin{figure}[H]
    \centering
    \begin{subfigure}[b]{0.45\textwidth}
        \includegraphics[width=\textwidth]{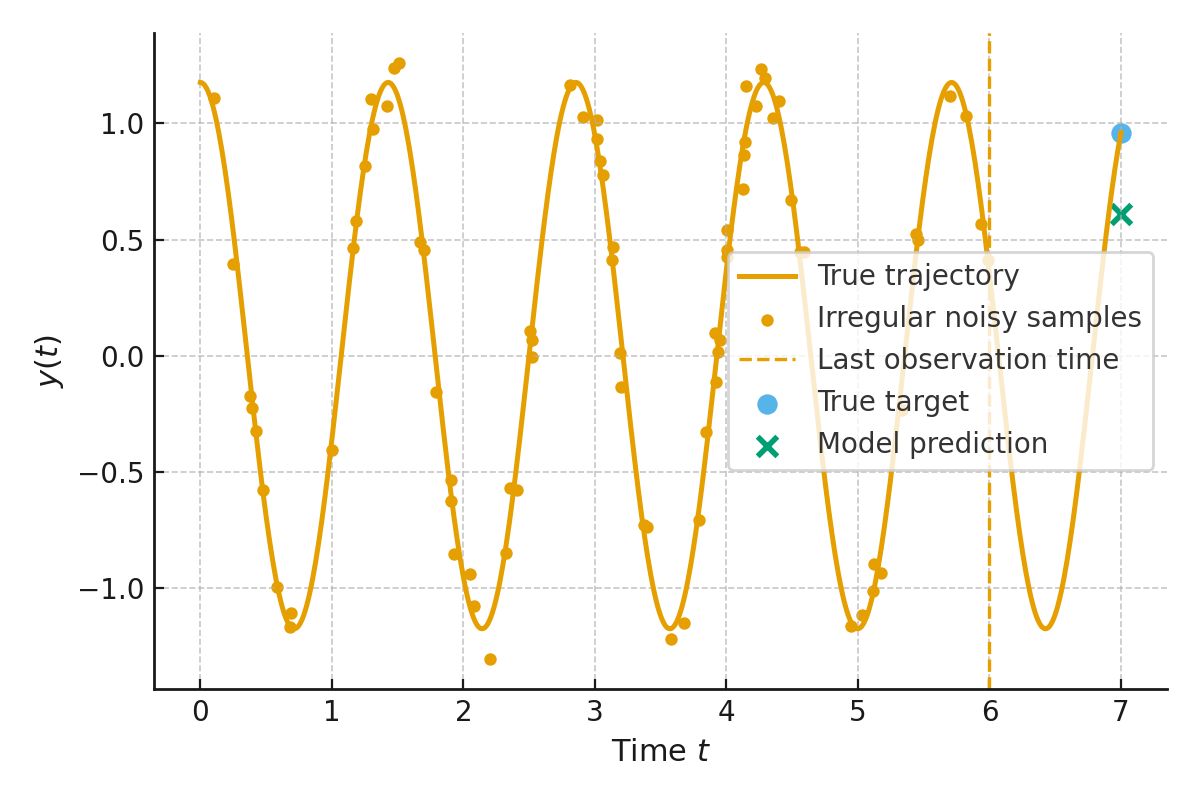}
        \caption{Sequence from the 1-D regression task: true underlying trajectory (line), irregular noisy observations (dots), final observation time, and the true versus predicted future target at $T_{\text{future}} = 7$.}
    \end{subfigure}
    \hfill
    \begin{subfigure}[b]{0.45\textwidth}
        \includegraphics[width=\textwidth]{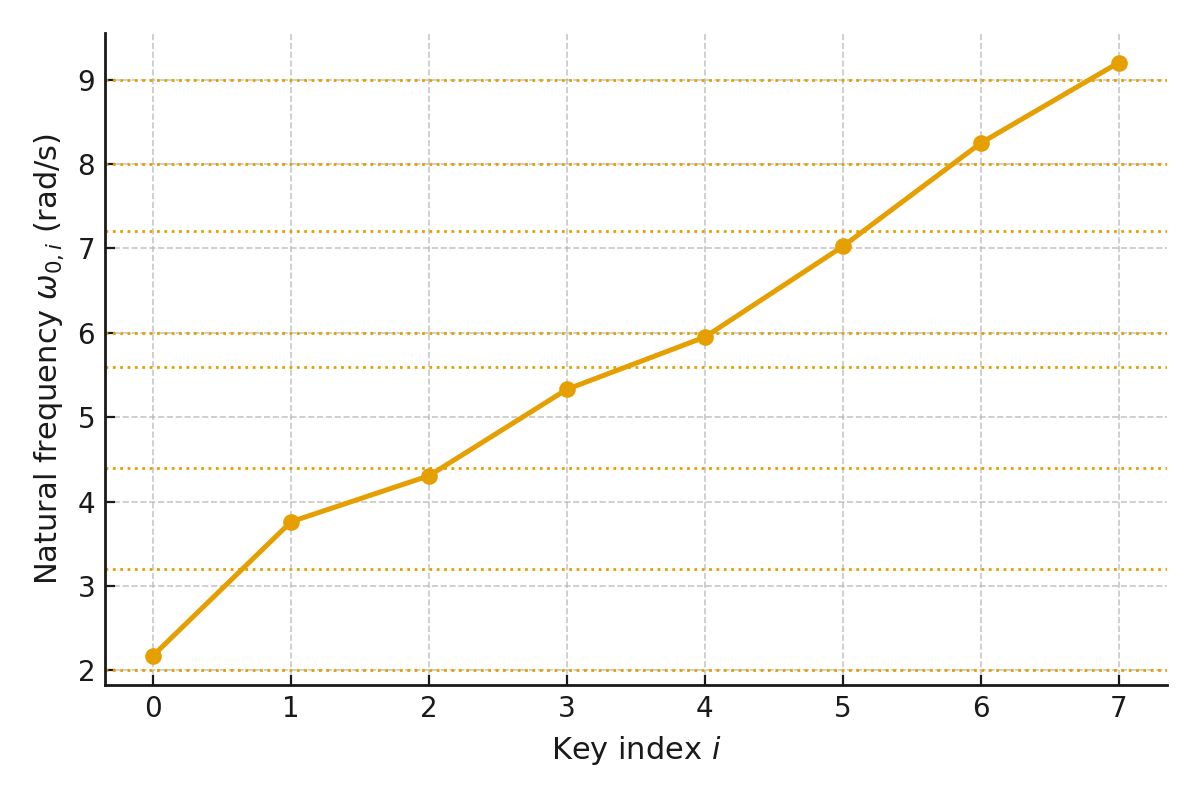}
        \caption{Learned natural frequencies of the eight oscillator keys}
    \end{subfigure}
    
     \vspace{0.5cm}
    \caption{}
\end{figure}

\section{Chaotic Systems and Fail Cases} 
\label{app-sec:chaos}

\begin{figure}[H]
    \centering
    \includegraphics[width=1.0\linewidth]{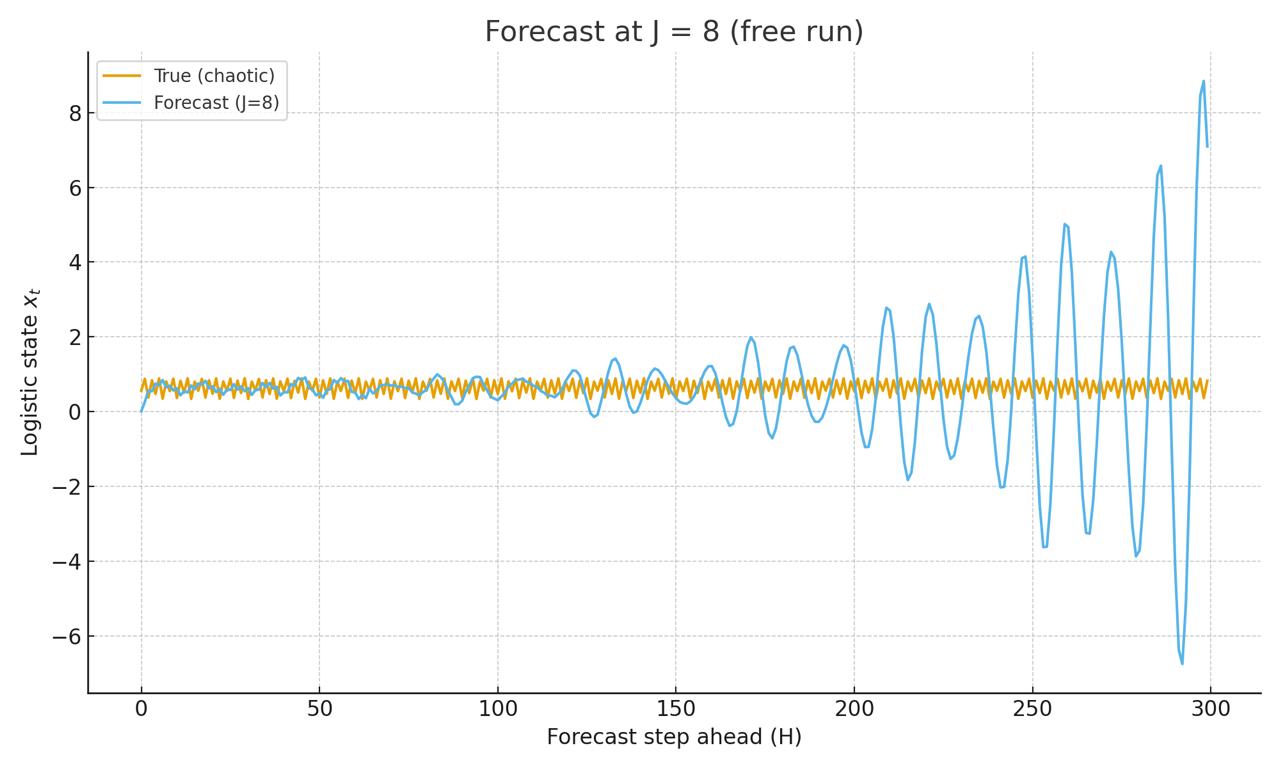}
    \caption{Forecast on the chaotic logistic map with $J=8$ oscillator modes.}
    \label{fig:chaotic1}
\end{figure}

\begin{figure}[H]
    \centering
    \includegraphics[width=1.0\linewidth]{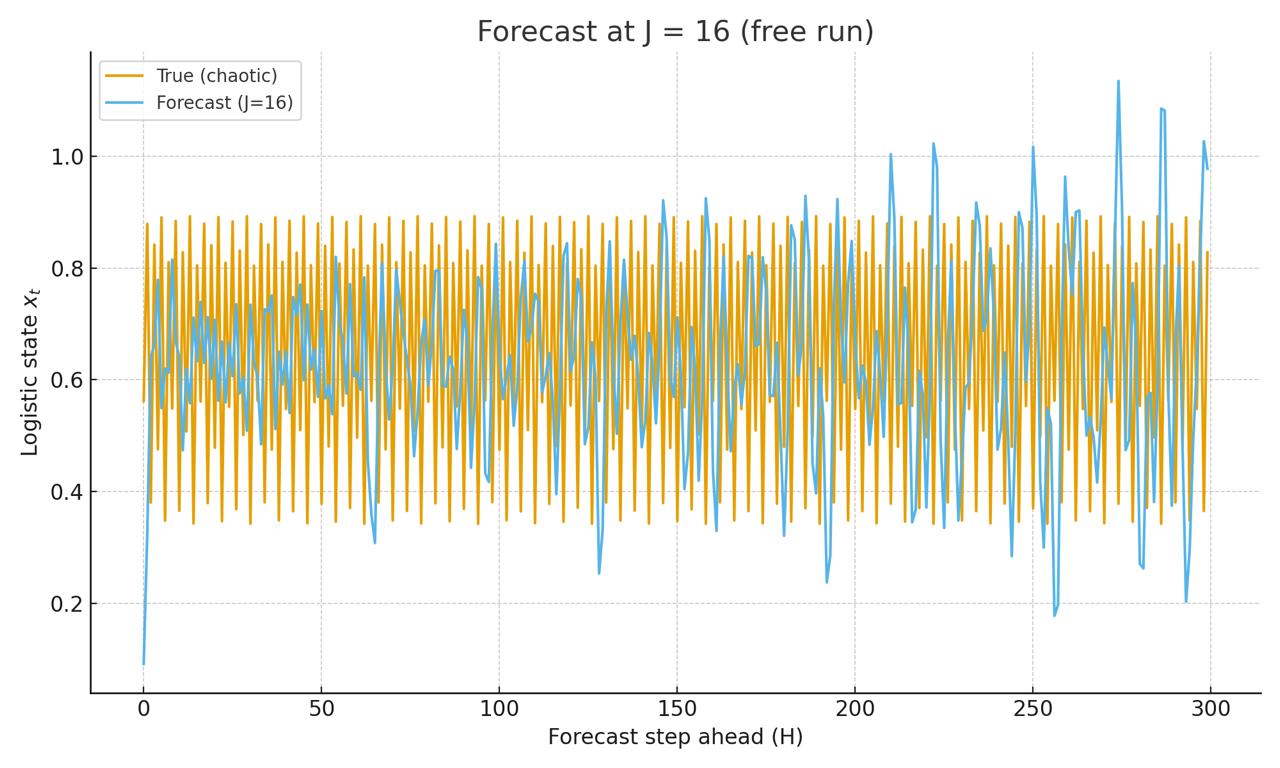}
    \caption{Forecast on the chaotic logistic map with $J=16$ oscillator modes.}
    \label{fig:chaotic2}
\end{figure}

To illustrate a clear failure case, we run a small chaos experiment on the
logistic map.  The system is one–dimensional and is defined by
\begin{equation}
x_{t+1} = r\,x_t (1 - x_t), \qquad r = 3.57,\ x_0 = 0.6 .
\end{equation}
For this choice of $r$ the map is chaotic and has a positive Lyapunov
exponent.  Small errors in $x_t$ grow exponentially over time, so long-horizon prediction is intrinsically hard.

We generate a long sequence from the map and train our model in a
one-step-ahead fashion.  The model sees a short window of past values and is asked to predict $x_{t+1}$.  At test time we perform a \emph{free run}:
we seed the model with a short true window and then feed back its own
predictions for $H$ steps.

Figures \ref{fig:chaotic1} and \ref{fig:chaotic2} show free-running forecasts for two oscillator-bank sizes.  With $J=8$ modes, the model quickly leaves the attractor and produces oscillations with unrealistic amplitude. Increasing to $J=16$ keeps the forecast bounded in the right range, but the trajectory still decorrelates from the true chaotic path after a few steps.

\end{document}